\DeclarePairedDelimiter{\abs}{\lvert}{\rvert}
\DeclareMathAlphabet{\mathcal}{OMS}{cmsy}{m}{n}
\newcommand{\figref}[1]{Fig.~\ref{#1}}
\newcommand{\tabref}[1]{Tab.~\ref{#1}}
\newcommand{\secref}[1]{Sect.~\ref{#1}} 
\newcommand{\eqnref}[1]{Eq.~\ref{#1}} 
\theoremstyle{definition}\newtheorem{definition}{Definition}[section]
\theoremstyle{lemma}
\theoremstyle{theorem}
\theoremstyle{theorem}\newtheorem{proposition}{Proposition}
\theoremstyle{definition}\newtheorem{remark}{Remark}
\title{\LARGE \bf 
A model-based framework for learning transparent swarm behaviors
}
\author{Mario Coppola, 
Jian Guo, 
Eberhard Gill, 
Guido C. H. E. de Croon}
\date{}
\begin{document}
\maketitle


\vspace{2cm}

\begin{abstract}
  This paper proposes a model-based framework to automatically and efficiently design understandable and verifiable behaviors for swarms of robots.
  The framework is based on the automatic extraction of two distinct models: 
  1) a neural network model trained to estimate the relationship between the robots' sensor readings and the global performance of the swarm, and 
  2) a probabilistic state transition model that explicitly models the local state transitions (i.e., transitions in observations from the perspective of a single robot in the swarm) given a policy.
  The models can be trained from a data set of simulated runs featuring random policies.
  The first model is used to automatically extract a set of local states that are expected to maximize the global performance.
  These local states are referred to as \emph{desired} local states.
  The second model is used to optimize a stochastic policy so as to increase the probability that the robots in the swarm observe one of the desired local states.
  Following these steps, the framework proposed in this paper can efficiently lead to effective controllers.
  This is tested on four case studies, featuring aggregation and foraging tasks.
  Importantly, thanks to the models, the framework allows us to understand and inspect a swarm's behavior.
  To this end, we propose verification checks to identify some potential issues that may prevent the swarm from achieving the desired global objective.
  In addition, we explore how the framework can be used in combination with a ``standard'' evolutionary robotics strategy (i.e., where performance is measured via simulation), or with online learning.
\end{abstract}

\newpage

\section{Introduction}
\label{sec:introduction}

  The goal of swarm robotics is to design behaviors that enable several relatively simple robots to collaborate toward a common objective.
  The complexity of this paradigm stems from the fact that each robot can only sense and act according to local information, yet the goals are only observable at the global level.
  The complexity of swarming makes the manual design of successful controllers difficult.
  Machine learning approaches, however, offer an attractive way to do so automatically.
  State of the art techniques in the field primarily adopt evolutionary algorithms that optimize a policy with respect to a centrally measured objective \citep{brambilla2013swarm, francesca2016automatic}.
  Other learning methods, such as reinforcement learning, have received less attention.
  This is because of issues such as low sample efficiency or credit assignment problems, whereby it is not clear how to relate a swarm's global performance with the local states and actions of its individual robots.
  Alternatively, with evolutionary strategies, the global performance of a population of controllers is assessed across multiple generations in order to find a suitable candidate.
  The approach has been used to optimize multiple controller architectures, including neural networks \citep{duarte2016evolution} and behavior trees \citep{Jones2018,winfield2019understandableswarmbehaviors}.
  Neural network controllers can approximate complex functions efficiently.
  However, their ``black box'' architecture can be a disadvantage in this context, particularly for safety-critical control.
  They are difficult to interpret and understand without resorting to experimentation on the swarm.
  Behavior trees and other explicit controllers are more transparent.
  Behavior trees, in particular, feature an attractive mixture of expressiveness and understandability \citep{collendanchise2017bts}.
  They are human-readable and can be understood, inspected, and even fine-tuned.
  In prior research on single robots, this has also given the ability to modify a behavior by hand to better transfer from simulation to the real world \citep{scheper2016behaviortreesforevolution}.
  Nevertheless, understanding a behavior tree within the context of a swarm of robots also requires careful qualitative and experimental analysis \citep{winfield2019understandableswarmbehaviors}.
  This is because the local state transitions experienced by a robot are not modeled.
  The local objectives of the robots, and how these correlate with the global performance, are also not explicitly known.

  This paper proposes a novel model-based framework in order to automatically design the behavior of a swarm of robots.
  The framework makes use of two separate models:
  1) a neural network model that maps the local states of the robots in the swarm to the global performance value, and 
  2) a probabilistic transition model that describes the local transitions as experienced by a single robot in the swarm.
  Both models are trained automatically from a data set of random behaviors.
  The first model is used to extract the local states that contribute to maximizing the swarm's global performance.
  We refer to these local states as \emph{desired} local states.
  These are extracted automatically, reducing a global goal to locally observable constituents.
  Then, the second model is used to determine a policy that maximizes the probability that a robot in the swarm transitions to any one of the desired local states.
  The transition model enables us to examine the behavior of the swarm, given a policy.
  A set of conditions is proposed in order to identify, based on the models, potential issues that could prevent the swarm from achieving a collective goal.
  Overall, the proposed framework has two main advantages: 
  1) the extraction of understandable and verifiable controllers, and  
  2) increased evaluation and optimization efficiency through the models.
  The framework can also be combined with a standard evolutionary algorithm (i.e., one which uses simulations to measure the swarm's performance) for increased efficiency.
  It can also be used online, such that each robot in the swarm locally optimizes its behavior according to a model of its experiences that it generates onboard.
  Implementations of the framework in both of these contexts are also provided in this paper.

  This paper is structured as follows.
  In \secref{sec:ch5_relatedwork}, we place our contribution in the context of recent swarm robotics and machine learning research.
  In \secref{sec:ch5_method}, the model-based framework is explained. 
  In \secref{sec:ch5_results}, its performance is analyzed via four case studies, involving aggregation and foraging.
  It is also shown how the proposed model-based framework can be used in:
  1) a hybrid model-based evolutionary algorithm, and 
  2) a model-based online learning approach.
  \secref{sec:ch5_discussion} discusses the advantages and limitations of the framework, highlighting potential future extensions and research.
  \secref{sec:b5_conclusion} provides concluding remarks.


\section{Related work and research context}
\label{sec:ch5_relatedwork}

  This work explores how to automatically design understandable and verifiable swarm behaviors.
  This is required in order to examine whether a swarm of robots, given a policy, will achieve the intended performance once deployed.
  The recent works by \cite{Jones2018,winfield2019understandableswarmbehaviors} studied how to automatically evolve behavior trees for a similar purpose.
  However, albeit behavior trees are human-readable, they do not model the effects of local actions and their impact.
  In this work, Probabilistic Finite State Machines (PFSMs) are used to dictate the behavior.
  PFSMs are transparent architectures that offer one significant advantage for this context: they probabilistically model the local state transitions due to actions.
  This is a considerable benefit when it comes to understanding and analyzing a behavior.
  For instance, the works by \cite{martinoli2003stickpulling}, \cite{correll2006collective}, and \cite{winfield2010adaptiveforaging} showed how a PFSM, directly extracted from the Finite State Machine used by each robot, can be used to predict a swarm's global behavior over time.
  In the work by \cite{coppola2019provable}, a PFSM model was used to verify whether a swarm will eventually achieve its goal.
  PFSM models have also been used to optimize the policy of a swarm \citep{berman2009optimized, berman2011optimization, coppola2019pagerank}.
  In this paper, following the optimization methodology previously published in \cite{coppola2019pagerank}, the policy will be optimized by using a probabilistic state machine that models the local state transitions of a single robot in a swarm.
  However, the probabilistic model is now automatically extracted from experience data.

  To automatically learn a swarm behavior, it is necessary to have a metric that measures the performance of the whole swarm.
  In state of the art automatic design methods, this metric is typically measured centrally, and directly used to assess the performance of a specific policy.
  However, this means that the learning procedure must either be performed offline \citep{duarte2016evolution}, or be reliant on a centralized sensor (e.g. an overhead camera) in a controlled environment \citep{winfield2019understandableswarmbehaviors}.
  The robots cannot continue learning once deployed without this central sensor, which reduces flexibility to untrained environments and novel situations.
  In this work, we thus take a different approach: a feed-forward neural network is trained to estimate the global performance of the swarm from the local states of its robots.
  This allows us to extract the local objectives of the individual robots, such that, if the robots achieve these local objectives, the global performance will benefit.
  This is a transparent step that results in the explicit representation of the local goals of robots.
  With this, the robot's policy can then be analyzed in relation to its fulfillment of local objectives.
  The model can be trained offline and the desired local states that are extracted from it can then even be used during online learning, which liberates the need to centrally assess the performance of the swarm during operation.
  To the best of our knowledge, this is the first work where a function is trained to explicitly relate microscopic (i.e., local) states with macroscopic (i.e., global) performance, and additionally use this to optimize the behavior of the swarm, also online.

  The challenge of developing a function that synthesizes a global performance metric into measurements that are observable by onboard sensors is not unique to the swarming regime.
  It applies to any situation whereby the objective can only be partially measured by an agent.
  We briefly discuss some related literature on this topic from outside of the swarming domain, but that aim to solve this challenge for Partially Observable Markov Decision Problems (POMDPs).
  Recently, \cite{faust2019autorl} and \cite{chiang2019autorl} proposed AutoRL.
  The proposed solution is to complement the reinforcement learning process with an evolutionary procedure that optimizes the parameters of an explicit reward function.
  For swarming, however, this additional learning layer could be computationally problematic and potentially subject to over-fitting.
  Other solutions to similar problems have been proposed.
  In the works by \cite{florensa2017reverseRL} and \cite{ivanovic2019backwardRL}, a robot learns to reach a final goal by learning backward from start states that are progressively far away.
  This strategy is also not easily suitable to swarming, which is expected to feature a prohibitive set of starting states that scales with the size of the swarm, whereas we aim for a scalable solution.
  In addition, swarm robotics often does not deal with a specific final state, but rather with a general performance metric.
  \cite{andrewng2000irl} proposed Inverse RL (IRL) to learn a reward function from a known optimal policy.
  \cite{sosic2017inverserlswarms} also applied IRL to the swarming domain to extract local controllers.
  However, these works assume a policy to be known, which is most often not the case.
  In a similar vein, demonstrations can be used to represent the actions of an optimal policy. 
  \cite{li2016turing} used demonstrations in a competitive evolutionary setup to learn to replicate the behavior of a swarm.
  However, this practice is not applicable when the optimal policy is not known, which is most often the case.
  Recently, \cite{brown2019extrapolating} ranked sub-optimal demonstrations in order to extract the underlying rules, leading to results that exceed the demonstrations.
  At a conceptual level, we will apply a similar strategy, as we extract local objectives from random swarm behaviors and optimizing the policy accordingly.

\section{Framework description}
\label{sec:ch5_method}

  The framework proposed in this paper is depicted in the diagram in \figref{fig:overview}.
  It is based on three core steps:
  \begin{enumerate*}
    \item Learn the micro-macro relationship (Model 1) and use it to extract a set of desired local states, denoted $\mathcal{S}_{des}$;
    \item Extract the explicit local transition model (Model 2);
    \item Use the above to optimize and verify a policy.
  \end{enumerate*}
  The three steps are detailed in \secref{sec:model1}, \secref{sec:model2}, and \secref{sec:optimization}, respectively.
  Model 1, the ``micro-macro'' model, is a function that relates the local states of the robots in the swarm to the global performance.
  In this work, the function is modeled with a neural network.
  Training this function requires centralized data of the performance of the swarm during sample runs.
  However, once trained, the model can estimate the global performance of the swarm from the local states of the robots.
  Model 2 models the local state transitions experienced by a robot in the swarm.
  It can be learned offline or it can be directly estimated/tuned onboard by the robots during a task, enabling them to adapt their policy accordingly.
  Model 2 is a microscopic probabilistic transition model, which allows us to analyze it.
  To this end, \secref{sec:verification} introduces verification conditions.

  \begin{figure}[t]
    \centering
    \includegraphics[width=\textwidth]{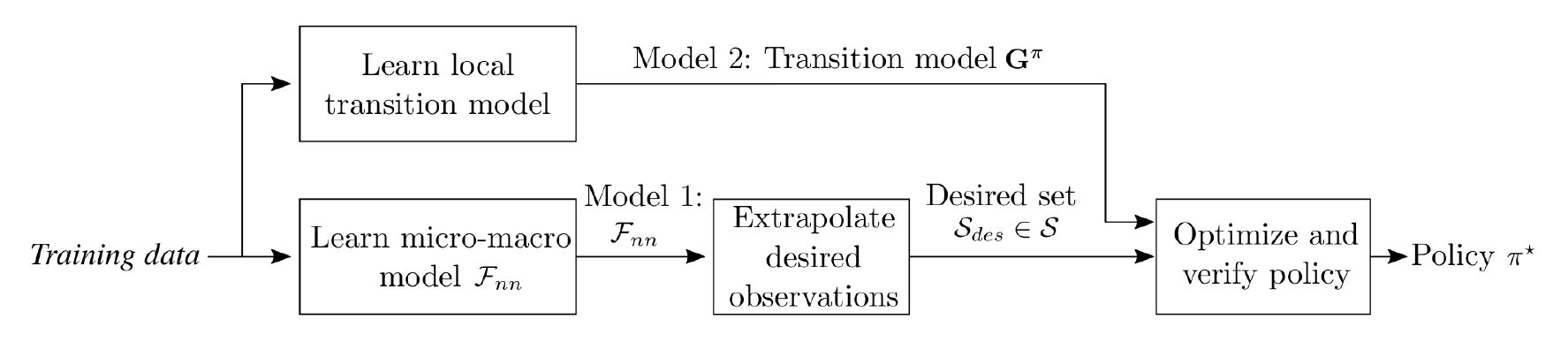}
    \caption{Overview of framework proposed in this paper.}
    \label{fig:overview}
  \end{figure}

\subsection{Model 1: Micro-macro neural network model}
\label{sec:model1}
  For an arbitrary swarming task, let $F_g(t)$ be a global performance metric for the performance of the swarm at time $t\geq0$, which needs to be maximized within a time $t \leq T$.
  Let $\mathcal{S} = \{s_1, s_2, \ldots , s_N\}$ be a discrete set of $N$ local states that a robot in the swarm can observe using its onboard sensors.
  At a given time $t$, a robot in the swarm will be in a local state $s\in\mathcal{S}$.
  The robot cannot measure $F_g(t)$, since this requires knowledge that is not measurable by its onboard sensors. 
  For example, a swarm may be tasked with self-organizing into a pattern while the robots can only sense their nearby neighbors, and not the whole pattern.
  Therefore, instead of guiding the robots to optimize $F_g(t)$, we will instead guide them to be in local states such that $F_g(t)$ is maximized.
  The set of local states that maximize $F_g(t)$ are grouped in the set $\mathcal{S}_{des}\in\mathcal{S}$, which is referred to as the set of desired local states.\footnote{Since we are the first to propose this kind of approach, we make a number of simplifications, such as discrete local states and action spaces and binary desirability of states.
  Generalization beyond such simplifications is discussed in \secref{sec:ch5_discussion}.}

  To automatically extract the set $\mathcal{S}_{des}$ for an arbitrary swarm, we need to model the relationship between the local states of the robots and the global performance.
  This can be a complex nonlinear relationship, and so a neural network is used to automatically learn this micro-macro relationship from training data, limiting the a priori knowledge required.
  The neural network is represented by the function $\mathcal{F}_{nn}$ in \eqnref{eq:fg}.
  \begin{equation}
    \hat{F_g}(t) = \mathcal{F}_{nn}(\mathbf{P_s}(t))
    \label{eq:fg}
  \end{equation}
  In \eqnref{eq:fg}, $\hat{F_g}(t)$ is the estimated $F_g(t)$, and $\mathbf{P_s}(t)$ is a vector of length $N$ indicating the distribution of local states in a swarm at time $t$.
  Based on a training data set, the network $\mathcal{F}_{nn}$ will be trained to estimate $F_g(t)$ for unseen scenarios.
  $\mathcal{F}_{nn}$ has $N$ input neurons and has one output.

  Once the micro-macro model is trained, it can be used to extrapolate the set of desired local states $\mathcal{S}_{des}$ that are expected to maximize $F_g$.
  We use an optimizer to find an input vector $\mathbf{B}$ such that $\hat{F_g}(t)$ is maximized.
  The input vector is a binary vector $\mathbf{B}$ of size $N$.
  This allows us to extract the local states that aid to maximize $\hat{F_g}$, without bias for any single local state that may have been more prevalent according to the data.
  These local states will compose the set $\mathcal{S}_{des}$.
  In our particular implementation for this paper, the optimization was performed using an evolutionary optimizer in order to avoid local maxima.
  Specifically, we used the evolutionary toolbox provided by the Distributed Evolutionary Algorithms in Python (DEAP) package \citep{DEAP_JMLR2012}.

  As an example, consider a swarm of robots that needs to reach consensus.
  $F_g(t)$ measures the highest fraction of robots in the swarm that is in agreement at time $t$.
  The robots, however, can only observe whether they are in agreement with their direct neighbors or not, i.e. $\mathcal{S} = \{s_{agree}, \lnot s_{agree}\}$.
  $\mathcal{F}_{nn}$ would indicate that being in agreement with your direct neighborhood is correlated with a high $\hat{F_g}(t)$.
  Therefore, we can then find that an input $\mathbf{B}=\{1,0\}$ maximizes $\hat{F_g}(t)$, meaning that $\mathcal{S}_{des} = \{s_{agree}\}$.

  \begin{remark}
    In the currently proposed form, $\mathcal{F}_{nn}$ is a feed-forward network and it does not use information from previous time steps for its estimate.
    This means that its estimate is independent of the actions of the robots, their dynamics, past information, and the number of robots in the swarm (albeit a bias may still exist as a result of the training data).
    Although this may create a disadvantage for the case where the performance metric has a temporal aspect, there are also advantages to it for certain contexts.
    Namely, the results of this procedure can be transferred across different environments and across different swarms, provided that the local states (i.e., observations using the onboard sensors) and the performance metric remain unchanged.
    This will be empirically evaluated in \secref{sec:ch5_results}.
    Its further potential is discussed in \secref{sec:ch5_discussion}.
  \end{remark}

\subsection{Model 2: Transition model}
\label{sec:model2}
  Following the procedure in \secref{sec:model1}, a set of desired local states correlated with a high global performance is automatically extracted.
  This reduces the problem to a local variant, wherein the policy of a robot needs to be such that its probability of transitioning to a local state $s\in\mathcal{S}_{des}$ is maximized.
  
  A robot in the swarm can perform actions from an action set $\mathcal{A} = \{a_1, a_2,\ldots, a_M \}$ of size $M$.
  Let $\pi$ be a tabular probabilistic policy of size $N \times M$ that, for a given local state $s\in\mathcal{S}$, indicates the probability of taking an action $a\in\mathcal{A}$.
  The transition model can be used to understand, quantify, and potentially verify the policy and its impact on the swarm.
  As in \citep{coppola2019pagerank}, an optimum policy can be found efficiently using PageRank centrality as a fitness measure.
  The main advantage of this approach is that the policy optimization is performed without simulation.
  The layout of the model is inspired by the random surfer model by \cite{brin1998anatomy}, which can be summarized by the ``Google'' matrix $\mathbf{G_\pi}$:

  \begin{equation}
    \mathbf{G_\pi} = \mathbf{\alpha} \mathbf{H_\pi} + (\mathbf{I}-\mathbf{\alpha}) \mathbf{E}.
    \label{eq:G}
  \end{equation}
  
  $\mathbf{G_\pi}$ is a cumulative model composed of two parts:

  \begin{itemize}  
    \item $\mathbf{H_\pi}$ is a stochastic matrix of size $N \times N$ holding the probabilities of local state transitions that result from the actions taken by an arbitrary robot in the swarm, following policy $\pi$.

    \item $\mathbf{E}$ is a stochastic matrix of size $N \times N$ holding the probabilities of local state transitions that occur when the robot has not taken an action.
    These transitions are thus attributed to the environment (which includes the other robots in the swarm).
  \end{itemize}
  
  The diagonal matrix $\alpha$ balances the probability of local state transitions in $\mathbf{H_\pi}$ and $\mathbf{E}$ for each row.
  In other words, each diagonal entry in $\alpha$, denoted $\alpha(s_i)$ is the ratio of likelihood of occurrence between $\mathbf{H_\pi}$ and $\mathbf{E}$ when in a local state $s_i$.
  Cumulatively, $\mathbf{G_\pi}$ models all transitions that can occur to a robot in the swarm, either as a result of its policy or as a result of its environment.

  To model the expected transition due to a specific action, $\mathbf{H_\pi}$ can be decomposed into a series of sub-models, each showing the impact of one action from the action set $\mathcal{A}$.
  Let $\mathbf{A}$ be a set of $M$ sparse stochastic matrices of size $N \times N$, such that $\mathbf{A} = \{ \mathbf{A_1}, \mathbf{A_2},\ldots,\mathbf{A_M}\}$.
  The elements of a matrix $\mathbf{A}_k$ hold the probability of a local state transition from $s_i\in\mathcal{S}$ to $s_j\in\mathcal{S}$ given an action $a_k \in \mathcal{A}$, for $k = 1,\ldots,M$.
  Specifically:
  \begin{equation}
    \mathbf{A_k}(s_{i},s_{j}) = P(s_{i,j}|a_k),
  \end{equation}
  where $s_{i,j}$ indicates a transition event from $s_i\in\mathcal{S}$ to $s_j\in\mathcal{S}$.
  It follows that $\mathbf{A}$ can be combined into a cumulative model of all transitions as a result of a policy $\pi$, which is the matrix $\mathbf{H}_\pi$.
  An element in $\mathbf{H}_\pi$ is given by:
  \begin{align}
    \mathbf{H_\pi}(s_{i},s_{j})
    &= P((s_{i,j} \cap a_1) \cup (s_{i,j} \cap a_2) \cup \ldots \cup ({s_{i,j} \cap a_M})) = \sum_{k=1}^{M} P(s_{i,j} \cap a_k)_\pi
  \end{align}
  where:
  \begin{equation}
    P(s_{i,j}\cap a_k)_\pi 
    = P(s_{i,j}|a_k) * \pi(s_i,a_k)
    = \mathbf{A_k}(s_{i},s_{j}) * \pi(s_i,a_k)
    \label{eq:newmodel}
  \end{equation}
  In the above, $\pi(s_i,a_k)$ is the probability of taking $a_k$ under local state $s_i$.
  This format separates the transition of an action into separate models, which serves to predict the effect of a different policy.
  The transition models can be numerically estimated from experience data as a maximum likelihood model \citep{sutton1990integrated}.
  Any time a transition happens when taking an action $a_k$, its transition is assigned to the corresponding model $\mathbf{A_k}$.
  Any time a transition happens when no action is taken, its transition is assigned to $\mathbf{E}$.
  The diagonal elements of the diagonal matrix $\mathbf{\alpha}$ are  estimated by the ratio of active events to environment events.

  \begin{remark}
    One limitation of the transition model used in this paper is that it requires the discretization of both the local state space and action space into the discrete sets $\mathcal{S}$ and $\mathcal{A}$.
    This is done to enable the model-based optimization and analysis explained in \secref{sec:optimization} and \secref{sec:verification}.
    However, the use of alternative models could be explored in future work. 
    This is further discussed in \secref{sec:ch5_discussion}.
  \end{remark}

\subsection{Model-based policy optimization}
\label{sec:optimization}
  An optimum policy $\pi^\star$ is found by maximizing the probability that a robot in the swarm transitions to a local state $s\in\mathcal{S}_{des}$.
  This is evaluated by the metric $F_{pr}$,
  \begin{equation} 
    F_{pr}(\pi) = 
    \frac{\sum_{\forall s \in \mathcal{S}_{des}} PR_\pi(s)/\abs{S_{des}}}
    {\sum_{\forall s \in \mathcal{S}} PR_\pi(s)/\abs{S}},
    \label{eq:pagerank}
  \end{equation}
  where $PR_\pi(s)$ indicates the PageRank centrality of local state $s$ given a model $\mathbf{G_\pi}$.
  PageRank centrality is a measure for the probability of transitioning to a given node in a graph that represents the transition model $\mathbf{G_\pi}$.
  Conceptually, \eqnref{eq:pagerank} serves to maximize the average PageRank of the local states in the set $\mathcal{S}_{des}$ over the set of all local states $\mathcal{S}$.
  This optimizes the policy based on the transition model efficiently and without 1) a more computationally expensive procedure based on simulated experience, or 2) trial and error.

\subsection{Model-based analysis and verification}
\label{sec:verification}
  This section explains how the transition model and the desired local states can be used in order to check properties of the swarm's behavior.
  A set of task-agnostic conditions are presented to determine the swarm's ability to achieve its goal, whereby the goal is defined as a state where all robots achieved a desired local state.

  Consider a swarm of $m$ robots.
  Let $\mathcal{P}$ be the set of all global states that the swarm can be in.
  Let $\mathcal{P}_{des} \in \mathcal{P}$ be a set of all global states for which all robots in the swarm have a local state $s\in\mathcal{S}_{des}$, and let $p(t)$ be the global state of the swarm at time $t$.
  It is assumed here that $\mathcal{P}_{des} \neq \emptyset$.
  The swarm is defined as successful as per Definition \ref{def:success}, which is indicative of a good performance by the swarm as obtained through the procedure based on Model 1 from \secref{sec:model1}.
  \begin{definition}[Successful]
    \label{def:success}
    A swarm with a global state $p(t)$ is \textbf{successful} if $p(t)\in\mathcal{P}_{des}$.
  \end{definition}
  We wish to determine whether the swarm will eventually be successful when starting from any arbitrary initial global state $p(t=0)$.
  One way to determine this would be to analyze all possible global states of the swarm and transitions between them.
  However, such an analysis can become intractable as the size of the swarm, and its global state space, increases \citep{dixon2012towards}.
  We thus adopt a different strategy that uses the local transition model (Model 2) to identify two types of potential counter-examples: deadlocks and livelocks.
  In this paper, we define them in Definition \ref{def:deadlock} and \ref{def:livelock}, respectively.
  If no livelocks and deadlocks exist, then the swarm can keep transitioning and potentially achieve all global states, thus including the successful states contained in $\mathcal{P}_{des}$.
  Note, however, that the local checks shown in this paper are not guaranteed, for any arbitrary task, to identify all possible livelocks and deadlocks.
  For a given task, however, more detailed and ad hoc checks can be constructed, as was for instance done for the pattern formation task presented in our previous work \citep{coppola2019provable}.

  \begin{definition}[Deadlock]
  \label{def:deadlock}
    A \textbf{deadlock} is a state $p(t)\not\in\mathcal{P}_{des}$ whereby the swarm does not transition to a different state, i.e., $p(t_2) = p(t_1)\:\forall\:t_2 > t_1$.
  \end{definition}

  \begin{definition}[Livelock]
  \label{def:livelock}
    Consider an arbitrary subset of global states 
    $\mathcal{P}_{lock}\in\mathcal{P}$, where
    $\mathcal{P}_{lock} \cap \mathcal{P}_{des} = \emptyset$ and $1 < \abs{\mathcal{P}_{lock}} < \abs{\mathcal{P}}$.
    At time $t>t_0$ a swarm is in a \textbf{livelock} if it can only transition between states that are within the set $\mathcal{P}_{lock}$, i.e., $p(t>t_0) \in \mathcal{P}_{lock}$.
  \end{definition}

  \subsubsection{Identifying deadlocks from local states}
  \label{sec:verification_deadlocks}
    Deadlocks indicate that the swarm has stopped from changing global state. Potential deadlocks can be identified with the help of the transition model (Model 2).
    Let $\mathcal{S}_{static}$ be a set of local states for which a robot does not transition to any new local state via its policy.
    $\mathcal{S}_{static}$ can be determined from the empty rows of the model $\mathbf{H_\pi}$, indicating that no state transitions are possible.
    \begin{equation}
      \mathcal{S}_{static} = 
      \{ 
      s\in\mathcal{S} : \sum_j^N \mathbf{H_\pi}(s,s_j) = 0 
      \}
      \label{eq:ostatic}
    \end{equation}
    The following three outcomes are possible:
    \begin{itemize*}
      \item If $\mathcal{S}_{static} = \emptyset$, then a deadlock is not present (according to the model), because all robots can move and have a nonzero probability of transition.
      \item If $\mathcal{S}_{static} \cup \mathcal{S}_{des} = \mathcal{S}_{des}$, then the swarm may cease to transition.
      However, this is only for global states $p(t)\in\mathcal{P}_{des}$, which does therefore not constitute a deadlock under Definition \ref{def:deadlock}.
      \item If $\mathcal{S}_{static} \cup \mathcal{S}_{des} \neq \mathcal{S}_{des}$, then a deadlock may be present.
      This is caused by the local states in $\mathcal{S}_{static}$ that are not part of $\mathcal{S}_{des}$.
    \end{itemize*}

    To determine the deadlocks that may occur based on the above, it is necessary to analyze the local states in $\mathcal{S}_{static}$ and whether (and, if desired, how) they can coexist at the global level.
    This is a global level check that however only requires analysis of a subset of global states.
    In addition, the check does not necessarily scale with the size of the swarm.
    For any task where the swarm is not expected to exist in one connected cluster, then local connected deadlocks may be identified for small clusters and it is not necessary to examine with the swarm size $m$.
    During policy optimization, if a deadlock is caused by a new policy, causing $\mathcal{S}_{static} \cup \mathcal{S}_{des} \neq \mathcal{S}_{des}$, then the policy of the robots may be adapted to resolve this.

  \subsubsection{Locally identifying livelocks}
  \label{sec:verification_livelocks}
    In a livelock, the swarm transitions between global states that do not include or lead to success.
    In this section, we propose checks to identify the presence of livelocks using the local transition model.
    Note that the checks proposed in this section are general and thus not guaranteed to detect \emph{all} possible livelocks for an arbitrary task.
    However, more detailed and ad hoc checks can be constructed with this type of model, as was done for the pattern formation task presented in our previous work \citep{coppola2019provable}, and in future work we encourage the development of further checks.

    Let $\mathcal{S}_t$ be the set of the local states of the robots in the swarm at an arbitrary time $t$.
    By the condition in Proposition \ref{prop:1} it can be established whether there is any set of initial local states $\mathcal{S}_{t=0}$ that can potentially prevent the swarm from achieving a global state $p\in\mathcal{P}_{des}$.
    This can stem from the fact that local states exist which endlessly cycle between states $s \not \in S_{des}$, thus preventing the swarm from achieving a successful global state as defined in Definition \ref{def:success}.
    In the following, let $G_\mathcal{S}^{H_\pi}$ be a weighted graph made from the transition model $\mathbf{H_\pi}$, and let $G_\mathcal{S}^{E}$ be a weighted graph made from the transition model $\mathbf{E}$.

    \begin{proposition}
      \label{prop:1}
      If in $G_\mathcal{S}^{H_\pi}$ there is a path from all local states $s\not\in\mathcal{S}_{des}$ to all local states $s\in\mathcal{S}_{des}$, then a swarm will be successful independently of $\mathcal{S}_{t=0}$.
    \end{proposition}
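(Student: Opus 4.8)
The plan is to show that the hypothesis of Proposition~\ref{prop:1} eliminates both of the failure modes defined earlier --- deadlocks (Definition~\ref{def:deadlock}) and livelocks (Definition~\ref{def:livelock}) --- and then to invoke the principle stated in Section~\ref{sec:verification}: in the absence of both, the swarm keeps transitioning and eventually reaches a successful global state in $\mathcal{P}_{des}$. I would organise the argument around these two cases, deriving each from the assumed reachability structure of the graph $G_\mathcal{S}^{H_\pi}$.

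First I would rule out deadlocks. Since $\mathcal{P}_{des}\neq\emptyset$ forces $\mathcal{S}_{des}\neq\emptyset$, and the hypothesis guarantees a path in $G_\mathcal{S}^{H_\pi}$ from every $s\not\in\mathcal{S}_{des}$ to every $s'\in\mathcal{S}_{des}$, each non-desired local state must carry at least one outgoing edge of nonzero weight. By the definition of $\mathcal{S}_{static}$ in Equation~\ref{eq:ostatic}, this gives $\mathcal{S}_{static}\subseteq\mathcal{S}_{des}$, which is precisely the ``no deadlock'' branch of the case analysis in Section~\ref{sec:verification_deadlocks}. Hence no global state $p\not\in\mathcal{P}_{des}$ can remain permanently static.

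Next I would rule out livelocks. A livelock requires a closed subset $\mathcal{P}_{lock}$ of non-desired global states that the swarm can never leave. Lifted to the local model, this would demand a set of non-desired local states from which no robot can ever transition into $\mathcal{S}_{des}$. But the path condition states exactly that every non-desired local state reaches every desired one, so no collection of non-desired local states is absorbing: every cycle among them possesses an exit edge toward $\mathcal{S}_{des}$. The swarm therefore cannot be confined to any such $\mathcal{P}_{lock}$. Combining the two cases, from any global state $p(t)\not\in\mathcal{P}_{des}$ there remains a nonzero-probability sequence of transitions that moves additional robots into desired local states, and iterating this reaches a configuration in $\mathcal{P}_{des}$; because the reasoning never depends on which non-desired state a given robot occupies, it holds for arbitrary $\mathcal{S}_{t=0}$.

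The hard part will be the micro-to-macro lifting in the livelock step. The graph $G_\mathcal{S}^{H_\pi}$ describes the transitions of a single robot in isolation, whereas the actual availability of each transition depends on the environment --- that is, on the neighbouring robots, which are themselves moving --- so the step from ``every robot can locally escape toward $\mathcal{S}_{des}$'' to ``the whole swarm escapes every $\mathcal{P}_{lock}$'' is the delicate one. I expect this gap to be bridged by leaning on the nonzero transition probabilities encoded in $\mathbf{H_\pi}$ over an unbounded time horizon, together with the explicitly stated caveat that these checks furnish sufficient, rather than exhaustive, conditions for success.
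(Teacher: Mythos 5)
Your argument is essentially correct at the same level of rigor as the paper's, but it is organised differently. The paper's proof is a single direct step: the path condition in $G_\mathcal{S}^{H_\pi}$ gives each robot, under $\pi$, a nonzero probability $\rho>0$ of transitioning to any and all states in $\mathcal{S}_{des}$; applying this to every robot, a global state in $\mathcal{P}_{des}$ is reachable regardless of $\mathcal{S}_{t=0}$. You instead route the argument through the deadlock/livelock taxonomy of \secref{sec:verification}: first deriving $\mathcal{S}_{static}\subseteq\mathcal{S}_{des}$ from \eqnref{eq:ostatic} to exclude deadlocks in the sense of Definition~\ref{def:deadlock}, then arguing that no set of non-desired local states is absorbing to exclude livelocks in the sense of Definition~\ref{def:livelock}, and finally invoking the section's general principle that absent both the swarm can reach $\mathcal{P}_{des}$. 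Your decomposition buys an explicit link between Proposition~\ref{prop:1} and the $\mathcal{S}_{static}$ machinery of \secref{sec:verification_deadlocks} (the hypothesis indeed forces $\mathcal{S}_{static}\cup\mathcal{S}_{des}=\mathcal{S}_{des}$), which the paper leaves implicit; the paper's version buys brevity and avoids lifting Definitions~\ref{def:deadlock} and \ref{def:livelock}, which are global-state notions, into a local-state argument. The micro-to-macro gap you flag at the end is real and is shared by both proofs: the paper papers over it with ``applying this to all robots in the swarm'' and the qualifier ``according to the model,'' i.e., the conclusion is relative to the single-robot transition model and does not account for the coupling between robots. Your explicit acknowledgment of that gap is, if anything, more candid than the published proof; neither version closes it.
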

    \begin{proof}
      The model $G_\mathcal{S}^{H_\pi}$ only considers the local state transitions that the robots make themselves.
      Consider a robot $i$ in the swarm with an arbitrary local state $s_i\in\mathcal{S}$.
      If the condition holds, then, by following the policy $\pi$, the robot has a probability $\rho > 0$ of transitioning to any and all local states in the set $\mathcal{S}_{des}$.
      Applying this to all robots in the swarm, it follows that all robots will have a nonzero probability of eventually transitioning to a local state $s\in\mathcal{S}_{des}$, according to the model.
      Therefore, a desired global state $p\in\mathcal{P}_{des}$ can be achieved independently of $\mathcal{S}_{t=0}$.
      As robots can transition to all local states in $\mathcal{S}_{des}$, they are not bounded to any specific desired local state.
    \end{proof}

    If the condition in Proposition \ref{prop:1} is true, then, if an initial global state $p(0)$ exists for which the swarm cannot be successful, then this is not due to any local state in $\mathcal{S}_{t=0}$.
    A case where the condition in Proposition \ref{prop:1} fails is if static local states exist which are not desired, i.e., $\mathcal{S}_{static} \cup \mathcal{S}_{des} \neq \mathcal{S}_{des}$, where $\mathcal{S}_{static}$ is defined as per \eqnref{eq:ostatic}.
    In this situation, a robot with a local state $s\in\mathcal{S}_{static}-\mathcal{S}_{des}$ may not transition to a local state $s\in\mathcal{S}_{des}$.
    We can then check whether the environment is such that the robot could transition to non-static local states, and subsequently transition to a desired local state.
    This is addressed by Proposition \ref{prop:2}.
    Note that in the following we assume that all diagonal entries of $\alpha$ are between 0 and 1, i.e., $0.0 < \alpha(s) < 1.0\: \forall \: s\in\mathcal{S}$.
    
    \begin{proposition}
    \label{prop:2}
    If the following two conditions apply:
    \begin{enumerate*}
      \item
      In $G_\mathcal{S}^E$, all local states
      $s\in\mathcal{S}_{static}-\mathcal{S}_{des}$
      have a direct transition to any local state 
      $s\not\in\mathcal{S}_{static}$.
      \item 
      In $G_\mathcal{S}^{H_\pi}$,
      there is a path 
      from all local states $s\not\in\mathcal{S}_{static}$ 
      to all local states $s\in\mathcal{S}$.
    \end{enumerate*}
    then a global state $\mathcal{P}_{des}$ can be achieved independently from $\mathcal{S}_{t=0}$.
    \end{proposition}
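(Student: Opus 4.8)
The plan is to mirror the structure of the proof of Proposition \ref{prop:1}, but to split the journey of each robot into two phases: an environment-driven escape from static states, followed by a policy-driven walk to the desired states. As before, the argument is per-robot. I would show that each robot, regardless of its local state, has a strictly positive probability of eventually reaching some local state in $\mathcal{S}_{des}$, and then conclude that a global state $p\in\mathcal{P}_{des}$ is reachable with nonzero probability, independently from $\mathcal{S}_{t=0}$.

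First I would fix a robot $i$ and distinguish two cases for its local state $s_i$. If $s_i\not\in\mathcal{S}_{static}$, then by the second condition the graph $G_\mathcal{S}^{H_\pi}$ contains a path from $s_i$ to every local state, in particular to every $s\in\mathcal{S}_{des}$; since each edge of $G_\mathcal{S}^{H_\pi}$ carries positive probability under $\pi$, the robot reaches a desired state with probability $\rho>0$, exactly as in Proposition \ref{prop:1}. The new case is $s_i\in\mathcal{S}_{static}-\mathcal{S}_{des}$, where the row of $\mathbf{H_\pi}$ at $s_i$ is zero (by \eqnref{eq:ostatic}) and the robot cannot move through its own policy. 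Here I would invoke the first condition: in $G_\mathcal{S}^E$ there is a direct edge from $s_i$ to some non-static state $s'\not\in\mathcal{S}_{static}$, i.e. $\mathbf{E}(s_i,s')>0$. The key point is that this environment transition is actually realized with positive probability in the cumulative model $\mathbf{G_\pi}$: because the row of $\mathbf{H_\pi}$ at $s_i$ vanishes, \eqnref{eq:G} gives $\mathbf{G_\pi}(s_i,\cdot)=(1-\alpha(s_i))\,\mathbf{E}(s_i,\cdot)$, and the standing assumption $0<\alpha(s_i)<1$ ensures this weight is strictly positive. Once the robot has escaped to $s'$, the previous case applies and $G_\mathcal{S}^{H_\pi}$ carries it to any desired state.

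Chaining the two phases, every robot has a strictly positive probability of first escaping any static-but-undesired state via the environment and then walking to a state in $\mathcal{S}_{des}$ via its policy. Applying this to all robots, the swarm has a nonzero probability of reaching a global configuration in which every robot occupies a desired local state, i.e. a state $p\in\mathcal{P}_{des}$, and this conclusion does not depend on the initial local states $\mathcal{S}_{t=0}$.

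The main obstacle I anticipate is justifying the hand-off between the two graphs at the global level rather than merely for an isolated robot. The first condition only guarantees a single-step environment edge out of each static state, and the transitions in $\mathbf{E}$ encode the effect of the rest of the swarm; one must argue that the required environment transition remains available even in the worst global configuration (for instance, when many robots are simultaneously static and the environment is effectively frozen). Within the mean-field style of the model, $\mathbf{E}$ is treated as a fixed stochastic matrix independent of the global state, so the per-robot positive-probability argument suffices to establish reachability; I would make this modeling assumption explicit, since it is precisely what lets the local two-phase argument lift to the global claim. A secondary point to pin down is the intended reading of ``any'' in the first condition as ``at least one'' non-static target, which is all the escape argument requires.
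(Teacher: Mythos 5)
Your proof takes essentially the same route as the paper's: an environment-driven escape from static undesired states via condition 1, followed by a policy-driven path to $\mathcal{S}_{des}$ via condition 2, chained per robot and lifted to the global level. You additionally make explicit two points the paper leaves implicit (the role of $0<\alpha(s)<1$ in realizing the environment edge within $\mathbf{G_\pi}$, and the fixed-$\mathbf{E}$ modeling assumption needed for the global lift), which is a strengthening rather than a departure.
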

    \begin{proof}
      Consider a robot with a local state $s$ where $s\in\mathcal{S}_{static}-\mathcal{S}_{des}$.
      If the first condition is fulfilled, 
      then the robot has a probability $\rho_1$ of transitioning to a local state $s\not\in\mathcal{S}_{static}$, where $\rho_1>0$.
      If now the robot has a local state $s\in\mathcal{S}_{des}$, then we are done.
      If, instead, the robot has a local state $s\not\in\mathcal{S}_{static}\cup\mathcal{S}_{des}$, then, if $G_\mathcal{S}^{H_\pi}$ fulfills the second condition, the robot has a probability $\rho_2 > 0$ of transitioning to any local state.
      This includes any local state in the set ${S}_{des}$.
      If the robot transitions back to a local state $s\in\mathcal{S}_{static}$, then the process is reinitiated.
    \end{proof}

    \begin{remark}
      The policy optimization procedure can alter the policy $\pi$ in such a way as to alter the outcomes of the conditions.
      During optimization, this can be prevented by setting a constraint 
      $\pi(s,a)>\varepsilon>0\;\forall\;s\in\mathcal{S}, a\in\mathcal{A}$, where $\varepsilon$ is a minimum bound on the probability.
    \end{remark}

\section{Performance analysis}
\label{sec:ch5_results}

In this section, the framework is evaluated through four case studies, introduced in \secref{sec:studycases}, featuring aggregation and foraging scenarios.
The case studies were designed to test different aspects of the framework, in its current implementation, in tasks of increasing and/or different complexity, with a global performance metric that is not measurable by the individual robots.
The framework will first be analyzed in a standalone fashion.
For this case, training data will be generated from the execution of random policies, which will be used to train the models and optimize the policy.
\secref{sec:simulation} details the simulation environment and the data gathering methodology.
The results are evaluated in \secref{sec:results_models} and \secref{sec:results_evaluation}.
The swarms are analyzed, using the models, in \secref{sec:results_verification}.
Following the standalone analysis, \secref{sec:evo} and \secref{sec:online}, explore its use for hybrid model-based evolutionary learning and model-based online learning, respectively.

\subsection{Description of case studies}
\label{sec:studycases}

\subsubsection*{Case study A: Aggregation task with non-directional neighbor sensors}
  In this task, a swarm of $n$ freely moving robots in an arena should aggregate as much as possible within a maximum time $T$.
  The robots behave like accelerated particles and use repulsion to avoid collisions with nearby neighbors. The robots also reflect off walls.

  \begin{itemize}
    \item \emph{Goal:}
      The global fitness function to maximize is
      \begin{equation}
        F_g^A(t) = \frac{n}{c(t)},
        \label{eq:fa}
      \end{equation}
      where $c(t)$ is the number of connected clusters at time $0\leq t \leq T$, and $n$ is the total number of robots in the swarm. 
      A connected cluster is defined as a group of robots that forms one group with a connected graph topology, whereby a single robot by itself also counts as one cluster.
      Note that $F_g^A(t)$ is subject to $1 \leq F_g^A(t) \leq n$.
      This creates a lower bound for a poorly performing swarm and a higher bound if a larger swarm aggregates, which is representative of a greater difficulty.
      A top view of the task is shown in \figref{fig:picA_full}.

    \item \emph{Onboard sensors:}
      Each robot can sense how many neighbors are within a sensing range $r_{\max}$ from itself, for a maximum of $m_{\max}$ neighbors.
      The number of neighbors is denoted $m$, where $m\in\mathbb{Z^+}$ and $0\leq m \leq m_{\max}$.
      Therefore, $\mathcal{S} = \{0,1,\ldots,m_{\max}\}$.
      This set up is representative of an antenna that the robots use to sense each other, with a maximum range $r_{\max}$ and a maximum number of possible connection nodes $m_{\max}$.
      The sensory setup is depicted in \figref{fig:picA}.
      
    \item \emph{Actions:}
      Each robot can choose whether it should a) move randomly, or b) stay still.
      It makes this choice based on policy $\pi$ with frequency $f_c$, or if its current local state changes.
      $\pi$ is a vector of $(m_{\max}+1) \times 1$, where each entry indicates the probability of choosing ``move'', given $m\in\mathbb{Z^+}$ neighbors, where $0\leq m \leq m_{\max}$.
      When ``move'' is selected, the robot moves in a randomly chosen direction.
 \end{itemize}
 
  In the results in this paper, the following parameters were used: 
  $f_c = 0.5~Hz$,
  $m_{\max} = 7$,
  $T = 200~s$.

\subsubsection*{Case study B1: Aggregation task with directional neighbor sensors}
  In this task, a group of robots must form an aggregate as in case study A, yet the specifics of the robots are significantly different, thereby increasing the difficulty of the task.
  Each robot moves as a directed particle within its own frame of reference. According to the action space, it is always commanded to have a given forward speed, and can only dictate its turning rate.
  The robots avoid collisions using repulsion, which is added to the commanded velocity.

  \begin{itemize}
    \item \emph{Goal:} 
      The fitness function is the same as given by \eqnref{eq:fa} for case study A.
      A top view of this task is shown in \figref{fig:picB_full}.

    \item \emph{Onboard sensors:} 
      Each robot can sense the presence of neighbors within sectors, up to a range $r_{\max}$ away.
      This is representative of, for instance, infrared sensors.
      The local state space is $\mathcal{S} = \{ s_1,\ldots,s_n \}$ where a state $s_i$ describes the sectors in which neighbors are sensed.
      It follows that $\abs{\mathcal{S}}=2^q$, where $q$ is the number of sectors.
      The sensory setup is depicted in \figref{fig:picB} for the case where $q=4$, which we used in this work.

    \item \emph{Actions:}
      At all times, a robot moves forward with a forward velocity $v_{cmd}$ along its local body frame.
      It can control its turning rate $\dot{\psi}$ from the action set $\mathcal{A}$, where the following was defined:
      \begin{equation}
        \mathcal{A} = \{ -1.0, -0.7, -0.3, -0.1, 0.1, 0.3, 0.7, 1.0\}. \nonumber
      \end{equation}
      The policy $\pi$ is a stochastic matrix of size
      $\abs{\mathcal{S}}\times\abs{\mathcal{A}}$.
      A robot selects a new action with a frequency $f_n$, or if its local state changes.
      It is not possible for the robots to not select an action from $\mathcal{A}$ (with the exception of wall avoidance within the arena, whereby the robots rotate away from the wall).
      This means that the robots are always prompted to move with a nonzero turning rate, which can create a challenge for aggregation.
      Collision avoidance commands are added to the commanded velocity vector from the action space.
  \end{itemize}

  The following parameters were used in the results:
  $f_n = 0.5~Hz$,
  $q = 4$,
  $v_{cmd} = 0.5~m/s$,
  $T = 200~s$.

\subsubsection*{Case study B2: Aggregation task variant with directional neighbor sensors}
  This task is a modified version of case study B1, intended to further increase the difficulty.
  The goal and onboard sensors are the same as in B1.
  However, the actions that the robots take are different.

  \begin{itemize}
    \item \emph{Goal:}
      The fitness function is the same as given by \eqnref{eq:fa} for case studies A and B1.
      A top view of this task is shown in \figref{fig:picB_full}.

    \item \emph{Onboard sensors:} 
      This is the same as in case study B1, depicted in \figref{fig:picB}.

    \item \emph{Actions:}
      The action space used is the same as B1:
      \begin{equation}
        \mathcal{A} = \{ -1.0, -0.7, -0.3, -0.1, 0.1, 0.3, 0.7, 1.0\}. \nonumber
      \end{equation}
      The definition of the probabilistic policy $\pi$ is also the same. 
      However, the following is different.
      The forward speed is $v_{cmd} = v_{\mathrm{mean}} * \abs{a_k}$, where $a_k$ is the currently selected value from $\mathcal{A}$.
      In addition, $a_k$ is also the turning rate $\dot{\psi}$.
      When a robot selects a turning rate, the corresponding rate is applied for a time $t_1$, after which the robot moves straight until a time $t_2$ with velocity $v_{cmd}$, where $t_1+t_2=1/f_n$, or until its local state changes.
  \end{itemize}
  
  The following parameters were used:
  $f_n = 0.5~Hz$,
  $q = 4$,
  $v_{\mathrm{mean}} = 0.5~m/s$,
  $t_1 = 1~s$,
  $t_2 = 1~s$,
  $T = 200~s$.

\subsubsection*{Case study C: Foraging task}
  
  In this task, a swarm of $n$ robots in an arena is tasked with maximizing the food at the nest at a final time $T$.
  Each robot at the nest consumes $e_n$ food items per second.
  A robot that ventures out to forage for food eats $e_f$ food items every $t_c$ seconds that it spends searching.
  If a food item falls within its sensor range, then the robot collects it and returns to the nest.
  When collected, a new food item appears in a random location, such that the total number of food items in the arena remains constant.
  A robot cannot hold more than one food item at any given time.

  \begin{itemize}

    \item \emph{Goal:}
      The global fitness function is
      \begin{equation}
        F_g^C(t) = f(t) - f(0),
      \end{equation}
      where $f(t)$ is the total number of food items at the nest at time $0 \leq t \leq T$, and $f(0)$ is the food at the nest at the start of the task.
      A top view of the task is shown in \figref{fig:picC_full}.

    \item \emph{Onboard sensors:} 
      A foraging robot can sense food items within a maximum range $r_{\max}$ as shown in \figref{fig:picC}.
      Upon returning to the nest, the robot can sense the difference in food between when it left and when it returned.
      This is saturated by a value $f_{\max}$ and discretized evenly along $\abs{\mathcal{S}}$ steps, i.e, $\mathcal{S} = \{ -f_{\max}, \ldots , f_{\max}\}$.
      A robot cannot measure the global fitness $F_g^C(t)$ but only the difference, i.e., $f_g^C(t) = F_g^C(t_{\mathrm{arrival}}) - F_g^C(t_{\mathrm{departure}})$, where $-f_{\max}\leq f_g^C(t) \leq f_{\max}$.
      When at the nest, the robot measures the difference between its observations every time that it reiterates whether to explore or not, as below.

    \item \emph{Actions:}
      When at the nest, a robot can make the choice to forage or to remain at the nest.
      The policy $\pi$ is a vector of length $\abs{\mathcal{S}} \times 1$, where each entry in $\pi$ indicates the probability of choosing ``explore'' given the current local state.
      When ``explore'' is selected, the robot leaves the nest and does not come back until food is found.
      When at the nest, a robot reassess its choice with a frequency $f_n$.
  
  \end{itemize}

  The following parameters were used:
  $f(0)= 15$,
  $f_n = 0.1~Hz$,
  $T = 500~s$,
  $e_f = 0.1$ ,
  $t_c = 10~s$,
  $e_n = 0.02$, and 
  $\abs{\mathcal{S}} = 30$ with $f_{\max} = 5$.

  \begin{figure}[t]
  \centering
    \begin{subfigure}[t]{0.30\textwidth}
      \centering
      \includegraphics[width=\textwidth]{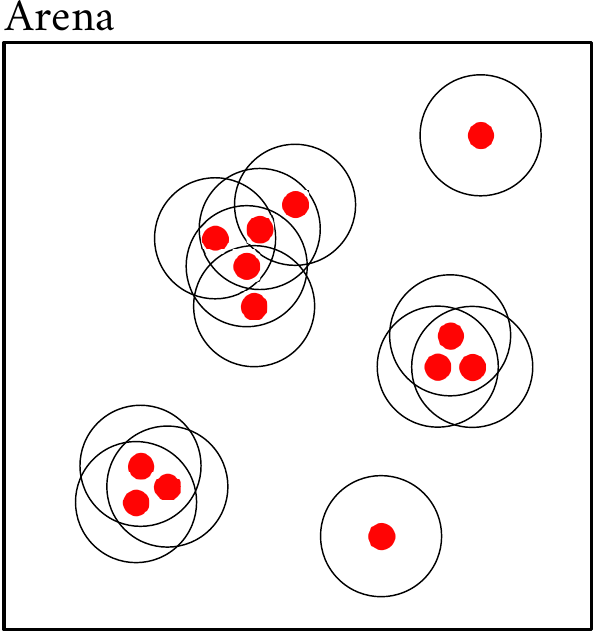}
      \caption{Top view of case study A}
      \label{fig:picA_full}
    \end{subfigure}\hfill
    \begin{subfigure}[t]{0.32\textwidth}
      \centering
      \includegraphics[width=\textwidth]{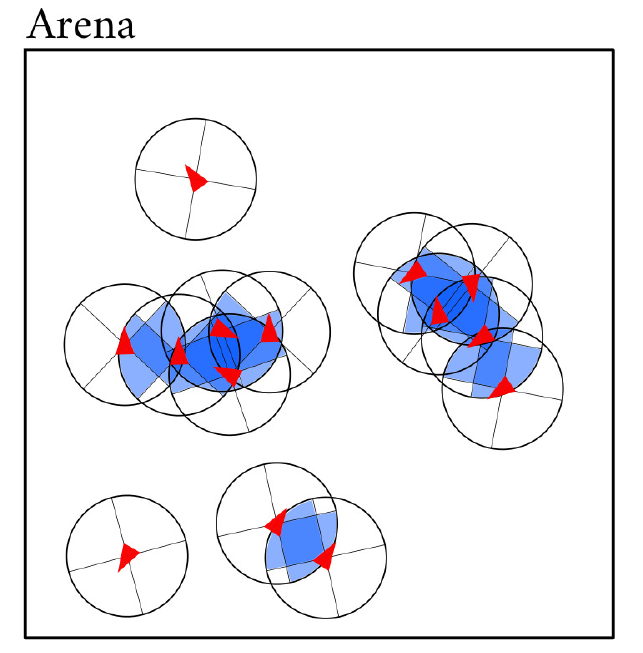}
      \caption{Top view of case studies B1 and B2}
      \label{fig:picB_full}
    \end{subfigure}\hfill
    \begin{subfigure}[t]{0.32\textwidth}
      \centering
      \includegraphics[width=\textwidth]{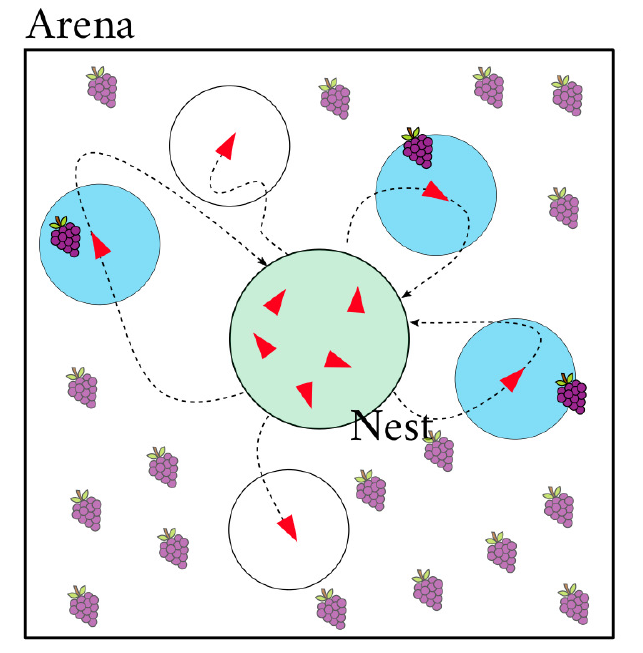}
      \caption{Top view of case study C}
      \label{fig:picC_full}
    \end{subfigure}\\
    \vspace{0.5cm}
    \begin{subfigure}[t]{0.30\textwidth}
      \centering
      \includegraphics[width=0.65\textwidth]{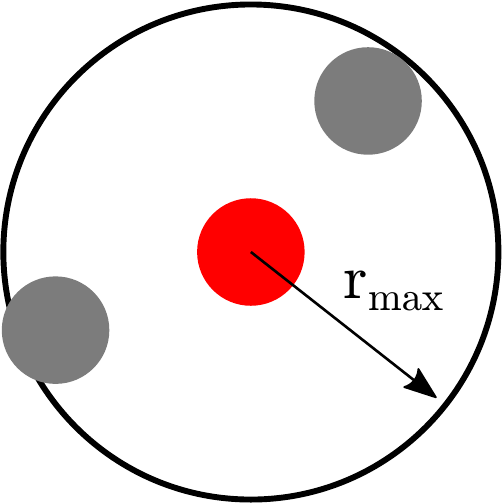}
      \caption{Sensor for case study A}
      \label{fig:picA}
    \end{subfigure}\hfill
    \begin{subfigure}[t]{0.30\textwidth}
      \centering
      \includegraphics[width=0.65\textwidth]{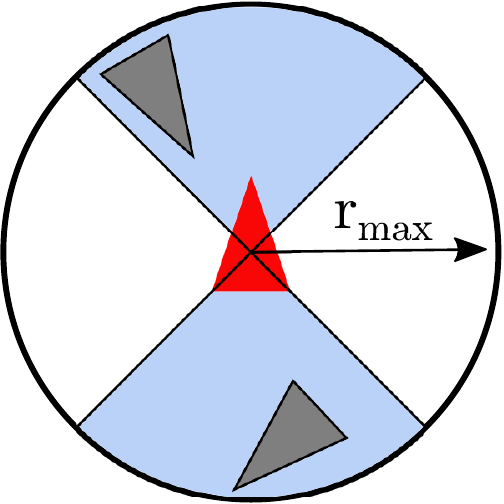}
      \caption{Sensor for case studies B1 and B2}
      \label{fig:picB}
    \end{subfigure}\hfill
    \begin{subfigure}[t]{0.30\textwidth}
      \centering
      \includegraphics[width=0.65\textwidth]{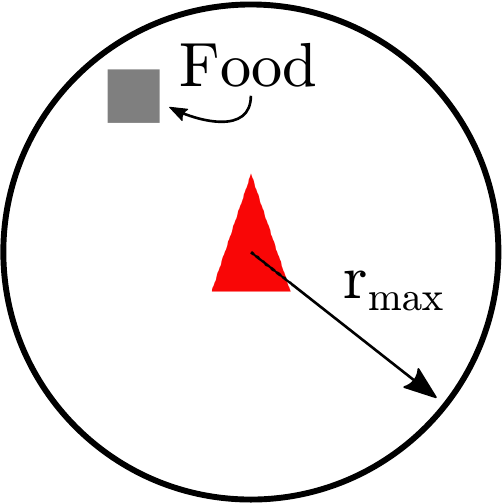}
      \caption{Sensor for case study C}
      \label{fig:picC}
    \end{subfigure}

    \caption{Global views and sensor depictions for each task. Robots are indicated in red.}
    \label{fig:studycases}
  \end{figure}

\subsection{Test environment and data gathering}
\label{sec:simulation}

  \subsubsection{Test environment}
    All four tasks are evaluated using the C++ based simulation environment \emph{Swarmulator}.
    This is the dedicated swarm simulator that was also used in our prior publications \citep{coppola2019pagerank,coppola2019provable}.
    Each robot is simulated on an independent detached thread so that timing artifacts are avoided and automatically randomized.
    A link to the relevant code is provided at the end of this paper.

  \subsubsection{Training data}
    The training data was gathered from 500 independent simulations, simulated for time $T$ in arenas of size $20 \times 20$ meters.
    In each simulation run, a swarm of $n$ robots follows a randomly generated policy $\pi$ based on the policy spaces introduced in \secref{sec:studycases}.
    Random swarm sizes were used with $1 \leq n \leq 30$, chosen from a discrete uniform distribution.
    For case study C, this was changed to $1 \leq n \leq 20$ to limit congestions at the nest.
    The global fitness $F_g(t)$ and the local states $s\in\mathcal{S}$ of each robot is logged with a frequency of $2~Hz$ with respect to the simulated time clock.
    This makes for $2T$ data points per simulation, and a training set size of $\approx1000T$ to train Model 1 (the micro-macro model).
    In addition, all local state transitions by the robots during these simulations were logged and used to generate Model 2 (the transition model).

\subsection{Implementation and results of model training}
\label{sec:results_models}

\subsubsection{Model 1: implementation, training, and analysis}
  \begin{table}
    \center
    \caption{Summary table with properties for each case study.}
    \label{tab:nndetails}
    \begin{tabular}{| c | c | c | c | c | c | }
      \hline
      \textbf{Case study} & $\abs{\mathbf{P}_s}$ & Hidden layers & Layer size & Learning Rate & T(s) \\ \hline
      A  & 8  & 3 & 30 & 1e-5 & 200\\
      B1 & 16 & 3 & 30 & 1e-5 & 200\\
      B2 & 16 & 3 & 30 & 1e-5 & 200\\
      C  & 30 & 3 & 100 & 1e-6 & 500\\\hline
    \end{tabular}
  \end{table}
  \begin{figure}[t]
    \centering
    \begin{subfigure}[t]{0.49\textwidth}
      \centering
      \includegraphics[width=\textwidth]{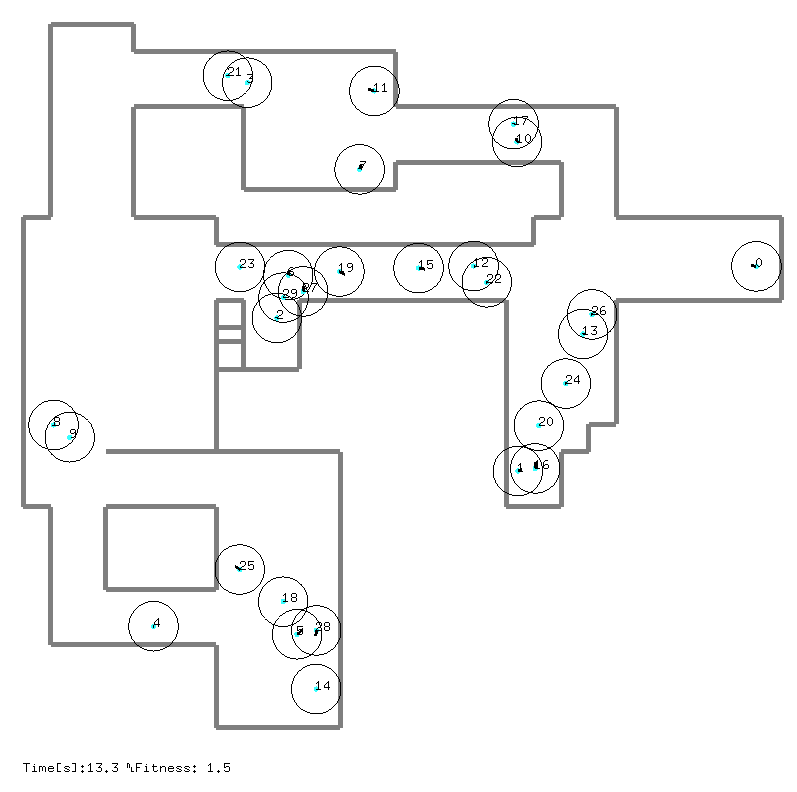}
    \end{subfigure}
    \hfill
    \begin{subfigure}[t]{0.49\textwidth}
      \centering
      \includegraphics[width=\textwidth]{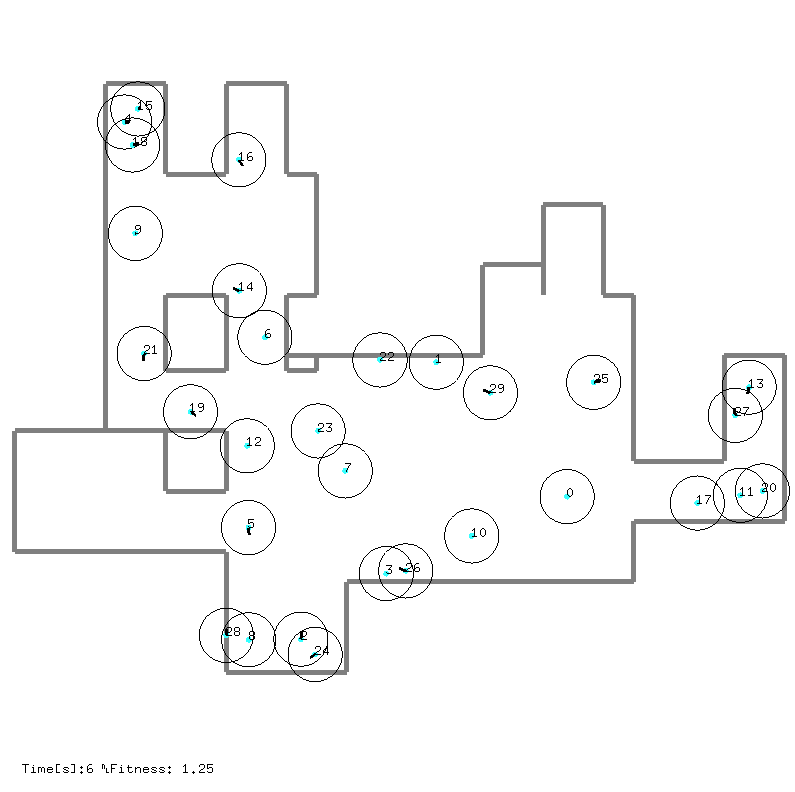}
    \end{subfigure}
    \caption{Top view of two randomly generated arenas, as for VS 3, with 30 robots from case study A. These screenshots are taken directly from the simulator.}
    \label{fig:VS3arenas}
  \end{figure}
  For each task, a neural network $\mathcal{F}_{nn}$ is constructed with Rectified Linear Unit (ReLU) layers.
  All networks were trained using the Adam optimizer as implemented in PyTorch.\footnote{See \cite{pytorch} for more information.}
  The specifics of each network are summarized in \tabref{tab:nndetails}.
  Each neural network was trained using the training data generated as per \secref{sec:simulation}.    
  Then, three validation sets were made for each task.
  \begin{itemize}
    \item \textbf{Validation Set 1 (VS 1)}, generated by 100 runs using the same setup as for the training data set. 
    As for the training set, each run featured a randomly generated policy and a random number of robots.
    \item \textbf{Validation Set 2 (VS 2)}, generated by 100 runs using the same setup as for the training data set, but with a smaller arena size of $10\times 10$ meters. 
    As for the training set, each run featured a randomly generated policy and a random number of robots.
    \item \textbf{Validation Set 3 (VS 3)}, generated by 100 runs using the same setup as for the training data set, but with randomly generated arenas featuring multiple rooms and corridors.
    Each run featured a randomly generated policy, a random number of robots, and a random arena.

    Two examples of randomly generated arenas are shown in \figref{fig:VS3arenas}.
  \end{itemize}
  \renewcommand{\labelenumi}{\theenumi.}
  The intent behind VS 2 and VS 3 is to determine how well a network can generalize to different environments.
  The performance of the networks is validated by assessing the mean correlation between the true $F_g$ and its estimate $\hat{F}_g$ across a validation set.
  The correlation measures how valid $\mathcal{F}_{nn}$ is as a tool to extract the set of desired local states $\mathcal{S}_{des}$.
  A positive correlation indicates that the network has learned to predict when the global fitness increases and/or decreases.
  A correlation of 1.0 implies that the trend in $F_g$ is perfectly estimated.

  The results of this analysis are shown in \figref{fig:nn}.
  In all cases, a positive correlation is achieved for the validation sets.
  For case study A, B1, and B2 the correlation is almost maximized (Figures \ref{fig:nnA}, \ref{fig:nnB1}, and \ref{fig:nnB2}, respectively).
  The sample runs in Figures \ref{fig:sampleA}, \ref{fig:sampleB1}, and \ref{fig:sampleB2} show that the global fitness is accurately predicted.
  The correlation is lower for case study C, shown in \figref{fig:nnC}.
  This is expected because of the temporal component of this task, which is not modeled by the neural network.
  Nevertheless, also for case study C, the general trend was still captured.
  This is exemplified by analyzing the prediction for the individual case, as shown in \figref{fig:sampleC}.
  We remark that it is likely that these results could be further improved with more dedicated hyperparameter tuning, which we encourage in future work.
  Two conclusions can be drawn from these results.
  
  \begin{enumerate}
  
    \item The networks can also generalize to validation sets generated with different arenas.
    This means that the trained network can be exported outside of the direct environment for which it has been trained, and that by adding more diverse training data we could also be able to generalize the predictions further.
    This is especially valid for the aggregation tasks, for which the fitness function does not feature temporal effects.

    \item Even though the behavior of the robots in tasks B1 and B2 is different, the trained networks are almost interchangeably applicable.
    This can be appreciated in Figures \ref{fig:nnB1} and \ref{fig:nnB2} (green and black lines).
    This is because both tasks feature the same global fitness and the same local sensors, showing that the performance of $\mathcal{F}_{nn}$ is not dependent on the dynamics.
  
  \end{enumerate}

  \begin{figure}[t]
    \centering
    \begin{subfigure}[t]{0.49\textwidth}
      \centering
      \includegraphics[width=\textwidth]{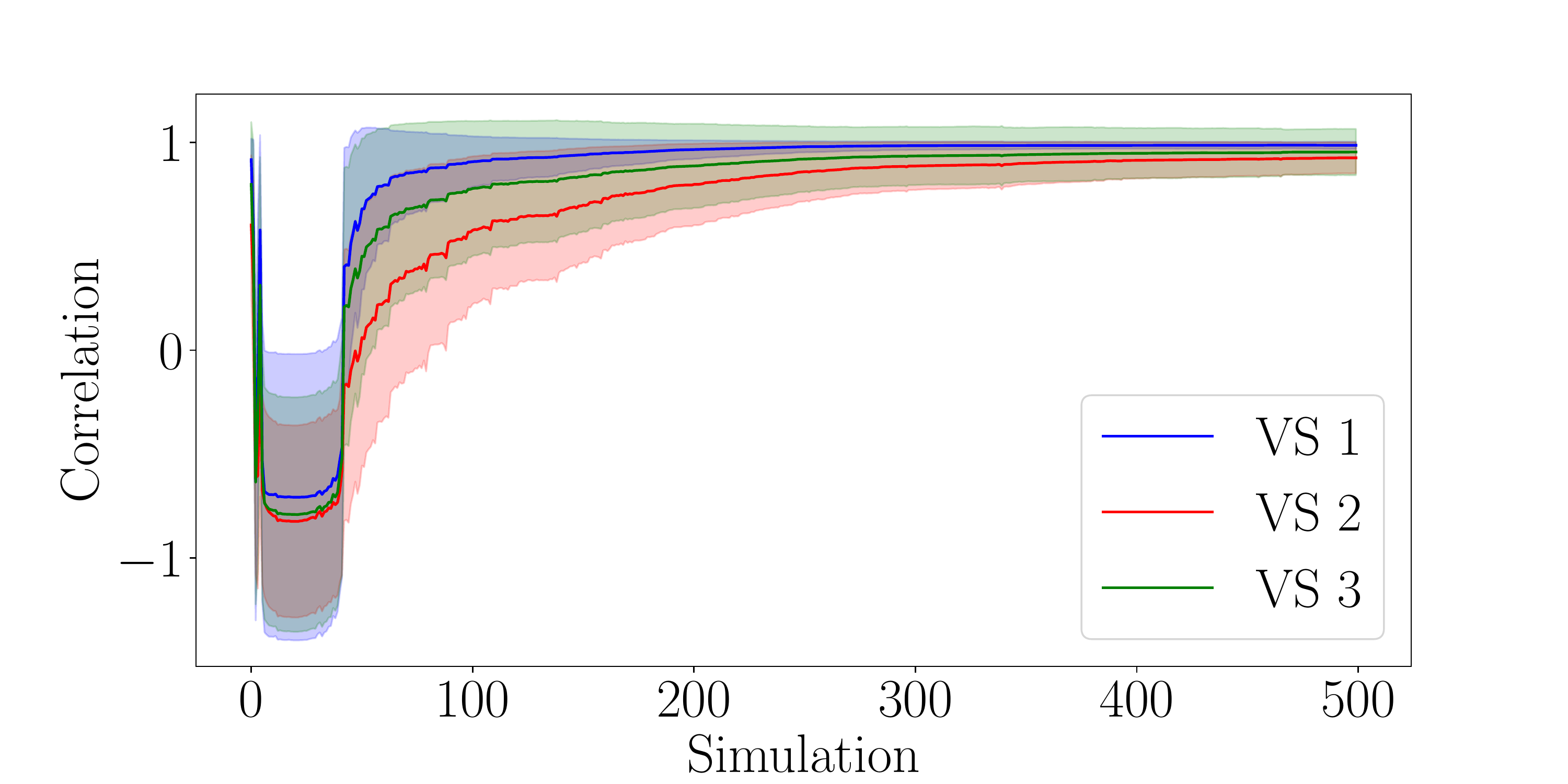}
      \caption{Case study A}
      \label{fig:nnA}
    \end{subfigure}\hfill
    \begin{subfigure}[t]{0.49\textwidth}
      \centering
      \includegraphics[width=\textwidth]{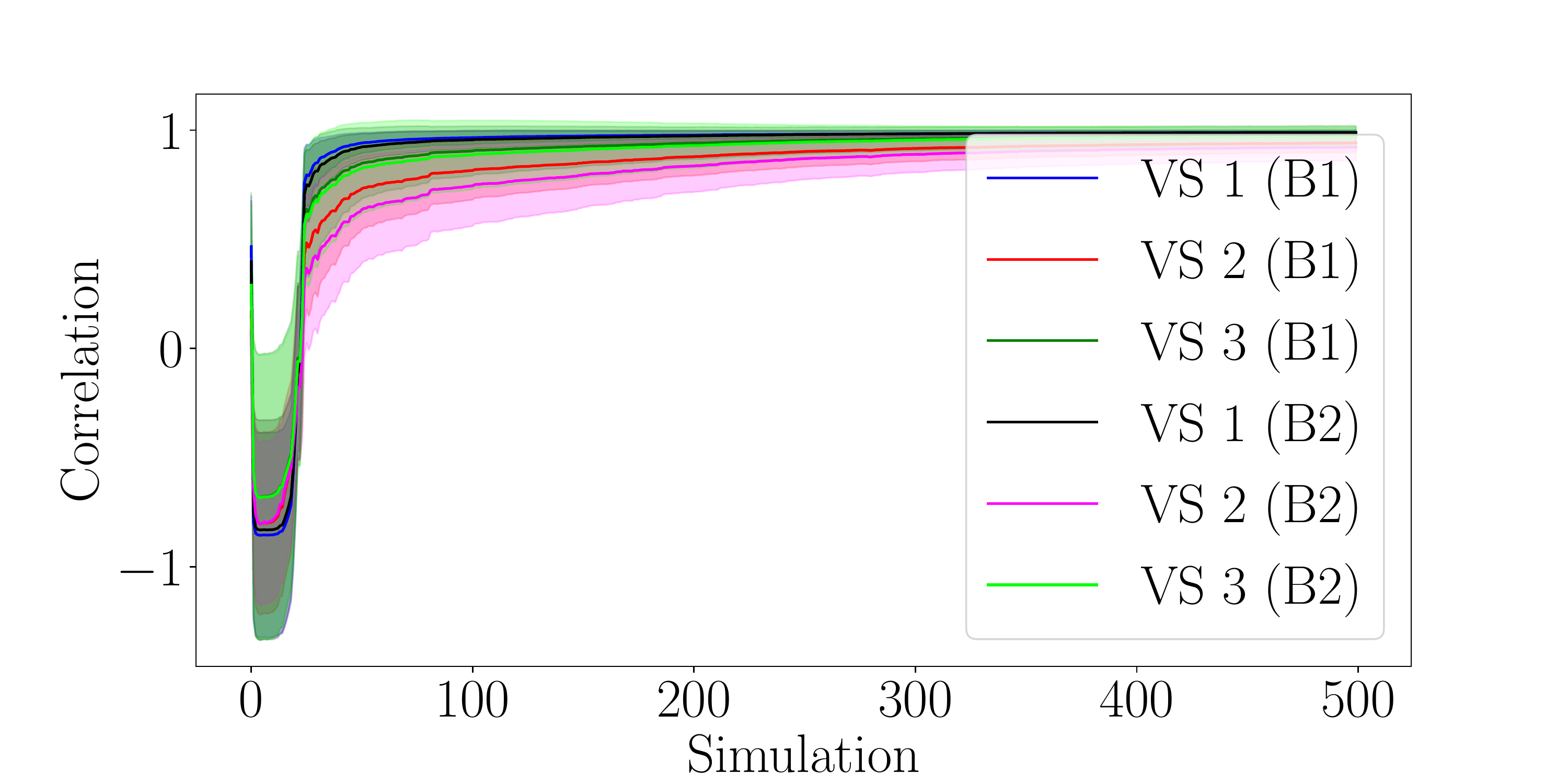}
      \caption{Case study B1}
      \label{fig:nnB1}
    \end{subfigure}
    \begin{subfigure}[t]{0.49\textwidth}
      \centering
      \includegraphics[width=\textwidth]{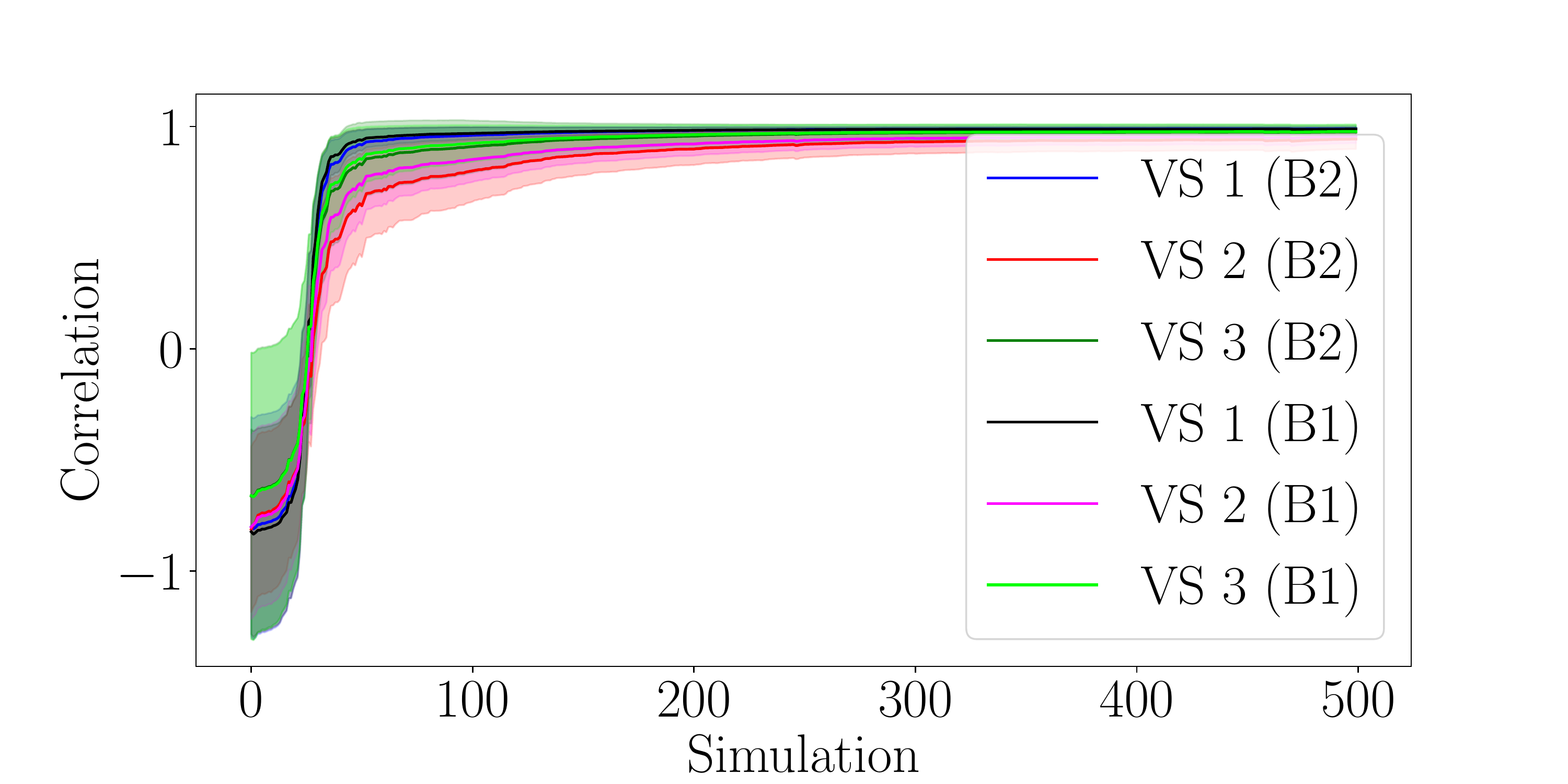}
      \caption{Case study B2}
      \label{fig:nnB2}
    \end{subfigure}\hfill
    \begin{subfigure}[t]{0.49\textwidth}
      \centering
      \includegraphics[width=\textwidth]{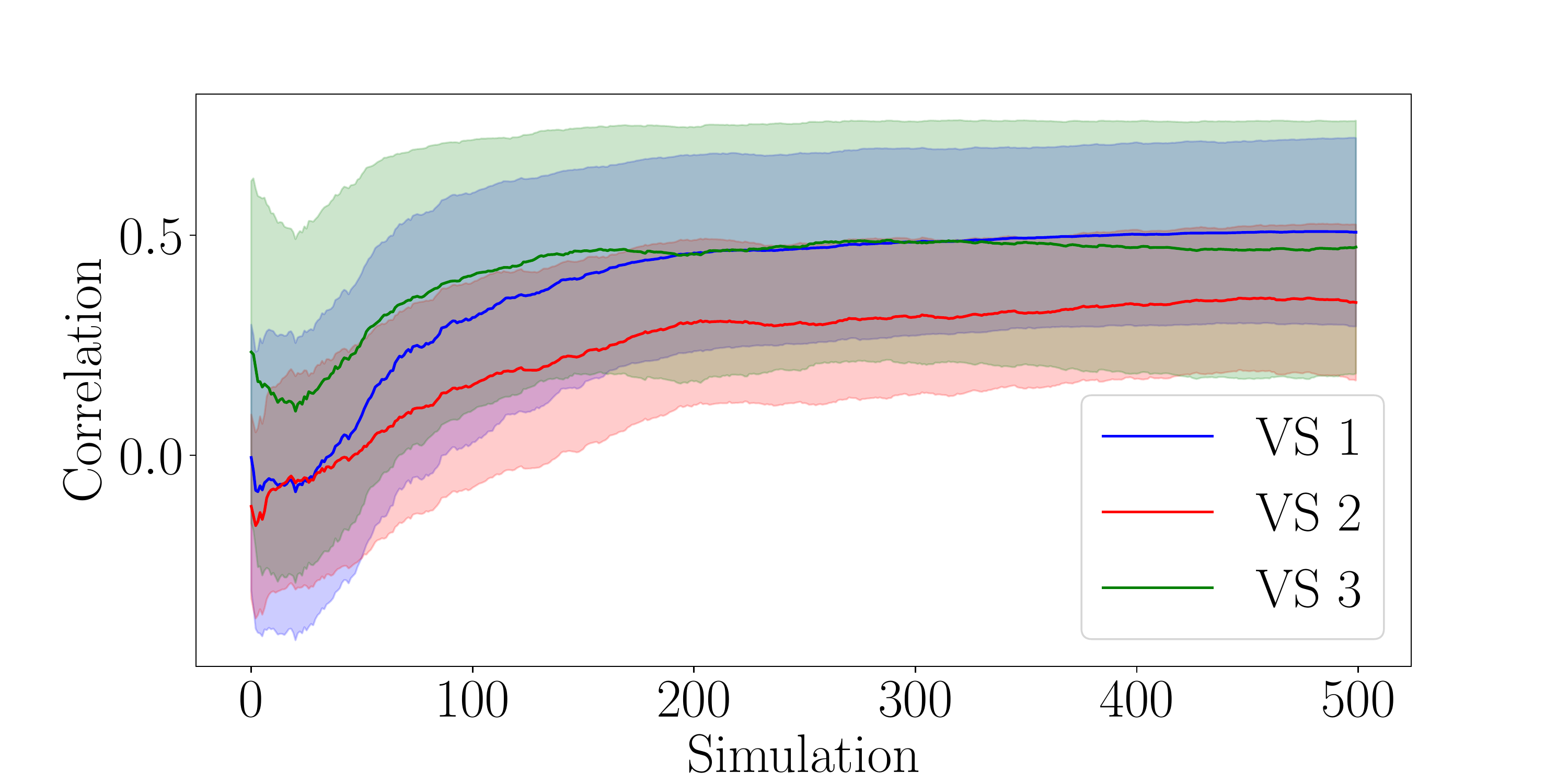}
      \caption{Case study C}
      \label{fig:nnC}
    \end{subfigure}
    \caption{Correlation between predicted global fitness and real global fitness of validation set during learning.}
    \label{fig:nn}
  \end{figure}

  \begin{figure}[h!]
    \centering
    \begin{subfigure}[t]{0.49\textwidth}
      \centering
      \includegraphics[width=\textwidth]{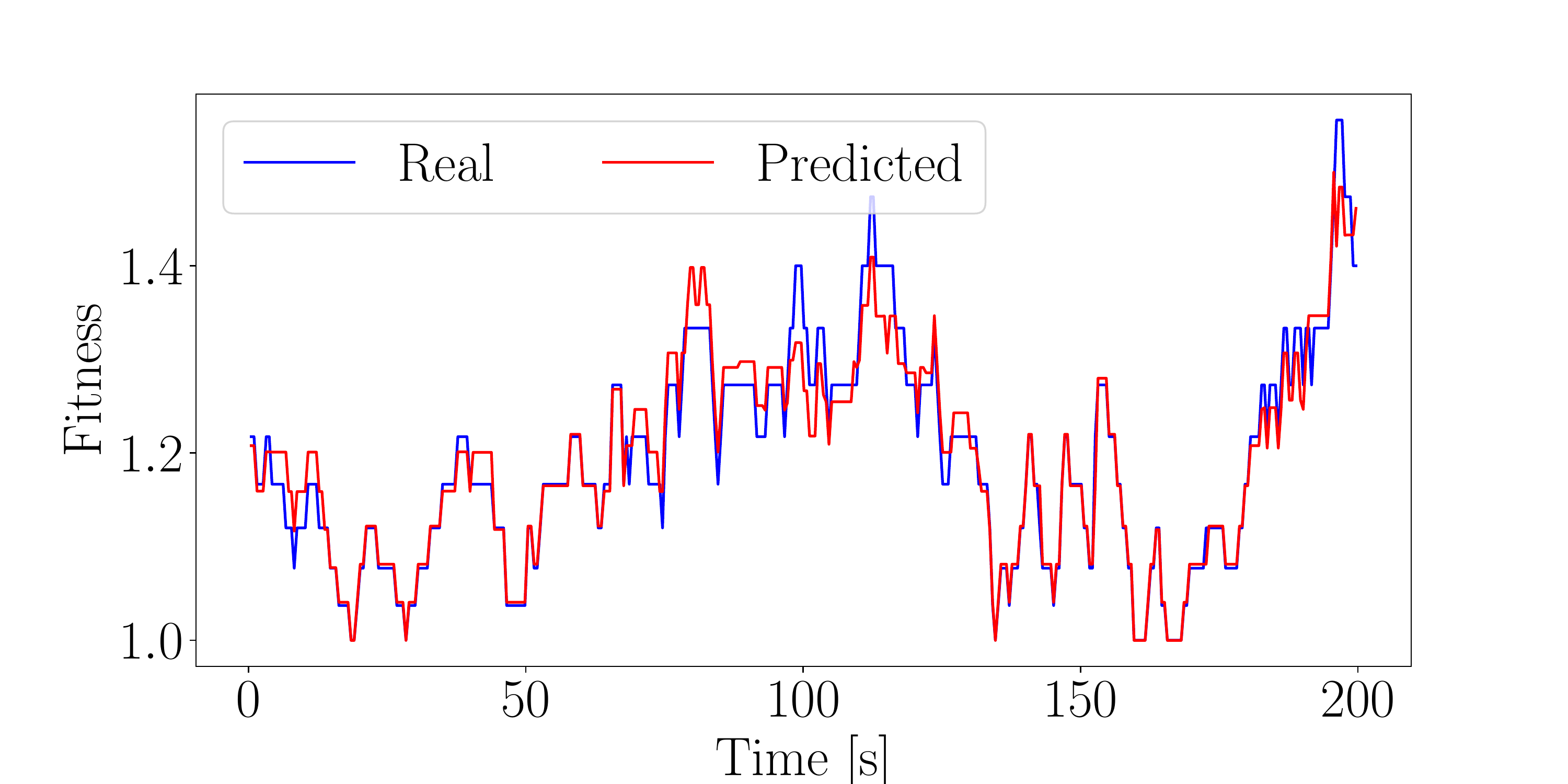}
      \caption{Case study A}
      \label{fig:sampleA}
    \end{subfigure}
    \hfill
    \begin{subfigure}[t]{0.49\textwidth}
      \centering
      \includegraphics[width=\textwidth]{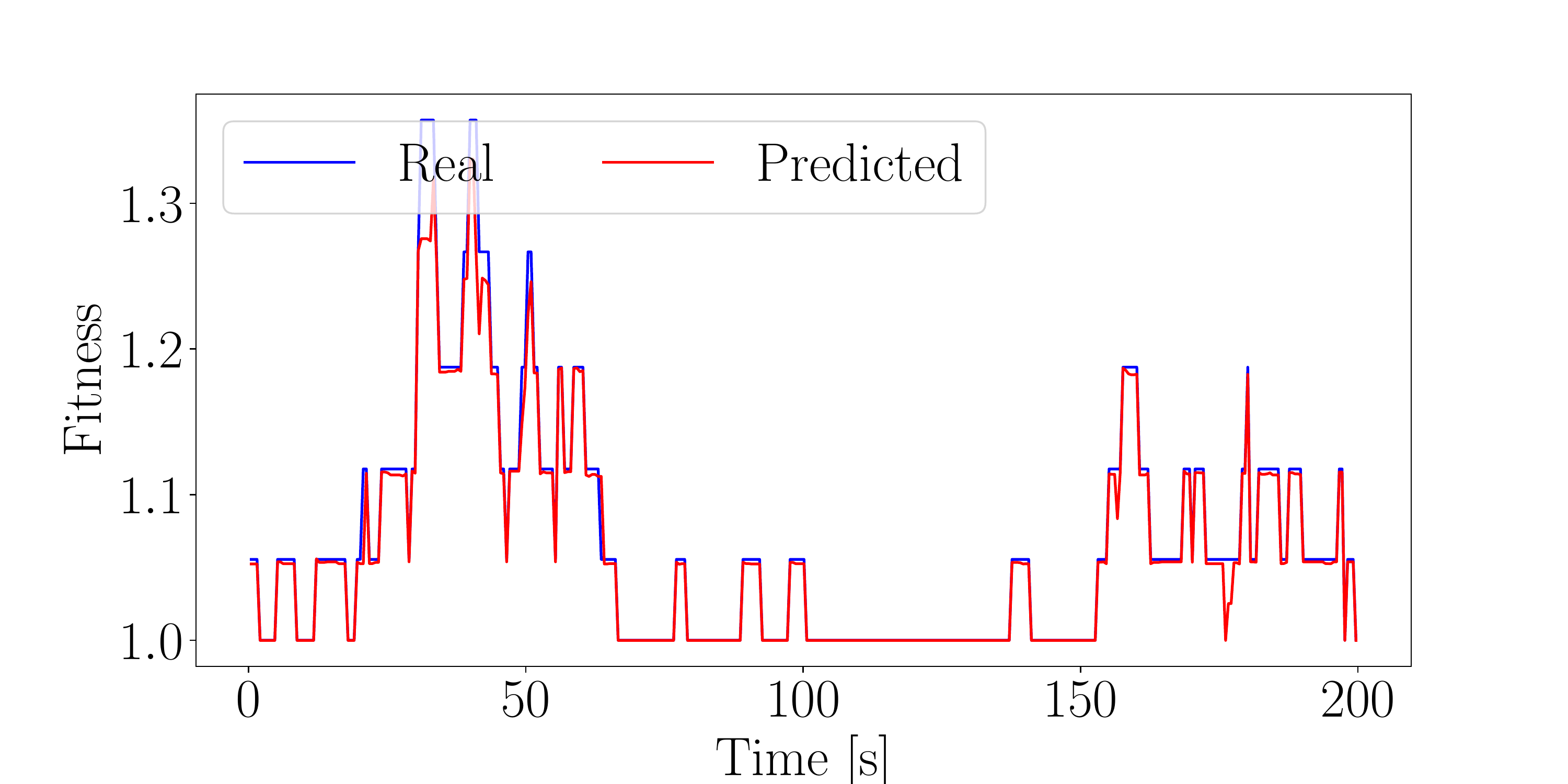}
      \caption{Case study B1}
      \label{fig:sampleB1}
    \end{subfigure}
    \begin{subfigure}[t]{0.49\textwidth}
      \centering
      \includegraphics[width=\textwidth]{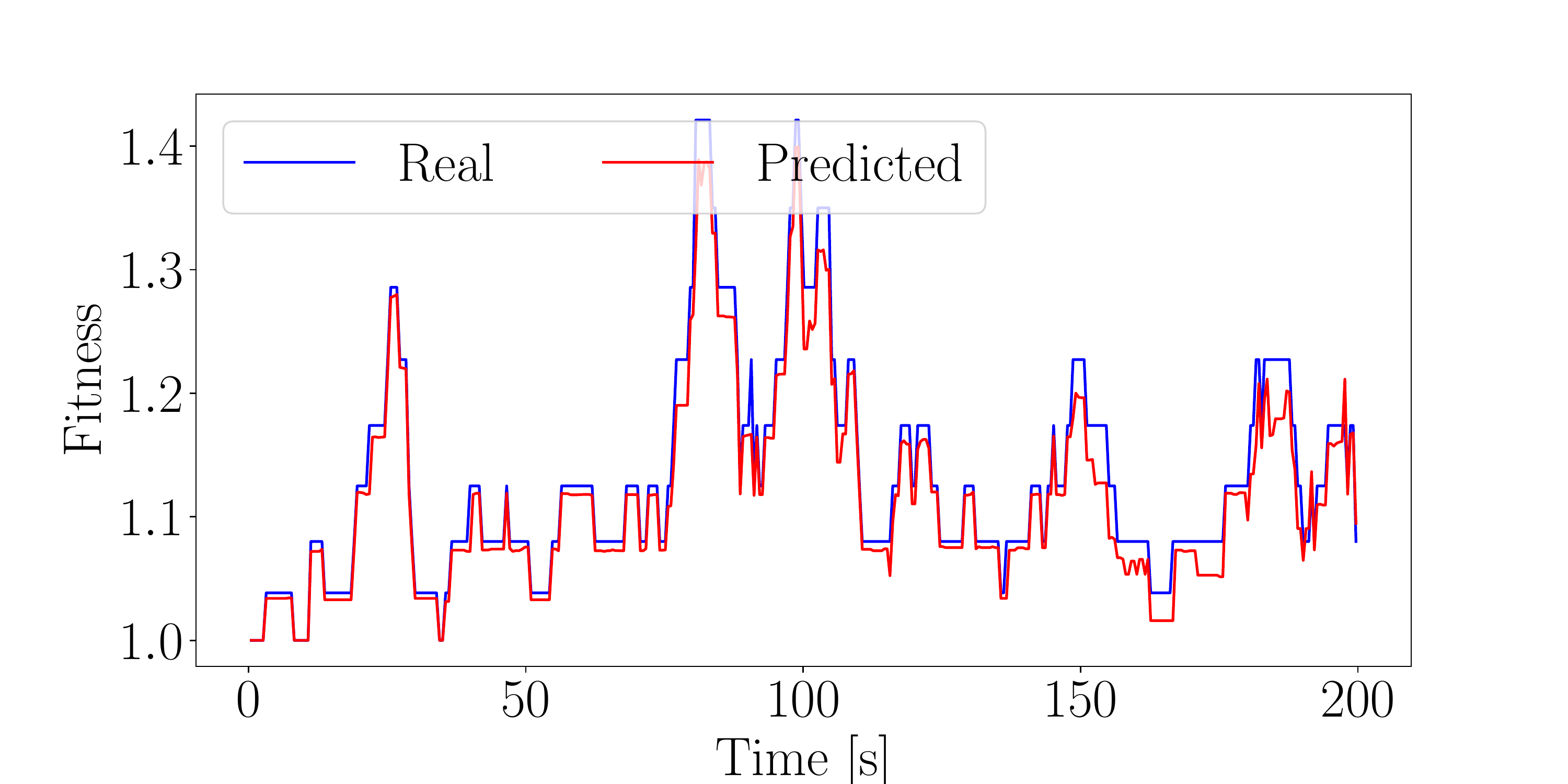}
      \caption{Case study B2}
      \label{fig:sampleB2}
    \end{subfigure}
    \hfill
    \begin{subfigure}[t]{0.49\textwidth}
      \centering
      \includegraphics[width=\textwidth]{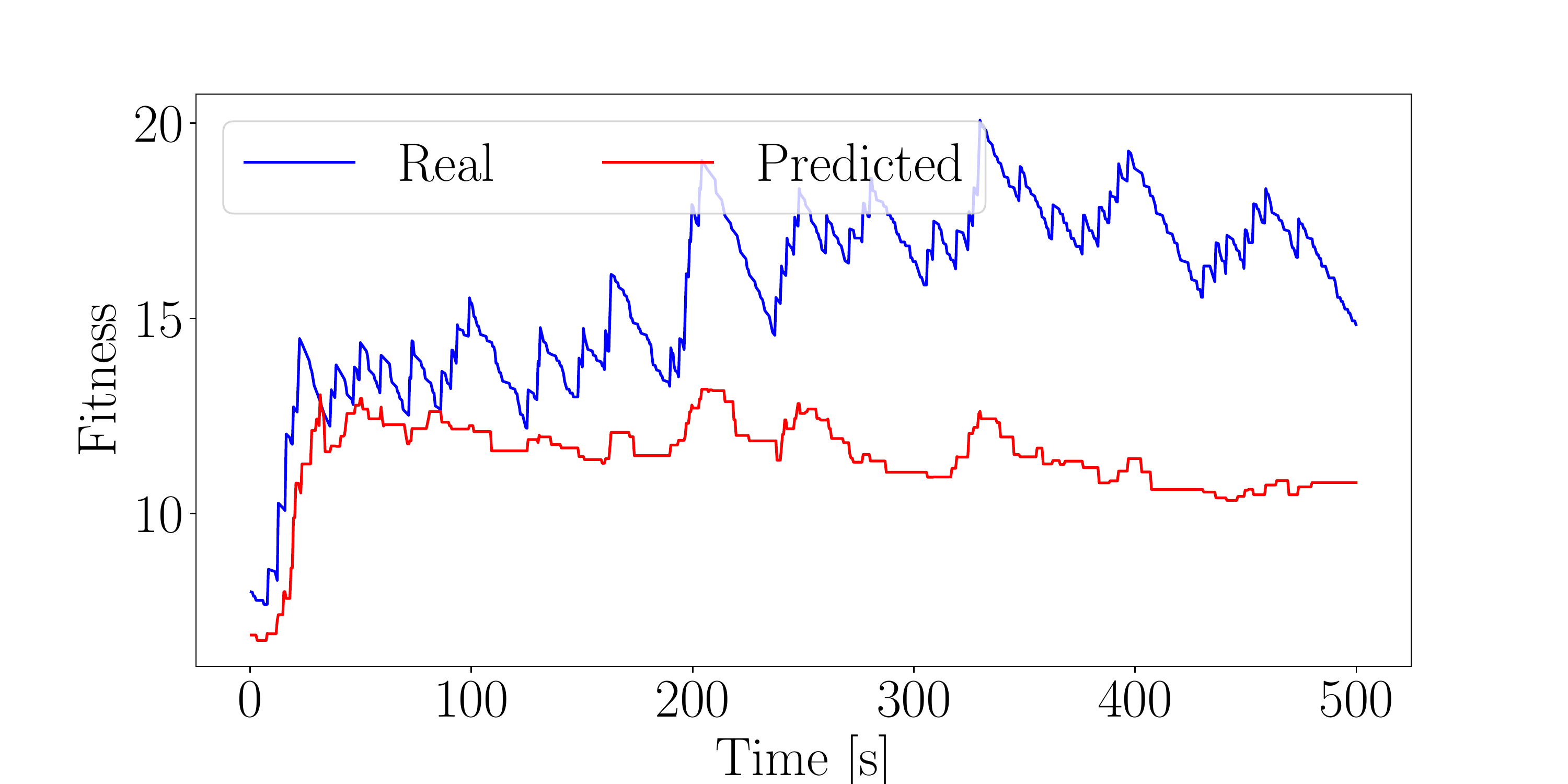}
      \caption{Case study C}
      \label{fig:sampleC}
    \end{subfigure}
    \caption{Sample global fitness estimates over one run, compared to the real fitness.}
    \label{fig:samples}
  \end{figure}

\subsubsection{Model 2: training and results}
  The transitions are estimated by the proportion of times that they occur over the simulations of the training data, as detailed in \secref{sec:model2}.
  \figref{fig:model} shows the convergence of all transition probabilities in the action models in $\mathbf{A}$, estimated over the training simulations.
  The figures shows the difference between the estimate at a given simulation and the final estimate after all simulations, providing a visualization of the convergence.
  It can be observed that more rare transitions follow a less graceful decline, providing only a rough estimate of their true likelihood.
  In practice, this problem will be mitigated when this framework is not used in a standalone procedure, as will be explored in \secref{sec:evo} and \secref{sec:online}.
  For online use, it would be beneficial to adopt a pre-trained model to provide the robots with initial estimates.

  \begin{figure}[t]
    \centering
    \begin{subfigure}[t]{0.49\textwidth}
      \centering
      \includegraphics[width=\textwidth]{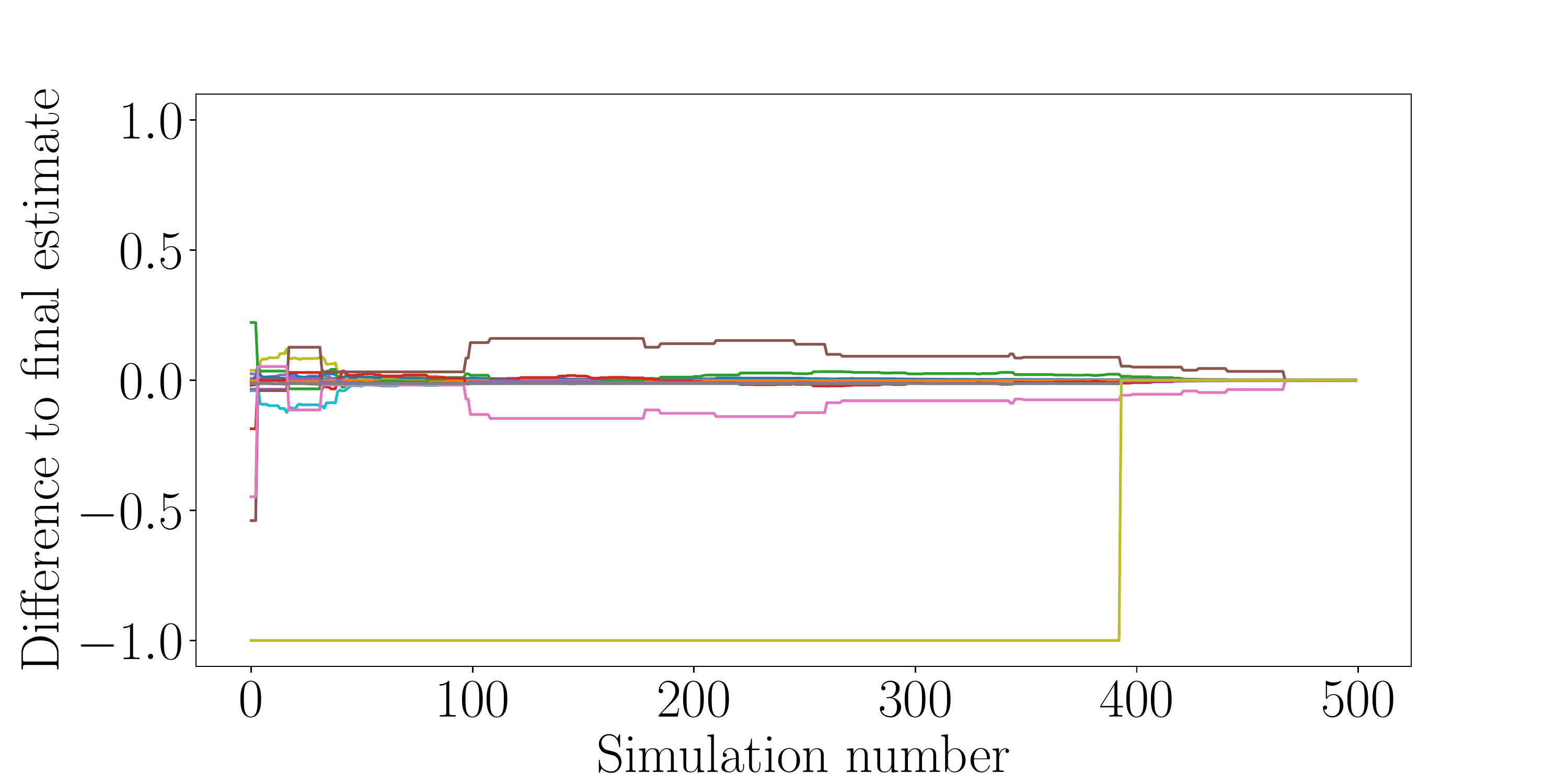}
      \caption{Case study A}
      \label{fig:modelA}
    \end{subfigure}\hfill
    \begin{subfigure}[t]{0.49\textwidth}
      \centering
      \includegraphics[width=\textwidth]{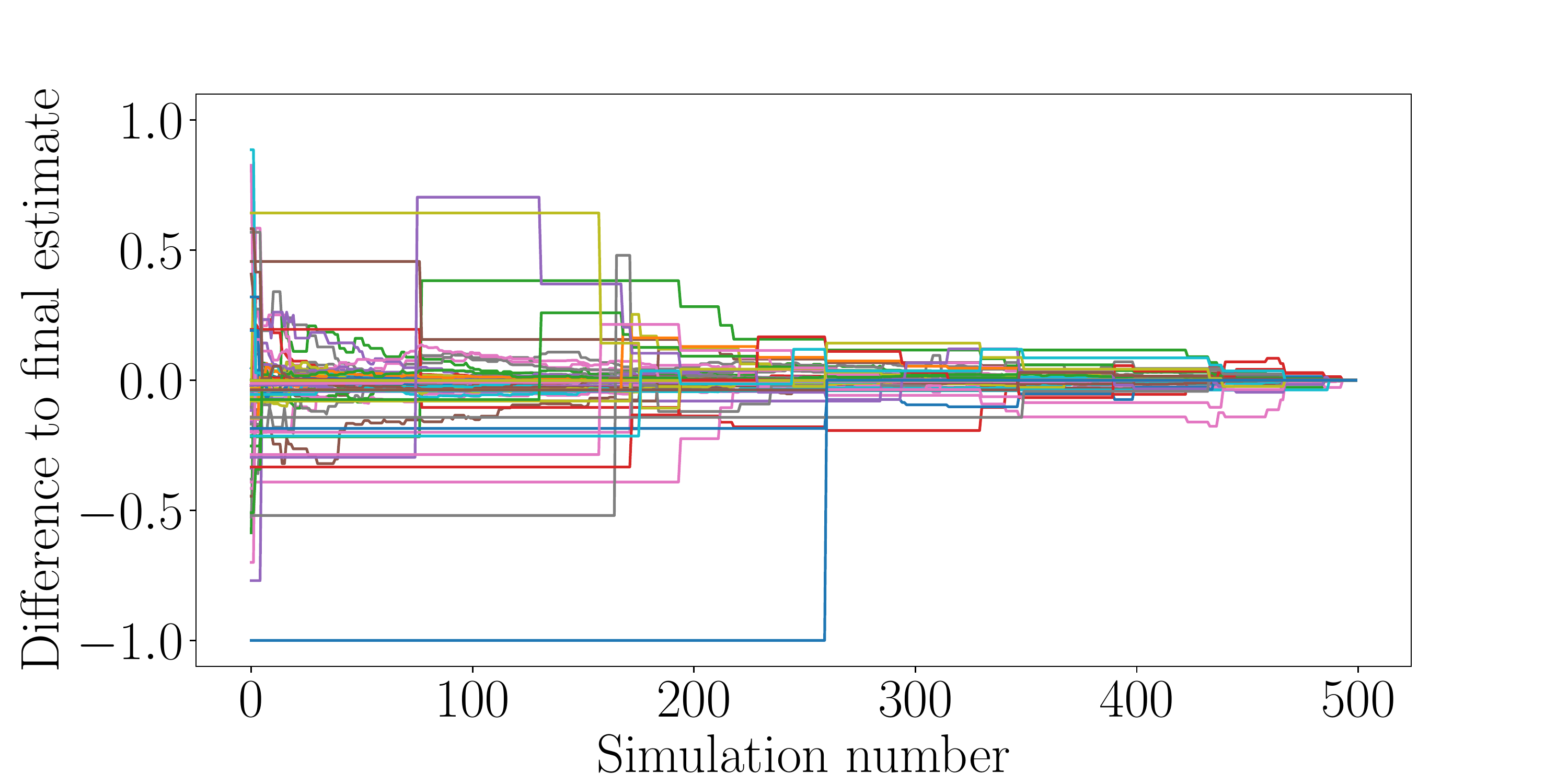}
      \caption{Case study B1}
      \label{fig:modelB1}
    \end{subfigure}
    \begin{subfigure}[t]{0.49\textwidth}
      \centering
      \includegraphics[width=\textwidth]{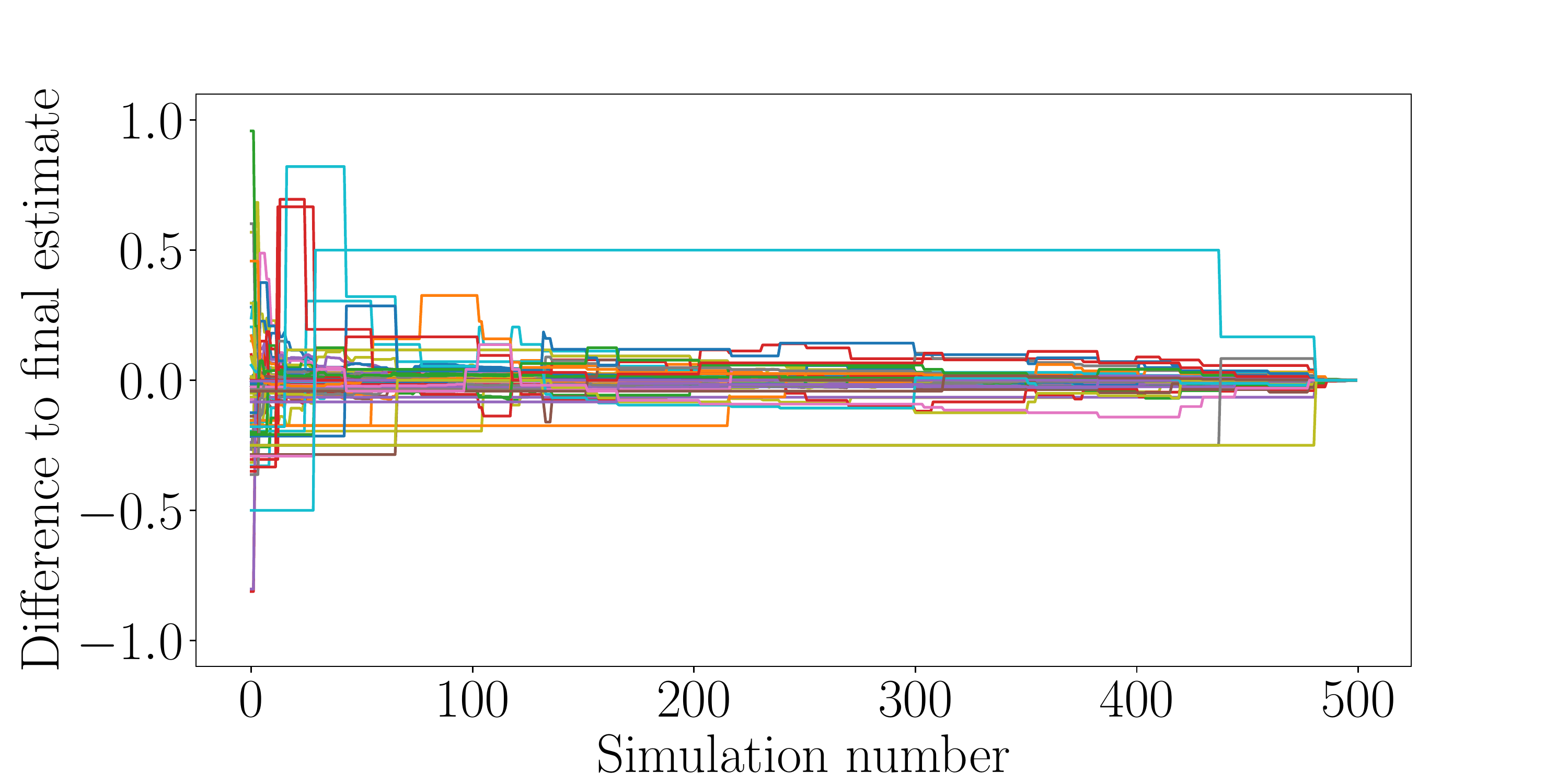}
      \caption{Case study B2}
      \label{fig:modelB2}
    \end{subfigure}\hfill
    \begin{subfigure}[t]{0.49\textwidth}
      \centering
      \includegraphics[width=\textwidth]{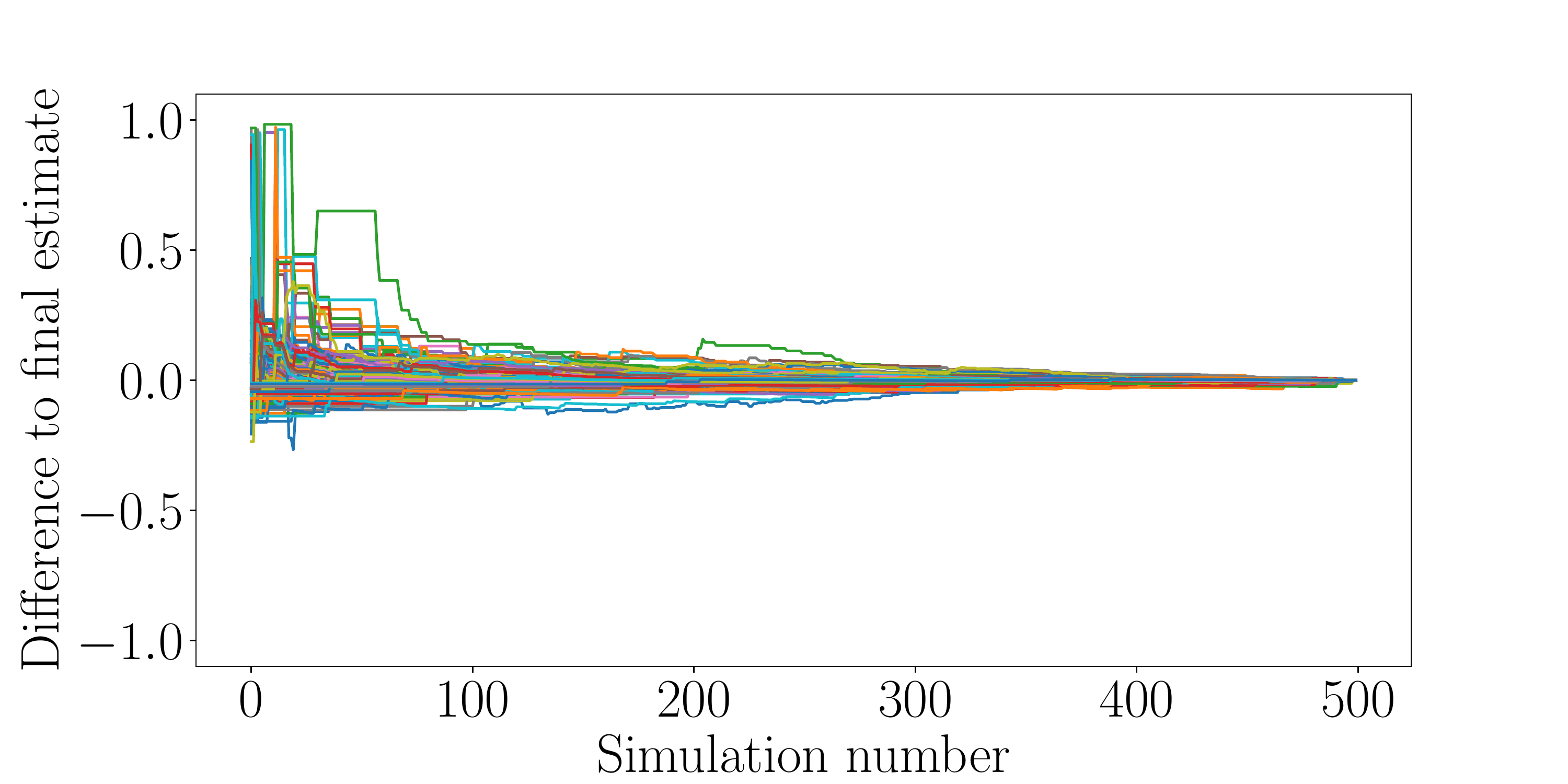}
      \caption{Case study C}
      \label{fig:modelC}
    \end{subfigure}
    \caption{Convergence of transition model probabilities over 500 simulation runs for each task.}
    \label{fig:model}
  \end{figure}

\subsection{Swarm performance evaluation of optimized policy}
\label{sec:results_evaluation}
  Once both models are generated, the set $\mathcal{S}_{des}$ can be extracted and the policies can be optimized accordingly.
  The policy optimization in these results was achieved using an evolutionary algorithm, implemented using DEAP \citep{DEAP_JMLR2012}, in order to optimize the parameters according to the PageRank-based metric from \secref{sec:optimization}.
  For each case study, the performance of the swarm is matched against: 
  1) an original baseline performance, evaluated using 100 random policies, and 2) the performance of a policy that was evolved using a centrally monitored evolutionary algorithm, as is state of the art practice in the literature.
  The performance was developed and tested in arenas of $20 \times 20$ meters.

  \subsubsection{Performance for case studies A, B1, and B2 (aggregation)}
  \label{sec:performance_aggregation}
    In all three cases, the set $\mathcal{S}_{des}$ included all local states featuring one or more neighbors that had been explored during the training runs.
    This is a sensible result:
    the algorithm finds out that the robots should arrive at these local states in order to maximize $\hat{F}_g$.
    In turn, the optimized policy directly aimed at maximizing the probability of ending up with a local state featuring one or more neighbors.
    The swarm's performance is evaluated with 100 simulations of $200~s$.
    The performance of the optimized policies can be seen in Figures \ref{fig:benchmarkA}, \ref{fig:benchmarkB1}, and \ref{fig:benchmarkB2}, where they are compared to a baseline performance and to an evolved performance.
    The mean performance over the evaluation run using the optimized policy can be appreciated in Figures \ref{fig:fitnesslogsA}, \ref{fig:fitnesslogsB1}, and \ref{fig:fitnesslogsB2}.
    Case studies A and B1 match the policy evolved using a standard evolutionary algorithm.
    The performance of the optimized policy for case study B2 did not match the true optimum evolved via evolution.
    This is because it could not exploit global properties and its optimized behavior was greedy in its exploitation of Model 2, which did not allow the robots for optimum exploration in the environment.
    Note that in all cases the optimized policy from our framework was achieved with 500 simulations, which is equivalent to the number of simulations in just \emph{one} generation of the evolutionary algorithm that was employed for the comparison (the evolution used a population of 100 policies that was evaluated 5 times each).
    Case study A was evolved in 100 generations, and case studies B1 and B2 were evolved with $\approx$ 300 generations each.
    To achieve the best of both worlds, the increased efficiency of the model-based approach can be exploited within the evolutionary setup.
    This idea will be explored in \secref{sec:evo}.

    A particularly interesting emergent result can be appreciated in the behavior of the robots in case study B1.
    This case study featured the particular difficulty that the robots were always told to move with forward speed $v_{\mathrm{cmd}}=0.5~m/s$, purposely making aggregation difficult due to their inability to stop.
    The optimized policy, however, learned to fully exploit collision avoidance (i.e., repulsion) forces between robots, which actually created the ability to stop by balancing the commanded forward speed with the repulsion force.
    By exploiting the repulsion forces, the robots learned that it was possible to remain in equilibrium and readily form clusters.
    Other clusters were also formed by robots repeatedly moving around each other.

  \subsubsection{Performance for case study C (foraging)}
  \label{sec:performance_forage}
    For the foraging task, we found that $\mathcal{S}_{des}$ was only composed of states where $f_g\geq-1.2069$.
    The swarm's performance is evaluated with 100 simulations of $200~s$.
    The performance can be appreciated in \figref{fig:benchmarkC} and \figref{fig:fitnesslogsC}, showing that it matches the performance of an evolved policy.
    In all cases, the fitness improves directly from the beginning of a run and stabilizes after approximately $100~s$.
    From these results we can establish that, even though the correlation of Model 1 was found to be lower than for the aggregation task, the extracted desired states and corresponding optimized policy can still match an evolved policy.
    This result may be attributed to the accuracy of Model 2, which in \figref{fig:modelC} can be seen to more quickly converge to its final estimates.

  \begin{figure}[t]
    \centering
    \begin{subfigure}[t]{0.49\textwidth}
      \centering
      \includegraphics[width=\textwidth]{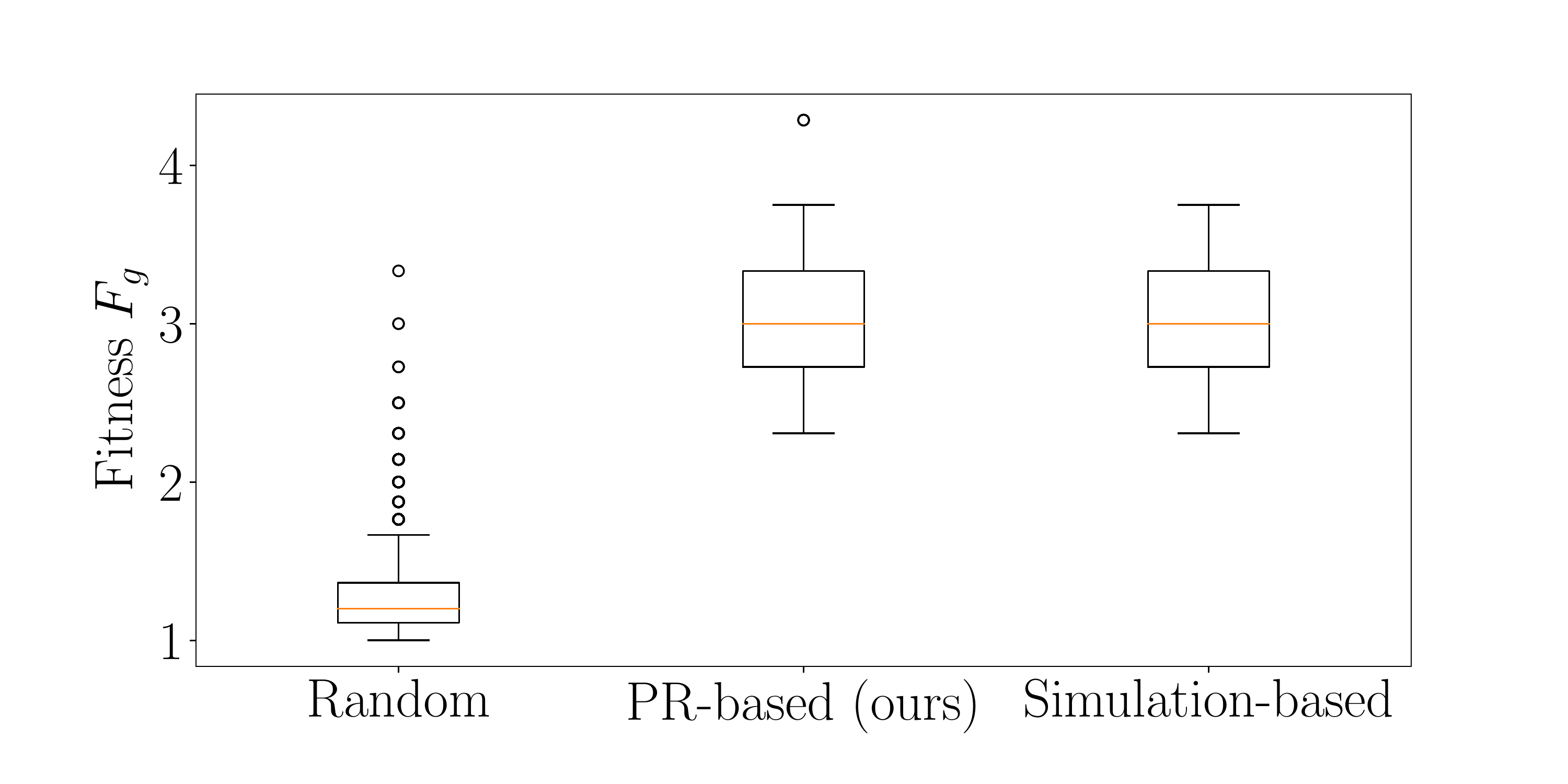}
      \caption{Case study A}
      \label{fig:benchmarkA}
    \end{subfigure}
    \hfill
    \begin{subfigure}[t]{0.49\textwidth}
      \centering
      \includegraphics[width=\textwidth]{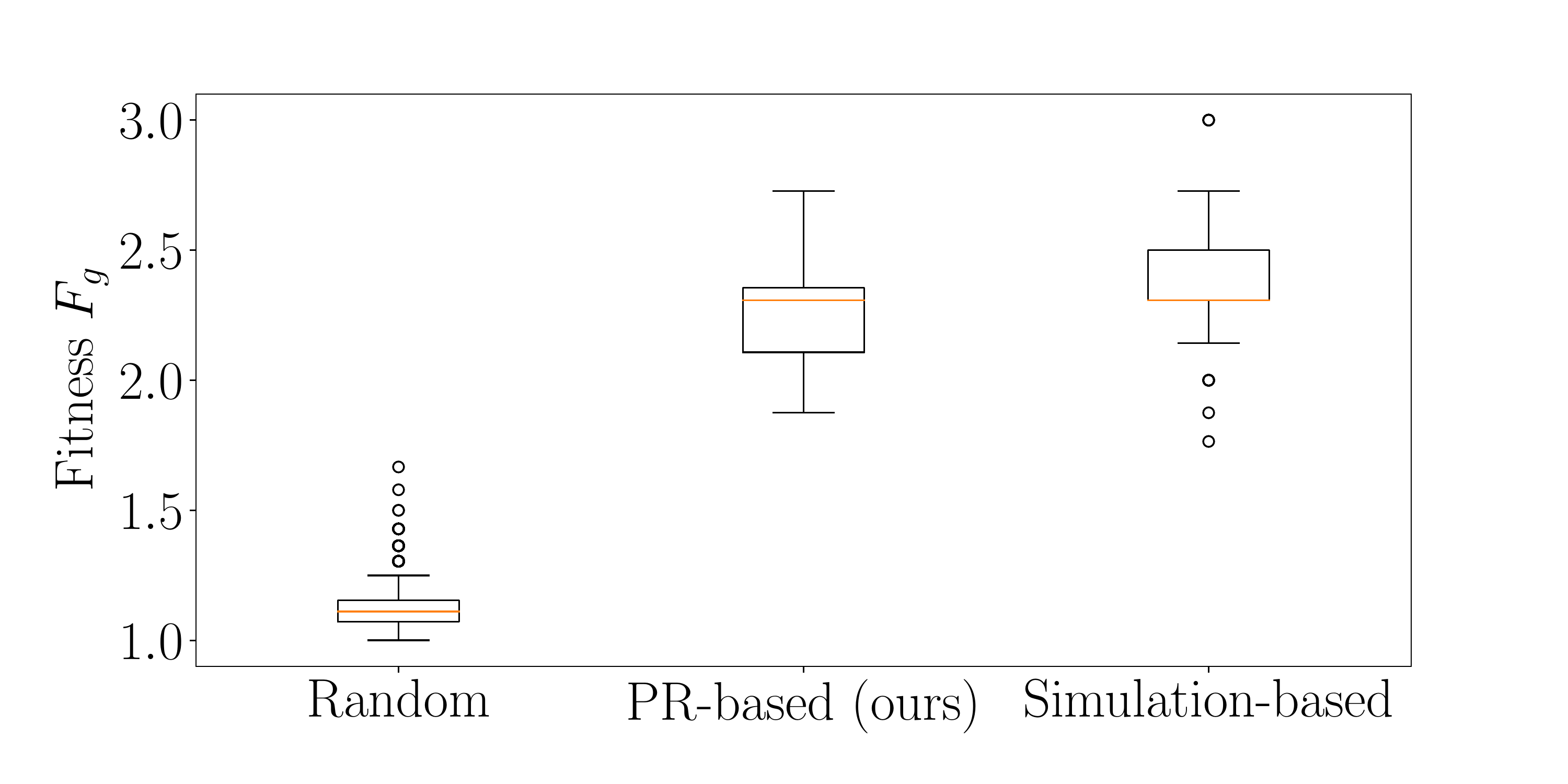}
      \caption{Case study B1}
      \label{fig:benchmarkB1}
    \end{subfigure}
    \begin{subfigure}[t]{0.49\textwidth}
      \centering
      \includegraphics[width=\textwidth]{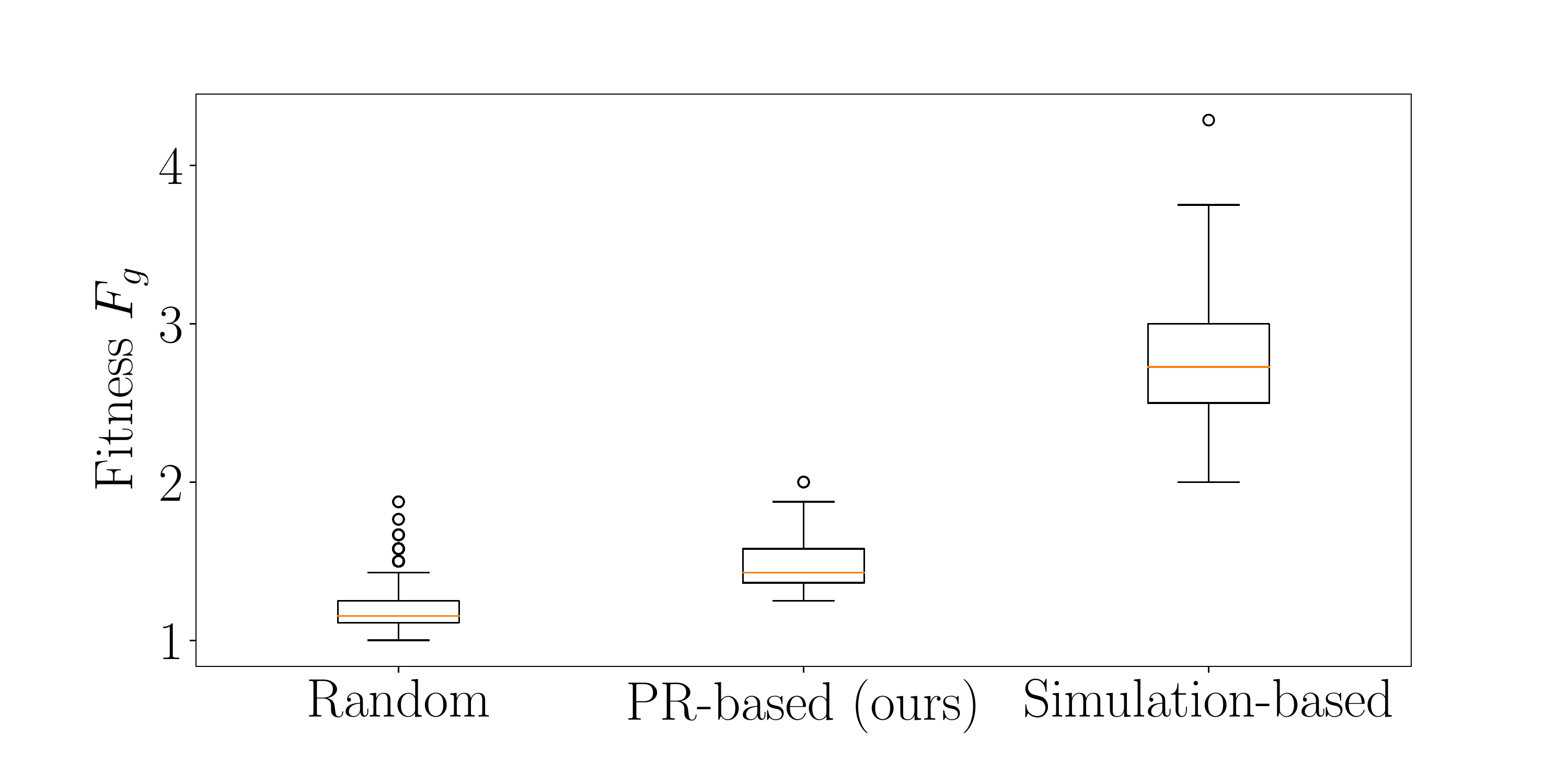}
      \caption{Case study B2}
      \label{fig:benchmarkB2}
    \end{subfigure}
    \hfill
    \begin{subfigure}[t]{0.49\textwidth}
      \centering
      \includegraphics[width=\textwidth]{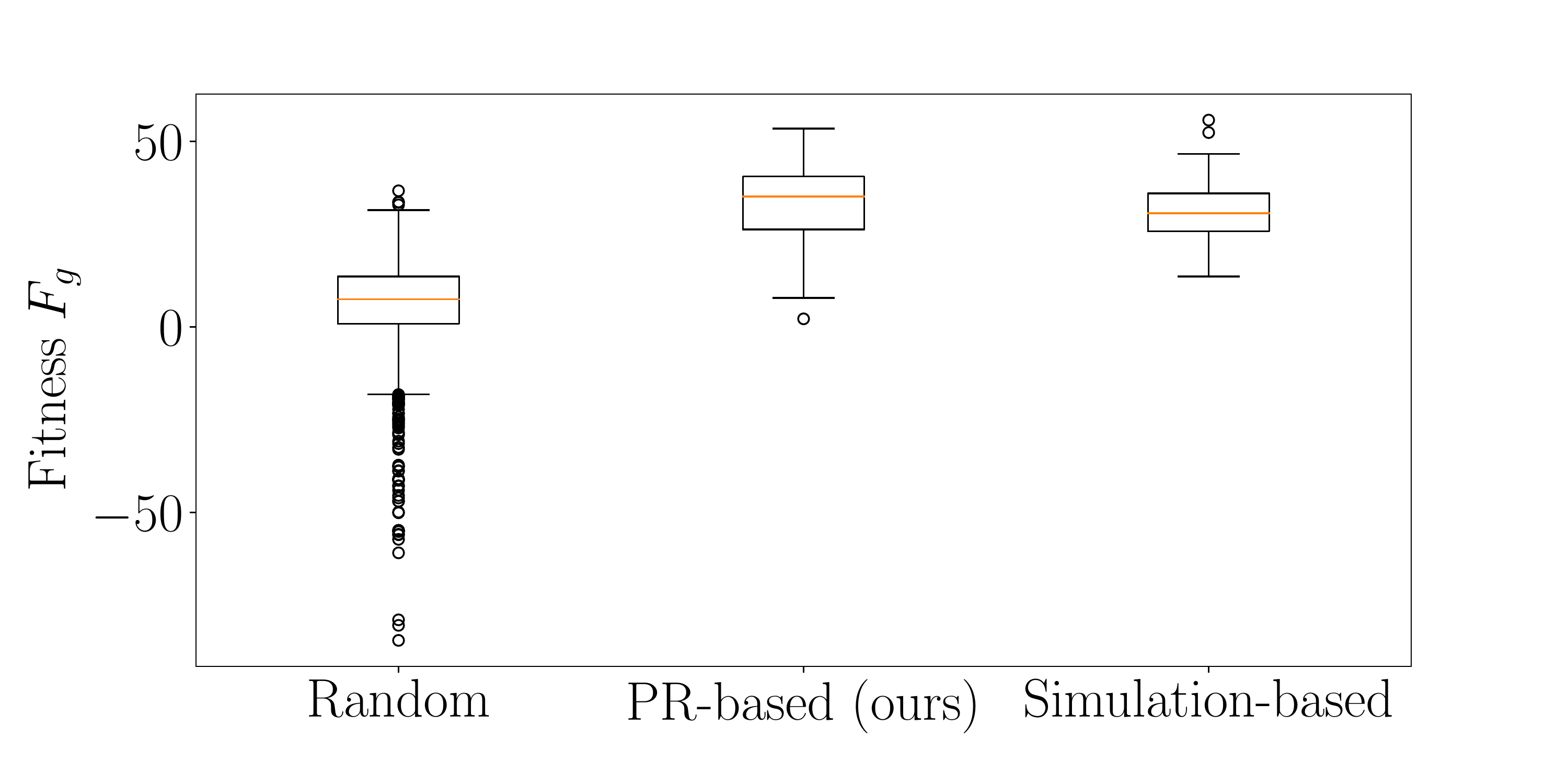}
      \caption{Case study C}
      \label{fig:benchmarkC}
    \end{subfigure}
    \caption{Box and whisker plots of the fitness achieved by the PageRank-optimized policy (denoted ``PR-based'') against the performance of random policies (denoted ``Random'') and optimized policies evolved using a conventional evolutionary algorithm that evaluates the performance via simulation (denoted ``Simulation-based'').}
    \label{fig:benchmark}
  \end{figure}

  \begin{figure}[h!]
    \centering
    \begin{subfigure}[t]{0.49\textwidth}
      \centering
      \includegraphics[width=\textwidth]{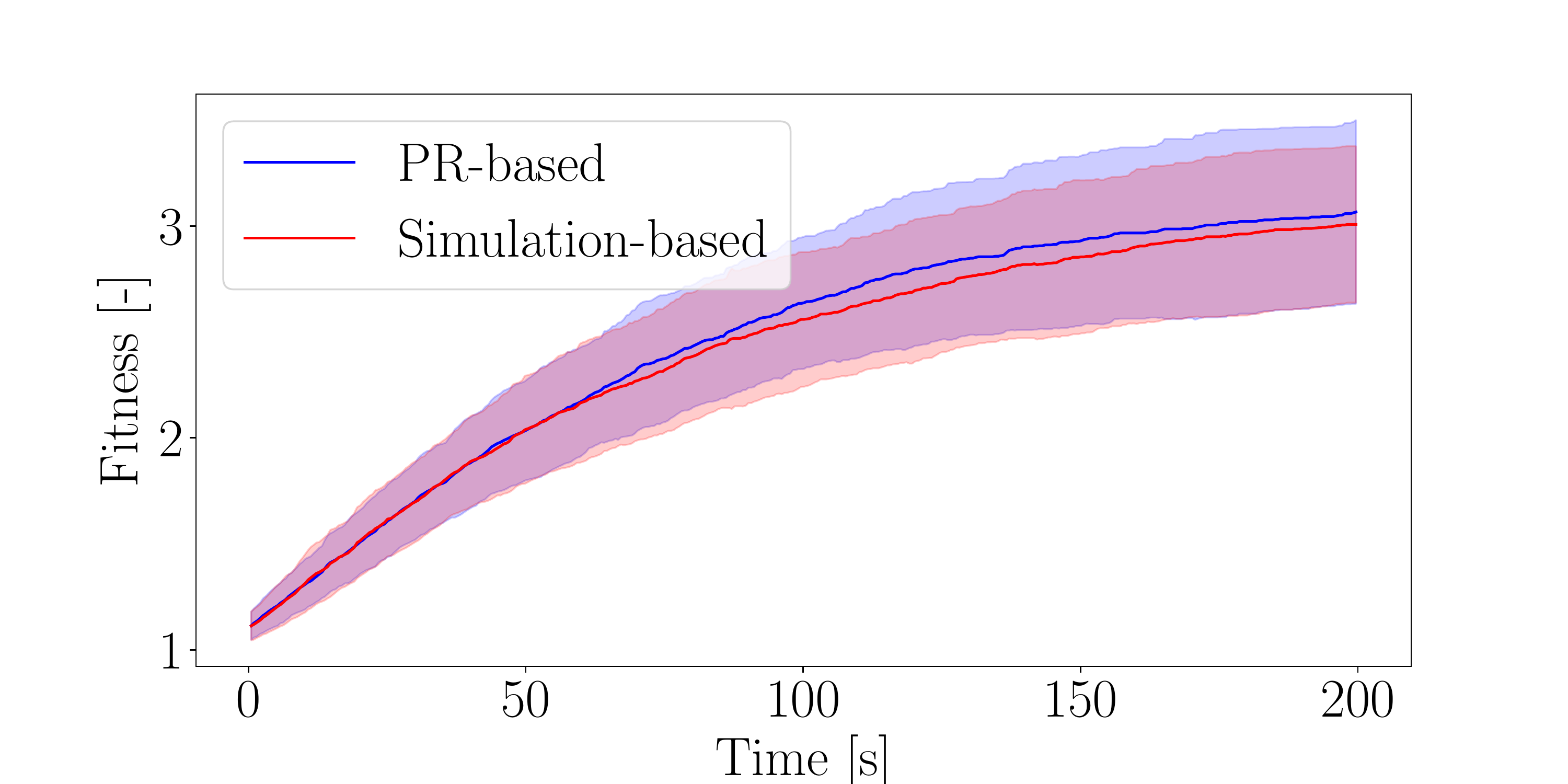}
      \caption{Case study A}
      \label{fig:fitnesslogsA}
    \end{subfigure}
    \hfill
    \begin{subfigure}[t]{0.49\textwidth}
      \centering
      \includegraphics[width=\textwidth]{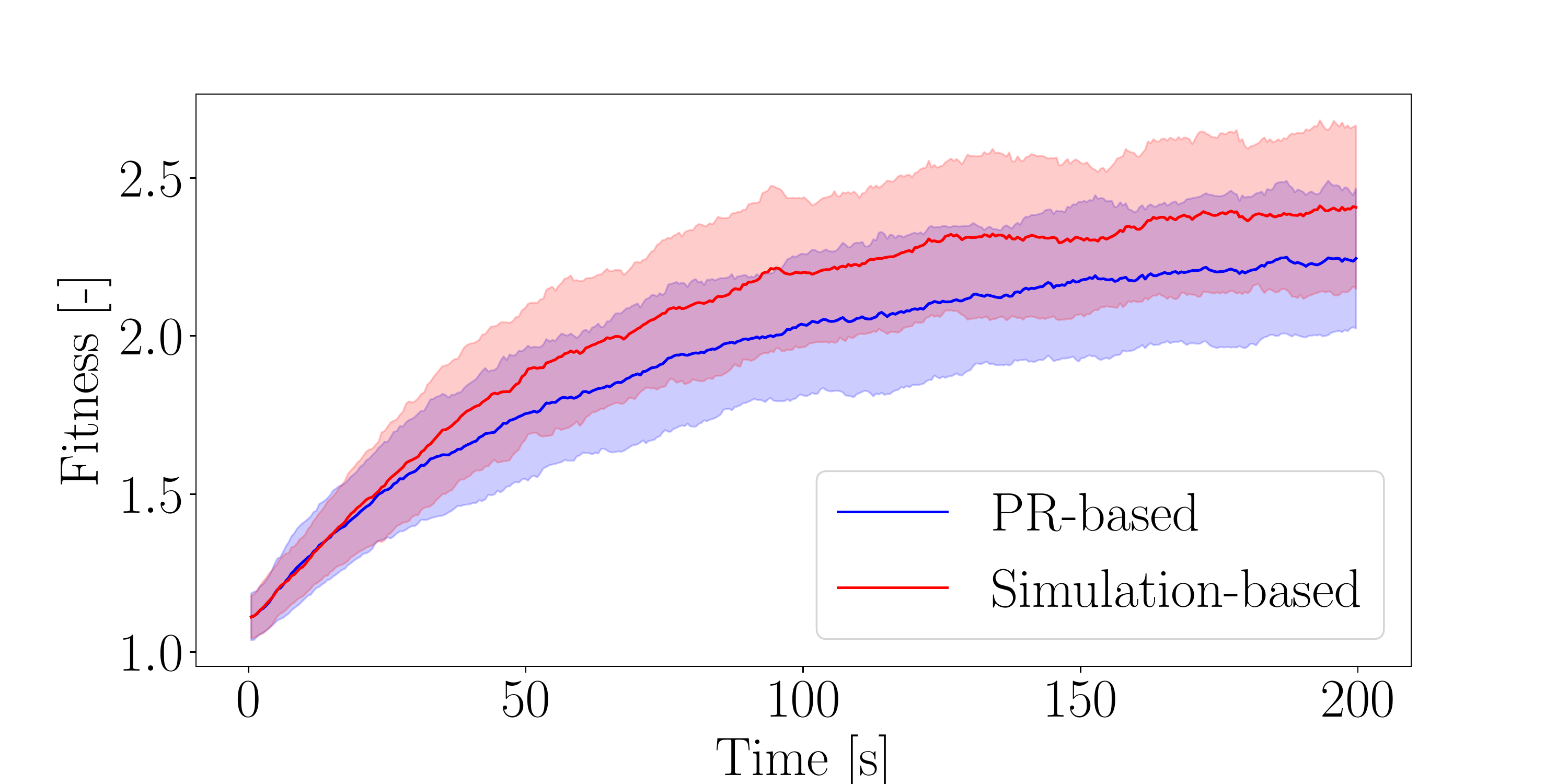}
      \caption{Case study B1}
      \label{fig:fitnesslogsB1}
    \end{subfigure}
    \begin{subfigure}[t]{0.49\textwidth}
      \centering
      \includegraphics[width=\textwidth]{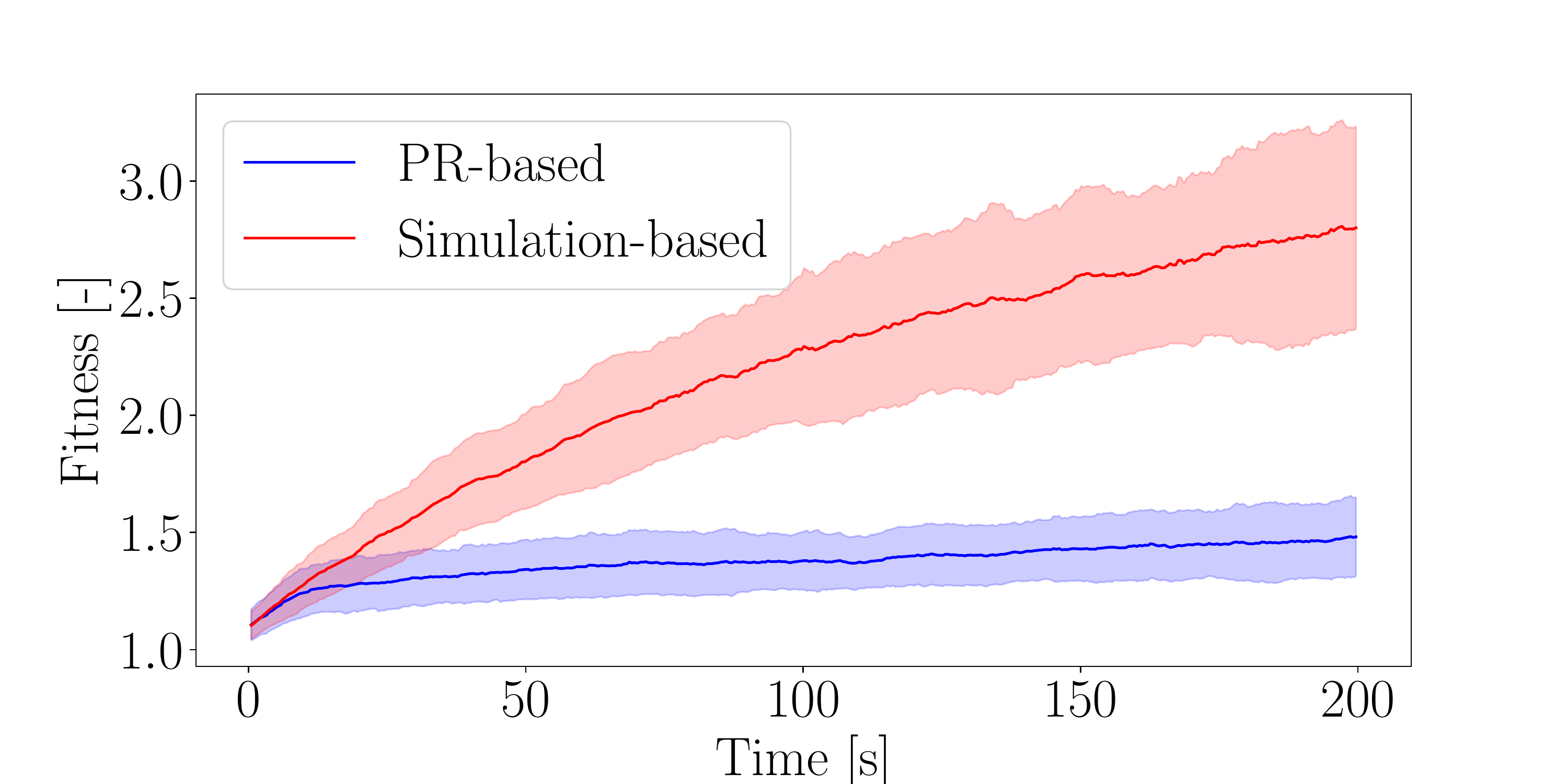}
      \caption{Case study B2}
      \label{fig:fitnesslogsB2}
    \end{subfigure}
    \hfill
    \begin{subfigure}[t]{0.49\textwidth}
      \centering
      \includegraphics[width=\textwidth]{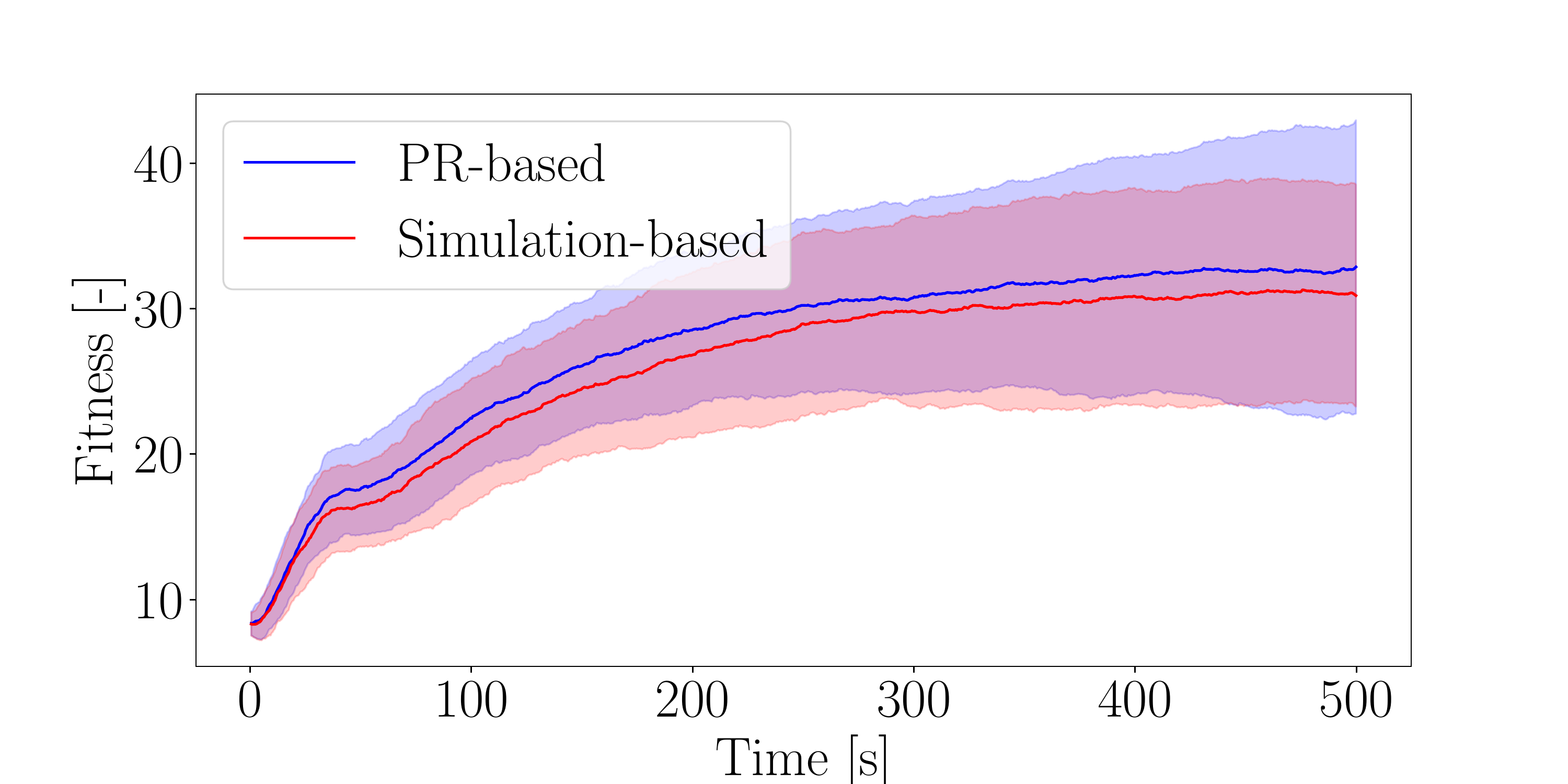}
      \caption{Case study C}
      \label{fig:fitnesslogsC}
    \end{subfigure}
    \caption{Fitness of 100 validation runs using the optimized policies, showing mean and standard deviation.
    As in \figref{fig:benchmark}, ``PR-based'' relates to our PageRank-based optimization, and ``Simulation-based'' refers to a conventional evolutionary algorithm that evaluates the performance using simulations.}
    \label{fig:fitnesslogs}
  \end{figure}

\subsection{Understandability and verification analysis}
\label{sec:results_verification}
  This section discusses how the two models help to better understand the behavior and performance of a swarm.
  The discussion is divided into four parts.
  First, we discuss the ability of Model 1 to find suitable desired local states.
  Then, the policy optimization procedure and its results are analyzed.
  This is followed by a discussion on the identified livelocks, and subsequently deadlocks, for each case study.

  \subsubsection{Global goal to local objectives}
  \label{sec:results_verification_model1}
    Model 1 is a function that relates the distribution of local states in the swarm to a global performance.
    When designing the swarm behavior, it is primarily used as a tool to extract the set of local states that are expected to lead to a high performance by the swarm.
    These local states provide direct insight into the individual goals of the robots and their goals within the swarm.
    Central to the swarming framework in this paper is that the swarm is optimized by balancing a greedy local behavior with multiple ``altruistic'' local objectives that benefit the swarm's performance.
    The desired states for each case study can be seen in green in \figref{fig:pagerankdiff}.
    For the aggregation tasks, it is possible to see that all local states with one or more neighbors are regarded as desired.
    This is a reasonable result, as robots learn to aggregate in small clusters.
    This behavior is also observed by the policy optimized via standard evolution, due to the time constraint on the simulations.
    For the foraging task, the set of desired local states include all local states with $f_g>0$, but also a few local states where $f_g<0$.
    This shows that the robot's goal is primarily to maximize the observed local difference, but also that the robots are ready to accept a small (perceived) loss in food at the nest, as this may still be beneficial for the global performance.

    \subsubsection{Policy optimization, analysis, and inspection}
      The transition model (Model 2) provides information about the relative likelihood of transitioning to each local state.
      Through this, we can study the estimated likelihood of reaching a particular local state.
      \figref{fig:pagerankoriginal} shows the PageRank score for each individual state, separated into the policy model
      ($\mathbf{H}_\pi$)
      and the environment model
      ($\mathbf{E}$).
      For the aggregation case studies (A, B1, and B2, in \figref{fig:pagerankoriginalA}, \figref{fig:pagerankoriginalB1}, and \figref{fig:pagerankoriginalB2}, respectively) it is possible to see that, adopting random policies, robots are most likely to be without neighbors.
      They are progressively less likely to finish with local states with more neighbors.
      For the foraging case study, it can be seen that motion by the robots is most likely to cause a positive observation (local states 15-30), whereas the impact of the environment is most likely to cause a negative observation (local states 0-15).
      The estimated effects of the optimized policy on the PageRank score are shown in \figref{fig:pagerankdiff} for all case studies.
      This PageRank analysis is useful to understand the rationale behind the optimized policies.
      Case study A (\figref{fig:pagerankoriginalA}) is seen to benefit from the environment transitions rather than motion, particularly for states with one or more neighbors.
      Case studies B1 (\figref{fig:pagerankoriginalB1}) and B2 (\figref{fig:pagerankoriginalB2}) show that statistically the system is very unlikely to reach any local state with neighbors.
      This shows that the robot must take certain actions to increase its likelihood of having neighbors, which was observed when we analyzed the optimized behavior.
      Case study C (\figref{fig:pagerankoriginalC}) shows a contrasting correlation between the action model and the environment model, implying that choosing to explore can increase the probability of having an observation with $f_g > 0$ (local states 15-30), with a significantly higher probability of being just above $f_g = 0$.
        \begin{figure}[t]
    \centering
    \begin{subfigure}[t]{0.49\textwidth}
      \centering
      \includegraphics[width=\textwidth]{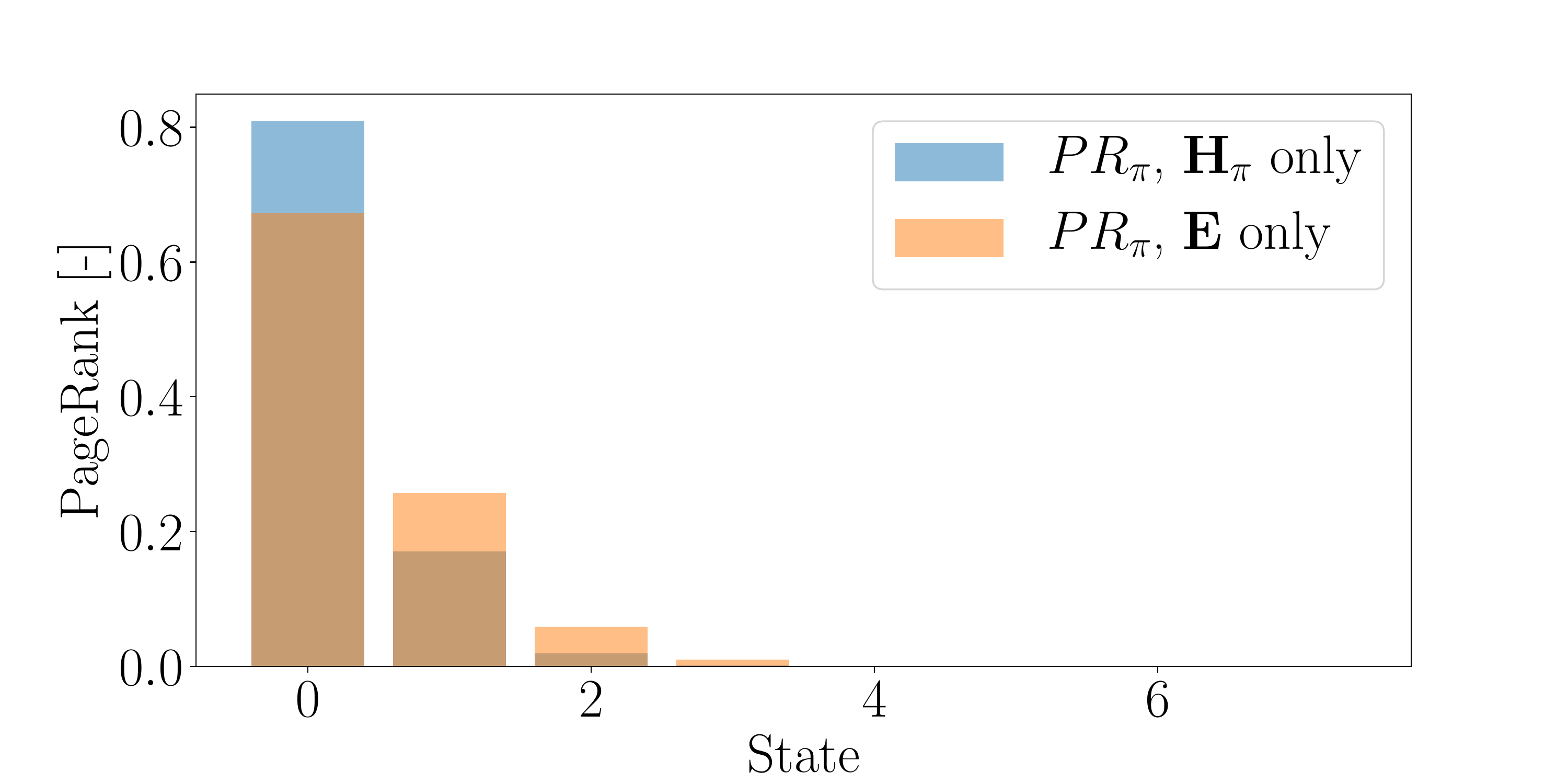}
      \caption{Case study A}
      \label{fig:pagerankoriginalA}
    \end{subfigure}
    \hfill
    \begin{subfigure}[t]{0.49\textwidth}
      \centering
      \includegraphics[width=\textwidth]{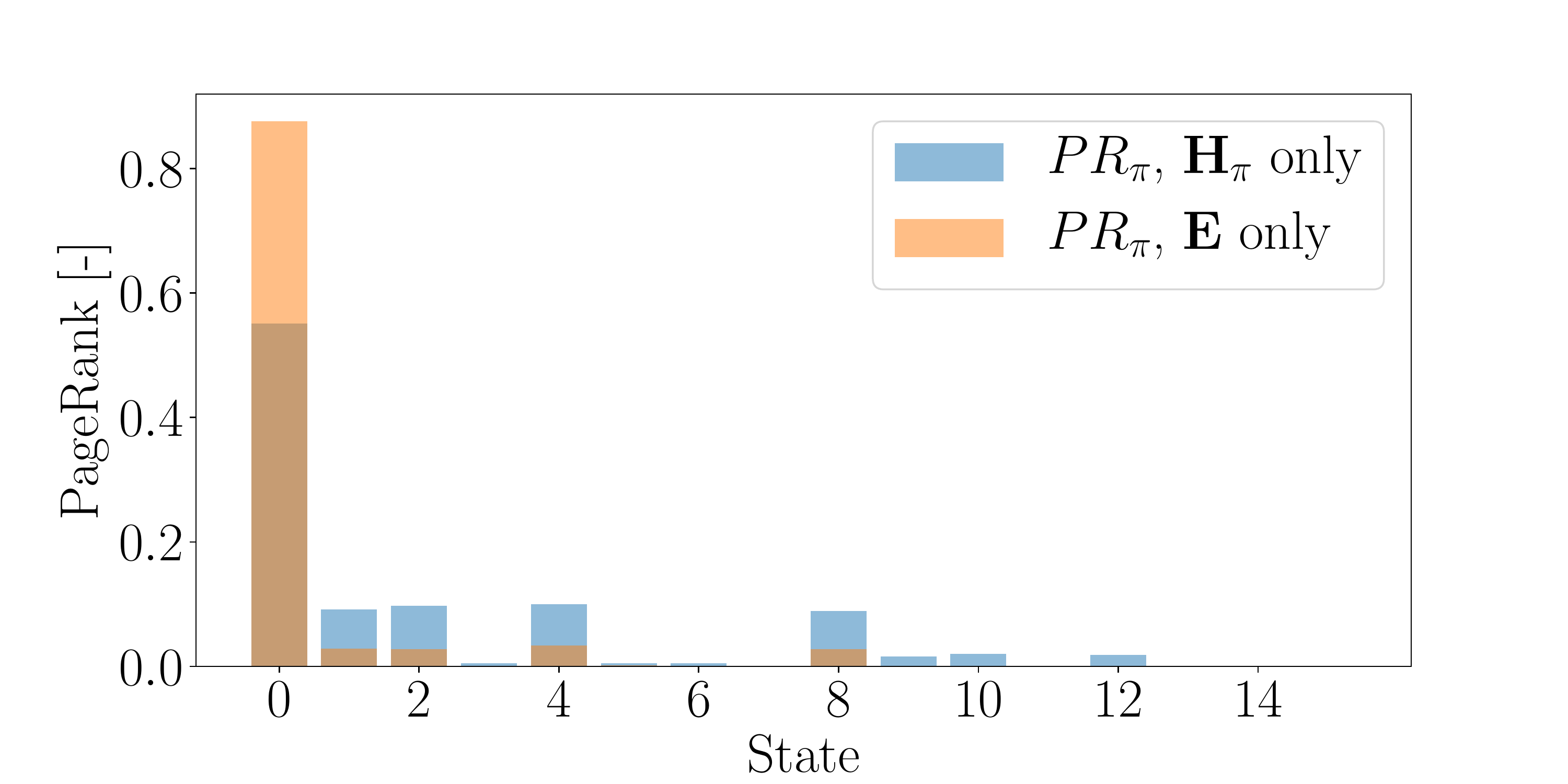}
      \caption{Case study B1}
      \label{fig:pagerankoriginalB1}
    \end{subfigure}
    \begin{subfigure}[t]{0.49\textwidth}
      \centering
      \includegraphics[width=\textwidth]{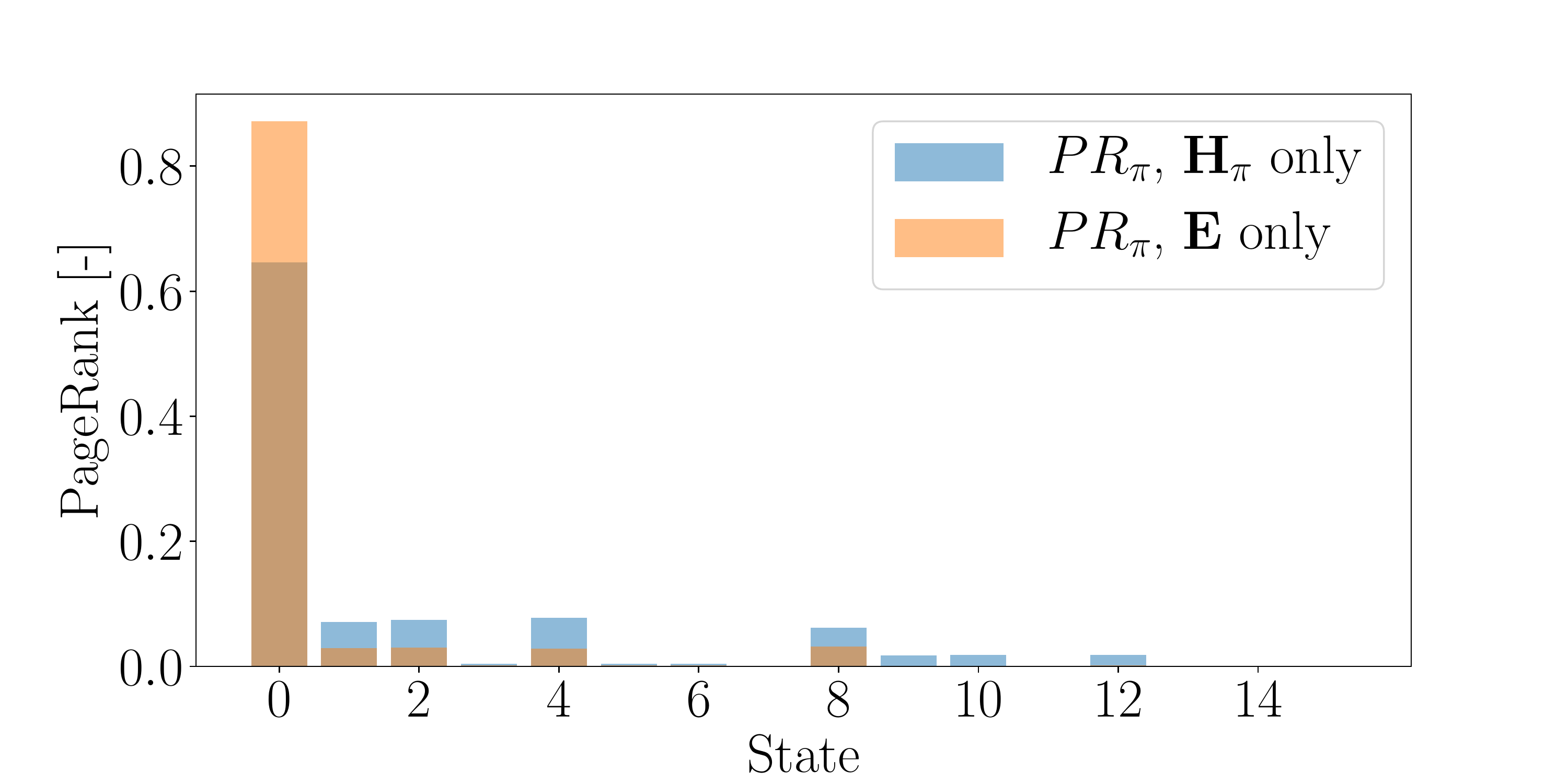}
      \caption{Case study B2}
      \label{fig:pagerankoriginalB2}
    \end{subfigure}
    \hfill
    \begin{subfigure}[t]{0.49\textwidth}
      \centering
      \includegraphics[width=\textwidth]{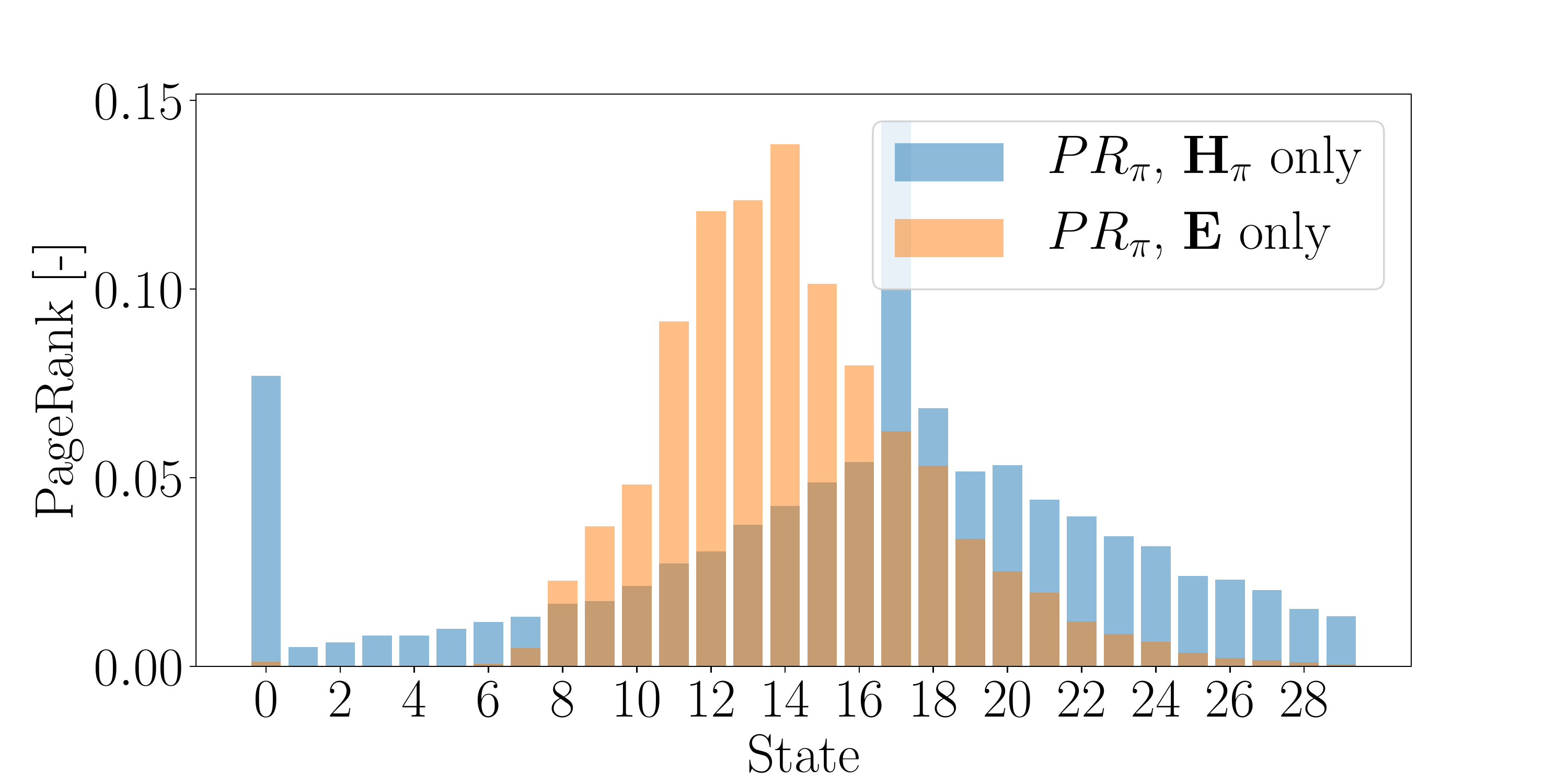}
      \caption{Case study C}
      \label{fig:pagerankoriginalC}
    \end{subfigure}
    \caption{PageRank scores of the active model $G_\mathcal{S}^{H_\pi}$ and the environment model $G_\mathcal{S}^{E}$, from models built using the training data sets.}
    \label{fig:pagerankoriginal}
  \end{figure}

  \begin{figure}[h!]
    \centering
    \begin{subfigure}[t]{0.49\textwidth}
      \centering
      \includegraphics[width=\textwidth]{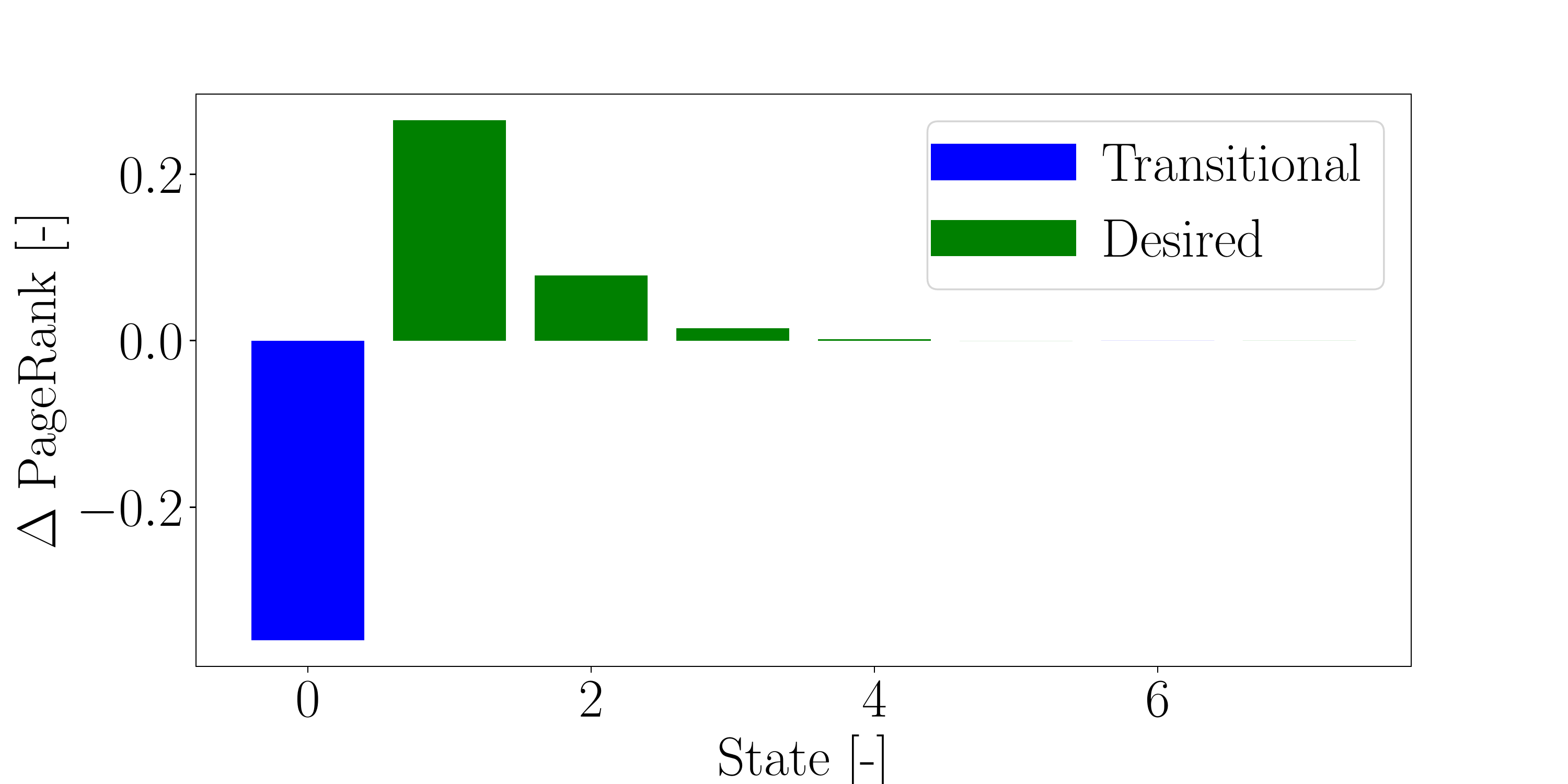}
      \caption{Case study A}
      \label{fig:pagerankdiffA}
    \end{subfigure}
    \hfill
    \begin{subfigure}[t]{0.49\textwidth}
      \centering
      \includegraphics[width=\textwidth]{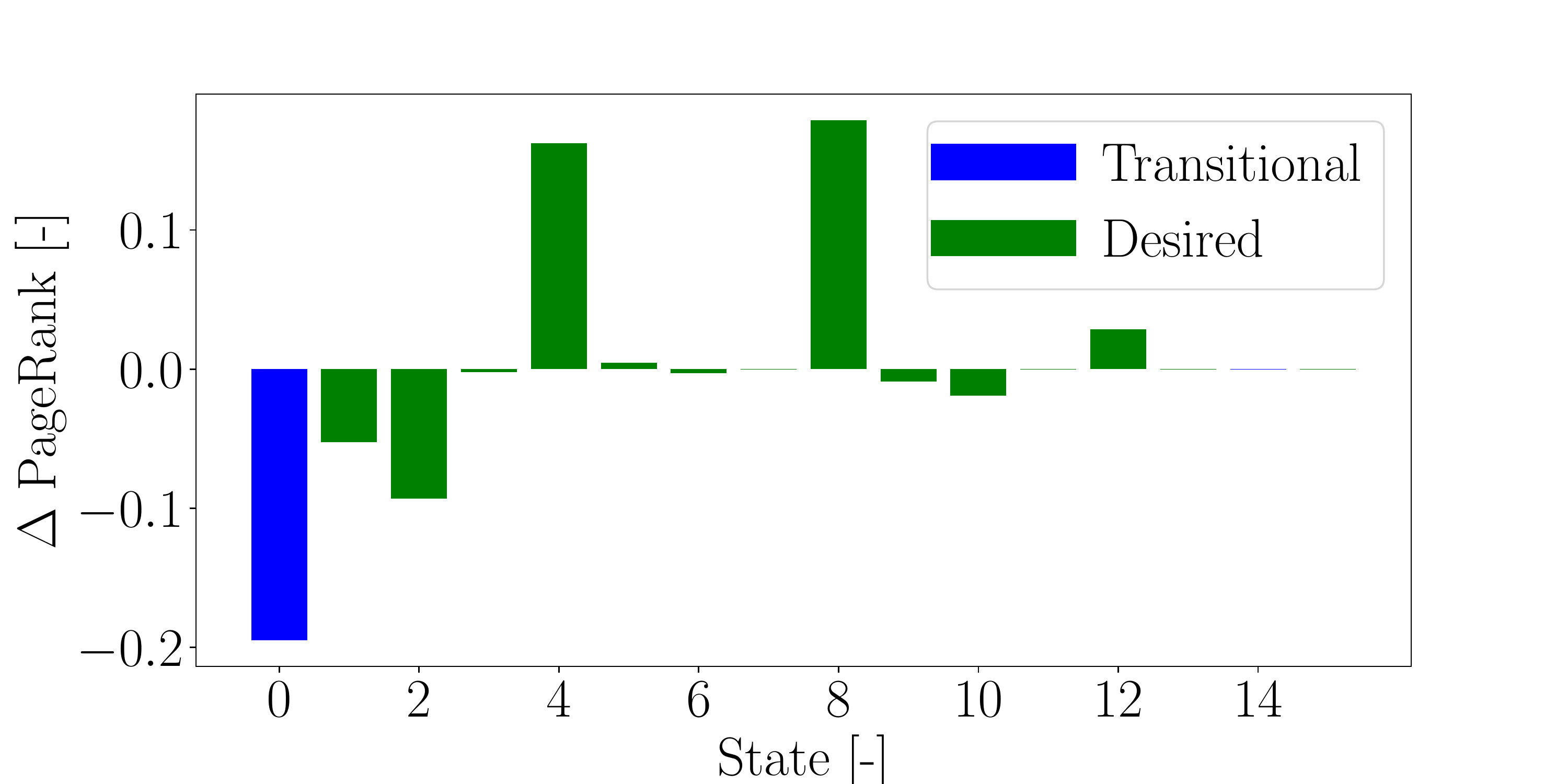}
      \caption{Case study B1}
      \label{fig:pagerankdiffB1}
    \end{subfigure}
    \begin{subfigure}[t]{0.49\textwidth}
      \centering
      \includegraphics[width=\textwidth]{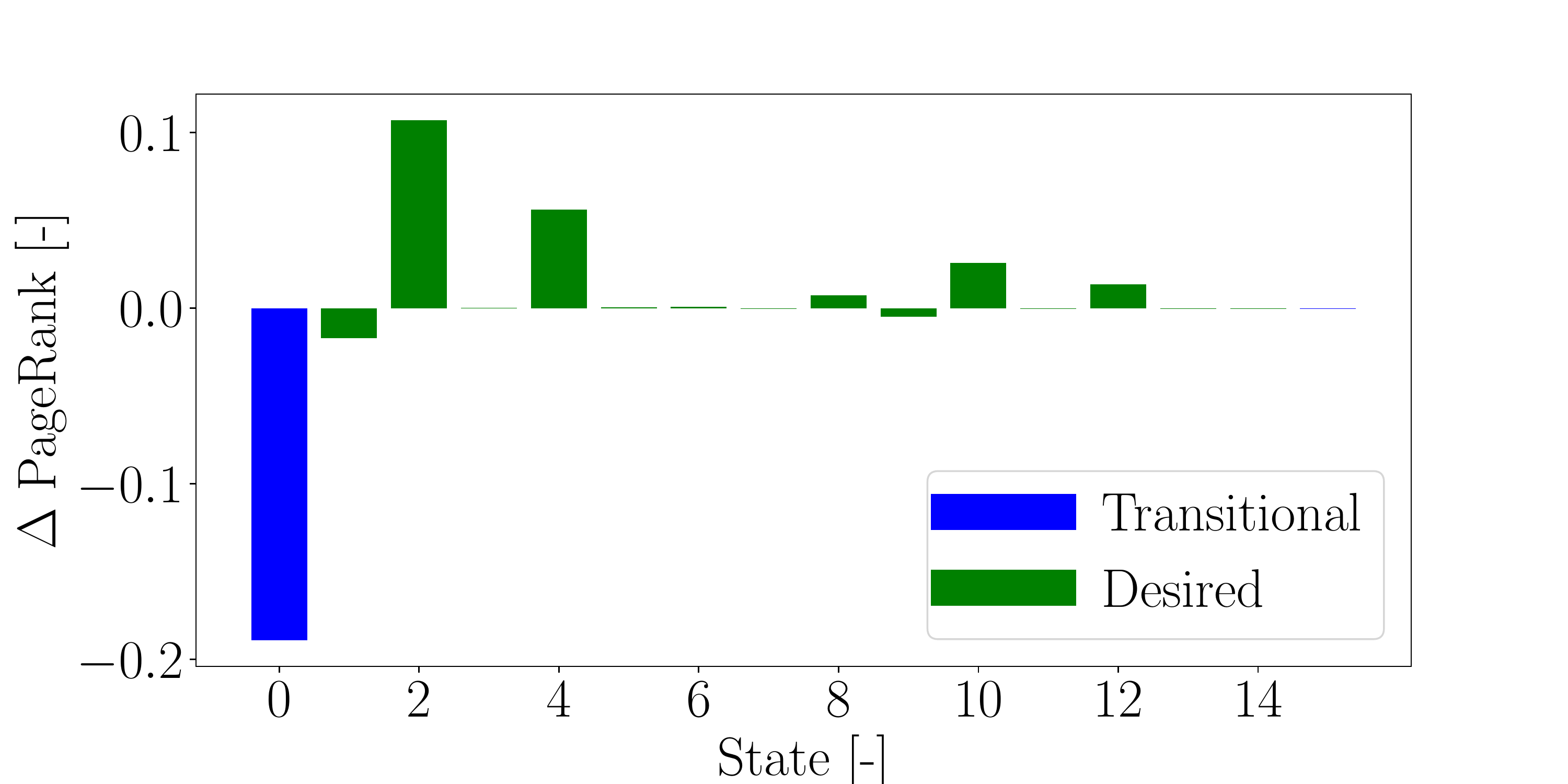}
      \caption{Case study B2}
      \label{fig:pagerankdiffB2}
    \end{subfigure}
    \hfill
    \begin{subfigure}[t]{0.49\textwidth}
      \centering
      \includegraphics[width=\textwidth]{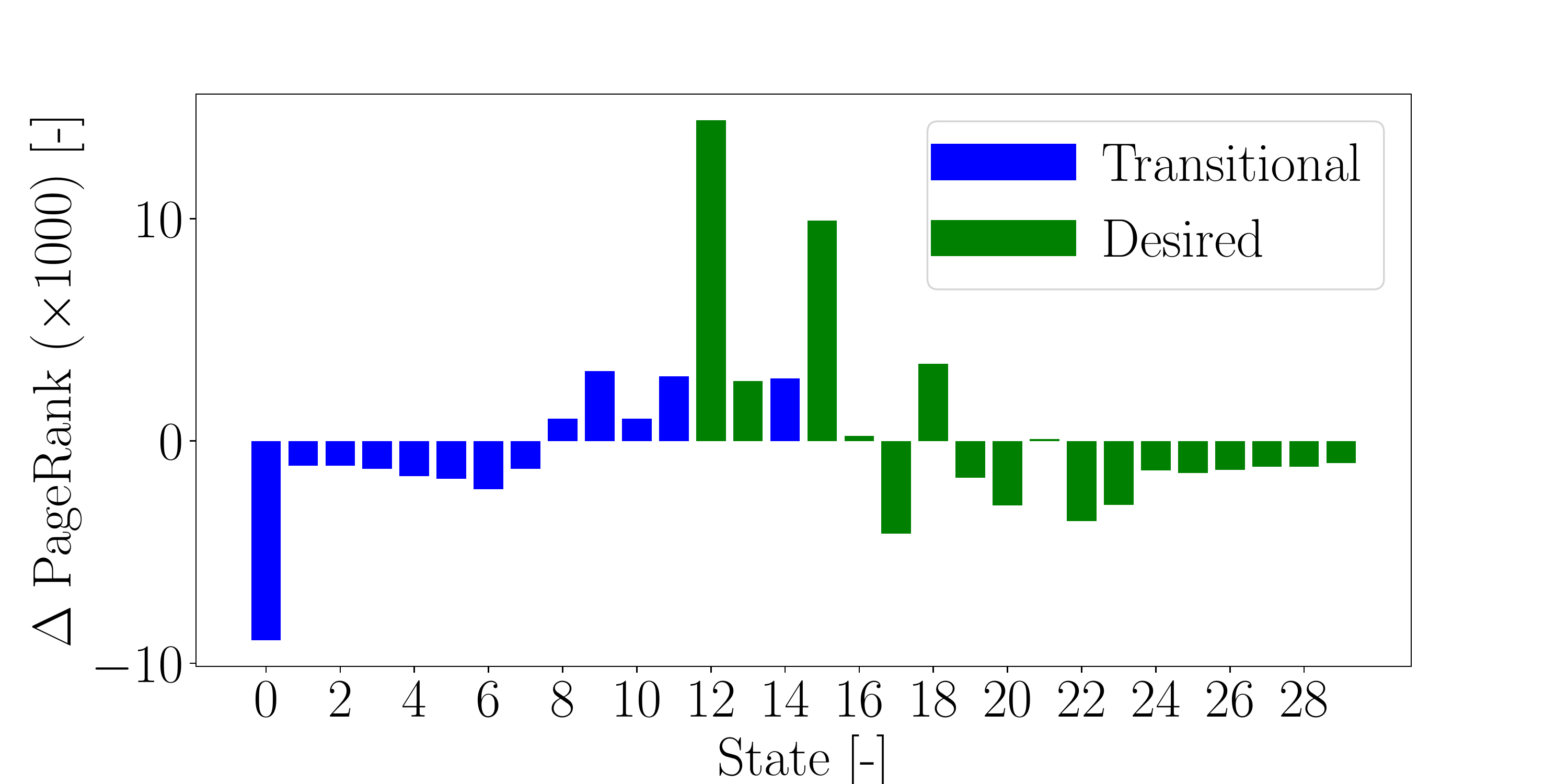}
      \caption{Case study C}
      \label{fig:pagerankdiffC}
    \end{subfigure}
    \caption{Difference in PageRank scores between the models learned using the training data sets (as in \figref{fig:model}), and the estimated model at the end of the policy optimization, given $\pi^\star$.
    We can observe how generally the likelihood for states in the set $\mathcal{S}_{des}$ (``desired'' states) increases while the likelihood of the other states (``transitional'' states) decreases.}
    \label{fig:pagerankdiff}
  \end{figure}
    \subsubsection{Analysis of potential deadlocks}
      Deadlocks can be identified by studying the set of static local states as detailed in \secref{sec:verification_deadlocks}, analyzed here for the four different case studies that have been designed.

      \begin{itemize*}
        
        \item For case study A, the static local state set is $\mathcal{S}_{static}=\{s_1,s_2,s_3,s_4\}$, corresponding to observations with one to four neighbors.
        This means that small aggregate clusters will form, due to the limitation of the robot's sensing.
        This can also be seen to be a likely outcome via \figref{fig:pagerankoriginalA}.
        Note that a similarly greedy behavior was also observed in the policy that was evolved using a standard methodology, due to the time constraint of 200 seconds on each simulation.

        \item For case studies B1 and B2, no direct deadlock could be identified directly from the model as the set $\mathcal{S}_{static} = \emptyset$.
        However, the PageRank analysis of the estimated model reveals that the probability of creating small clusters is high in relation to the probability of creating larger clusters.
        This can be seen in \figref{fig:pagerankoriginal} and \figref{fig:pagerankdiff}, by the significantly larger (change in) PageRank score for those local states (local states $s_1$, $s_2$, $s_4$, and $s_8$, which only feature one filled sector) in relation to the rest.
        
        \item For case study C, the foraging task, deadlock conditions were found from the set $\mathcal{S}_{static}=\{s_0,s_2,s_3,s_4,s_6,s_9,s_{12}\}$.
        If all robots feature these local states, according to model tuned to the optimized policy, they will not take actions (i.e., explore).
        In practice, however, these local states are not truly static.
        They are mitigated by the environment, which will cause the local states to change as the robots reevaluate food at the nest.
        The only true identified deadlock is then the one where all robots have a local state $s_0$, and the swarm will not recover.
      \end{itemize*}

    \subsubsection{Analysis of potential livelocks}

      \tabref{tab:counterexamples} shows the result of checks on the conditions of Propositions \ref{prop:1} and \ref{prop:2} to determine potential livelocks from the transition model with policy $\pi^\star$.
      The results are presented in \tabref{tab:counterexamples} and discussed in detail below.
      
      \begin{itemize*}
        \item For case study A, the results of which are given in \tabref{tab:counterexamplesA}, a potential livelock could be identified due to a missing path from local state $s_0$ (no neighbors) to local states $s_3$, $s_4$, and $s_5$ (three, four, and five neighbors, respectively).
        This means that, in the event that it is necessary for one or more robots in the swarms to achieve one of these local states in order to complete an aggregate, then this may not happen and the swarm may cycle between and/or through less successful global states.

        \item For case studies B1 and B2, no potential livelocks have been identified through the conditions.
        This is because all local states have the potential to transition sufficiently and provide sufficient motion in the swarm.

        \item For case study C, local states $s_0$, $s_2$, $s_3$, $s_4$, $s_6$, and $s_9$ do not feature paths to local states $s\in\mathcal{S}_{des}$ as a result of exploration outside of the nest.
        However, the environment changes are sufficient to avoid livelocks.
        The environment changes in two ways: 
        1) one or more other robots arrive at the nest with food, such that the robots waiting at the nest perceive a difference in food between their reassessment intervals, or 
        2) the robots at the nest consume food, which also leads to perceiving a difference in food between reassessments.
        Both have the potential of liberating the swarm from a livelock.

      \end{itemize*}

  \begin{table}[t]
  \caption{
    Results of check for the conditions in Propositions \ref{prop:1} and \ref{prop:2} following the policy optimization.
    No bounds were used for the probabilities in the policy in order to avoid deadlocks and/or livelocks.
    P 1 refers to the condition of Proposition \ref{prop:1}. 
    P 2.1 and P 2.2 refer to the conditions of Proposition \ref{prop:2}, in order.
  }
  \label{tab:counterexamples}
  \begin{subtable}[t]{\linewidth}
    \centering
    \begin{tabularx}{\textwidth}{| p{0.15\textwidth} | p{0.1\textwidth} | p{0.2\textwidth} | X |}
      \hline \textbf{Condition} & \textbf{Outcome} & \textbf{Counterexamples} & \textbf{Impact} \\ \hline
      {P 1} & 
      False & 
      Missing paths:\newline
      (0, \{3,4,5\})
      & A robot cannot transition from having 0 neighbors to having 3 or more neighbors. This means that, in the event that this is necessary to achieve an equilibrium state at the global level (where having one or two neighbors is not possible), there may be a livelock situation.\\\hline
      {P 2.1} & True & - & - \\\hline
      {P 2.2} &
      False & 
      Missing paths: \newline
      (0, \{3,4,5\}) & Robots can transition to active local states, but a robot with a local state $s_0$ will not transition to local states with 3 or more neighbors. As per the result of P 1, this has the potential to create a livelock at the global level. \\
      \hline
    \end{tabularx}
    \caption{Case study A, 
    $\mathcal{S}_{static}=\{s_1,s_2,s_3,s_4\}$, 
    $\mathcal{S}_{des}=\{s_1,s_2,s_3,s_4,s_5\}$. 
    No information on $s_6$ and $s_7$.}
    \label{tab:counterexamplesA}
  \end{subtable}
  \begin{subtable}[t]{0.45\textwidth}
    \centering
    \begin{tabularx}{\textwidth}{| p{0.33\textwidth} | X|}
      \hline
      \textbf{Condition} & \textbf{Outcome} \\
      \hline
      {P 1  } & True \\
      {P 2.1} & True \\
      {P 2.2} & True \\
      \hline
    \end{tabularx}
    \caption{Case study B1, 
    $\mathcal{S}_{static} = \emptyset$, 
    $\mathcal{S}_{des}=\{s_{1-15}\}$.}
    \label{tab:counterexamplesB1}
  \end{subtable}
  \hfill
  \begin{subtable}[t]{0.45\textwidth}
    \centering
    \begin{tabularx}{\textwidth}{| p{0.33\textwidth} | X |}
      \hline
      \textbf{Condition} & \textbf{Outcome}\\
      \hline
      {P1  } & True \\
      {P2.1} & True \\
      {P2.2} & True \\
      \hline
    \end{tabularx}
    \caption{Case study B2, 
    $\mathcal{S}_{static} = \emptyset$, 
    $\mathcal{S}_{des}=\{s_{1-14}\}$.}
    \label{tab:counterexamplesB2}
  \end{subtable}
    \begin{subtable}[t]{\textwidth}
    \centering
    \begin{tabularx}{\textwidth}{| p{0.15\textwidth} | p{0.1\textwidth}| p{0.2\textwidth} | X |}
      \hline
      \textbf{Condition} & \textbf{Outcome} & \textbf{Counterexamples} & \textbf{Impact} \\
      \hline
      P 1 & 
      False & 
      Missing paths:\newline
      (0, \{12,13,15-29\}), \newline
      (2, \{12,13,15-29\}),\newline
      (3, \{12,13,15-29\}),\newline
      (4, \{12,13,15-29\}),\newline
      (6, \{12,13,15-29\}),\newline
      (9, \{12,13,15-29\})&
      Local states $s_0$, $s_2$, $s_3$, $s_4$, $s_6$, and $s_9$ do not feature paths to $s\in\mathcal{S}_{des}$ as a result of exploration outside of the nest.
      \\
      P 2.1 & 
      True & 
      - & -\\
      P 2.2 & 
      True & 
      - & -\\
      \hline
    \end{tabularx}
    \caption{Case study C, 
    $\mathcal{S}_{static}=\{s_0,s_2,s_3,s_4,s_6,s_9,s_{12}\}$, 
    $\mathcal{S}_{des}=\{s_{12},s_{13},s_{15-29}\}$.}
    \label{tab:counterexamplesC}
  \end{subtable}
  \end{table}

\subsection{Hybrid approach with an evolutionary algorithm}
\label{sec:evo}

  In this section, the model-based framework is used \emph{together} with an evolutionary algorithm that evaluates the performance of the swarm by means of simulation.
  Both models required in the framework are trained directly using the simulation runs that are used to evaluate the population of policies.
  At the end of each generation, the models are trained and a policy, optimized via the PageRank fitness, is generated using these models.
  One member of the population holds the model-based policy at all times.
  Using this hybrid model-augmented approach, the evolution is found to be more efficient.
  In addition, the models that are obtained can be a useful side product to investigate and analyze the behavior of the swarm.

  The performance of the hybrid setup is benchmarked against a standard evolutionary algorithm.
  All evolutions are implemented using the DEAP package, using a population of 100, each evaluated 5 times, in order to assess the mean performance.
  The only difference between the two is the inclusion of the model-based approach as described in the paragraph above.
  The results for case studies A, B1, B2, and C are shown in \figref{fig:evoA}, \figref{fig:evoB1}, \figref{fig:evoB2}, and \figref{fig:evoC}, respectively.
  For all case studies, it can be see that the model-based version approaches the optimal performance more quickly than evolution alone.
  The model-based approach is most impactful in the early stages.
  Note that this is also the case for case study B2, for which the standalone policy optimization results were found to be less successful.

  \begin{figure}[t]
    \centering
    \begin{subfigure}[t]{0.49\textwidth}
      \centering
      \includegraphics[width=\textwidth]{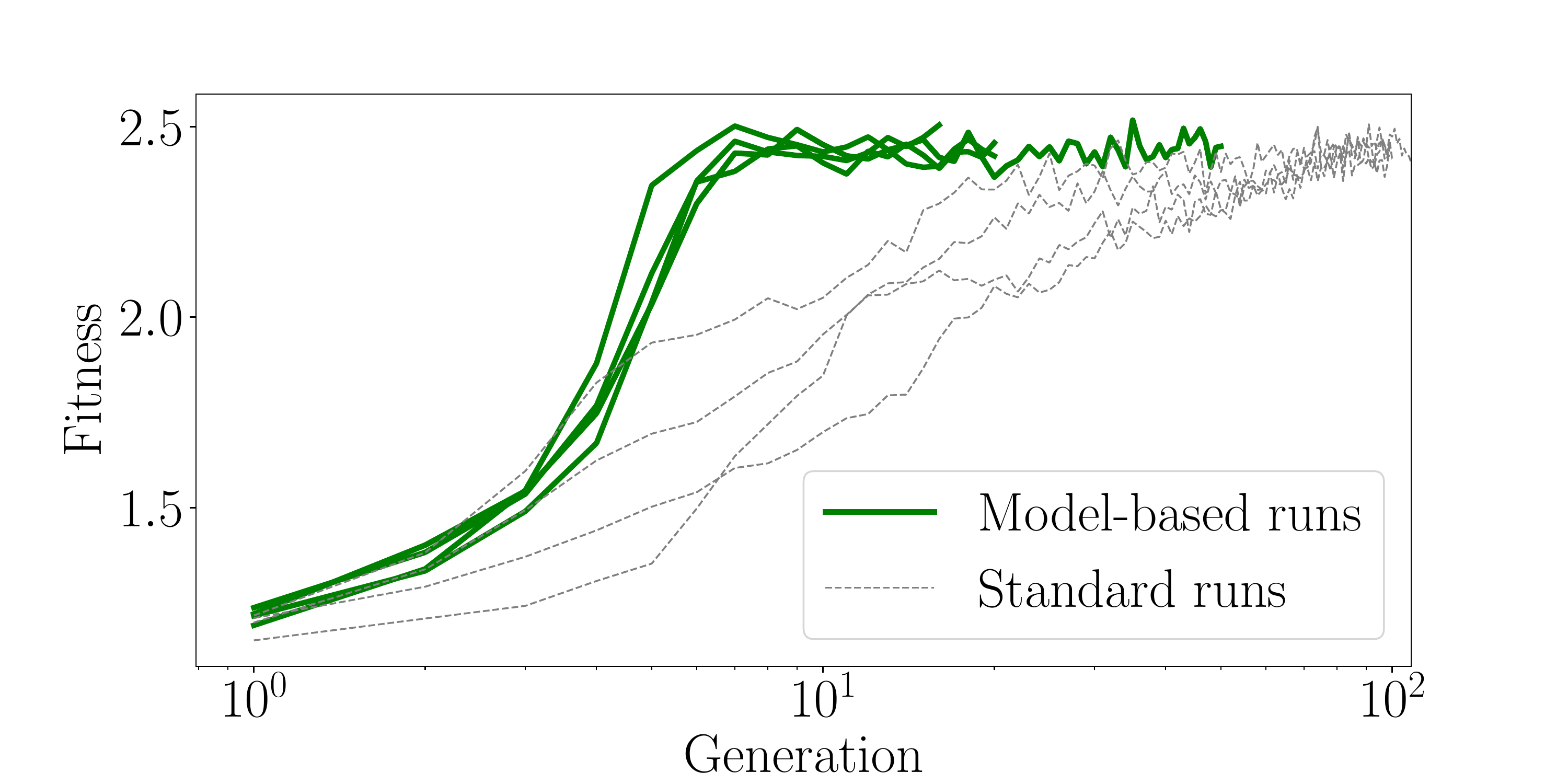}
      \caption{Case study A}
      \label{fig:evoA}
    \end{subfigure}
    \hfill
    \begin{subfigure}[t]{0.49\textwidth}
      \centering
      \includegraphics[width=\textwidth]{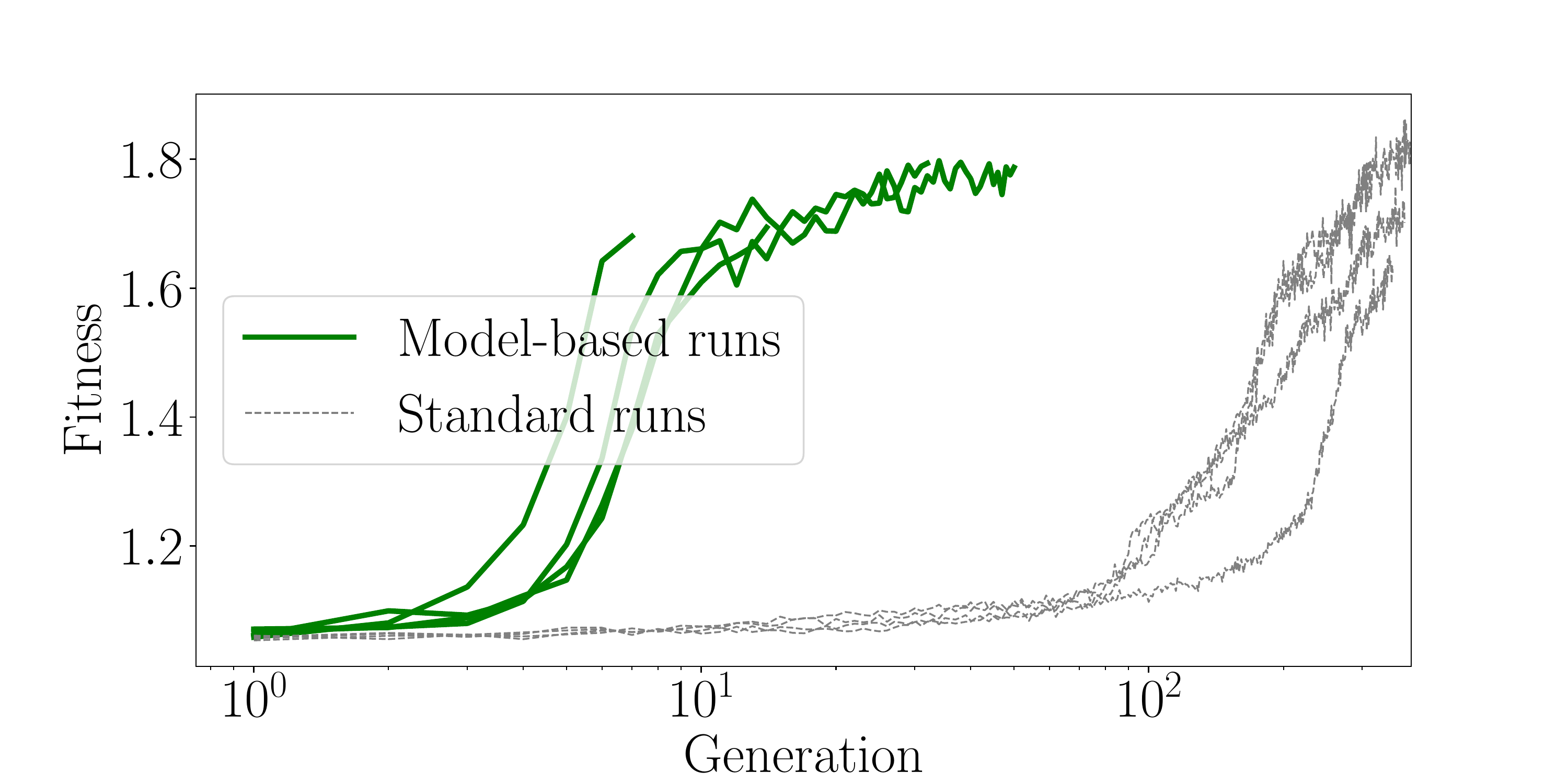}
      \caption{Case study B1}
      \label{fig:evoB1}
    \end{subfigure}
    \begin{subfigure}[t]{0.49\textwidth}
      \centering
      \includegraphics[width=\textwidth]{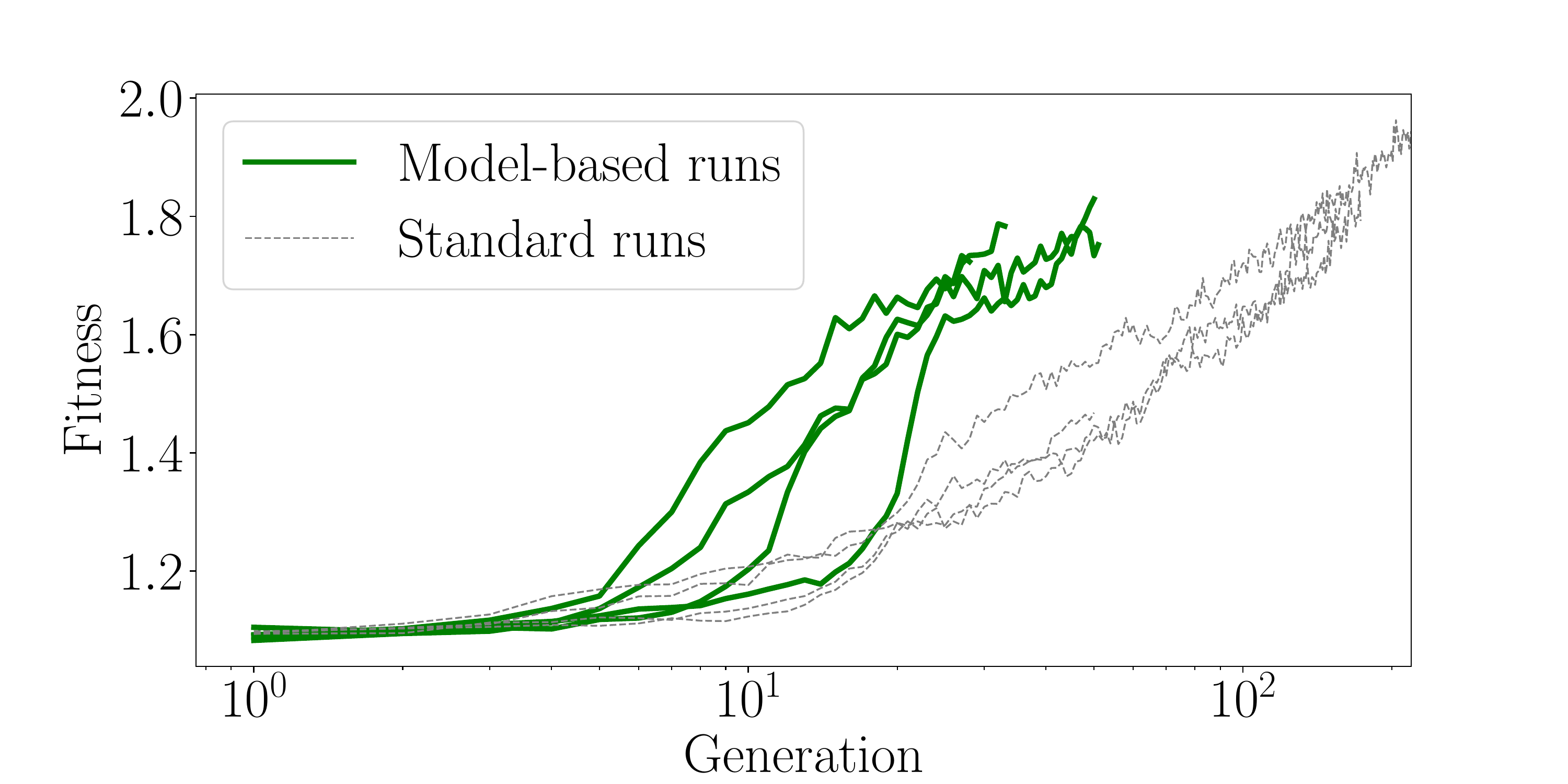}
      \caption{Case study B2}
      \label{fig:evoB2}
    \end{subfigure}
    \hfill
    \begin{subfigure}[t]{0.49\textwidth}
      \centering
      \includegraphics[width=\textwidth]{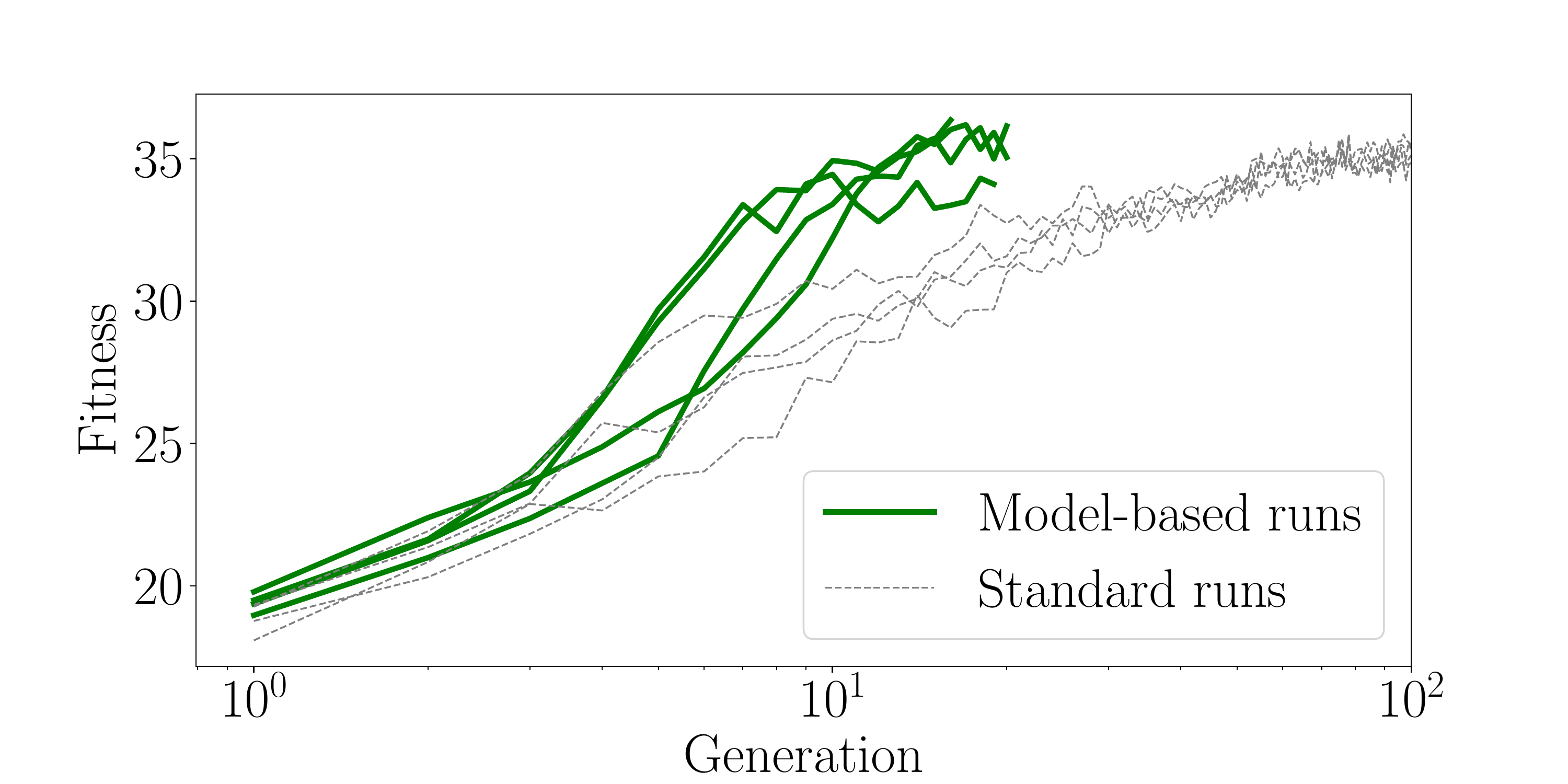}
      \caption{Case study C}
      \label{fig:evoC}
    \end{subfigure}
    \caption{Evolution of optimal policies using a standard evolutionary approach and a model-based hybrid evolutionary approach described in \secref{sec:evo}.
    In all cases, the model-based evolutionary approach outperforms the learning rate of the standard approach, all other parameters being equal. Note that the horizontal axis is logarithmic.}
    \label{fig:evo}
  \end{figure}

\subsection{Framework implementation with online learning}
\label{sec:online}
  This section briefly explores how the proposed framework can also be combined with online learning.
  The transfer learning properties of Model 1 allow the extraction of an adequate set $\mathcal{S}_{des}$ for a given task.
  Given $\mathcal{S}_{des}$, during the online learning phase, the robots only need to estimate the local state transfer model (Model 2) and optimize their policy accordingly.
  As the transition model is local to the robots, it can be estimated online by each robot.
  Two variations are explored:
  \begin{enumerate*}
    \item \emph{Fully local heterogeneous models.}
    In this case, each robot estimates the transition model based on its own experiences.
    In turn, this leads to heterogeneous policies where each robot attempts to achieve the best performance according to its own transition model.
    \item \emph{Shared local model.}
    In this case, all robots combine their experiences into a shared transition model.
    This leads to homogeneous behaviors as all robots base their policy on the same model.
  \end{enumerate*}

  The approach was implemented and studied for case studies A and B1.
  In all cases, no pre-trained version of Model 2 was used.
  Model 2 was estimated entirely online.
  The performance of the strategy was tested for case studies A and B1 over ten independent runs.
  The results are shown in Figures \ref{fig:onlineA} and \ref{fig:onlineB1}.
  It can be seen that the robots in the swarm learn better policies that lead to a higher performance over time, as the model improves.
  As is particularly noticeable for case study A, this is most effective for the shared local model, given that it can converge toward reliable estimates in less time.

  \begin{figure}[t]
    \centering
    \begin{subfigure}[t]{0.49\textwidth}
      \centering
      \includegraphics[width=\textwidth]{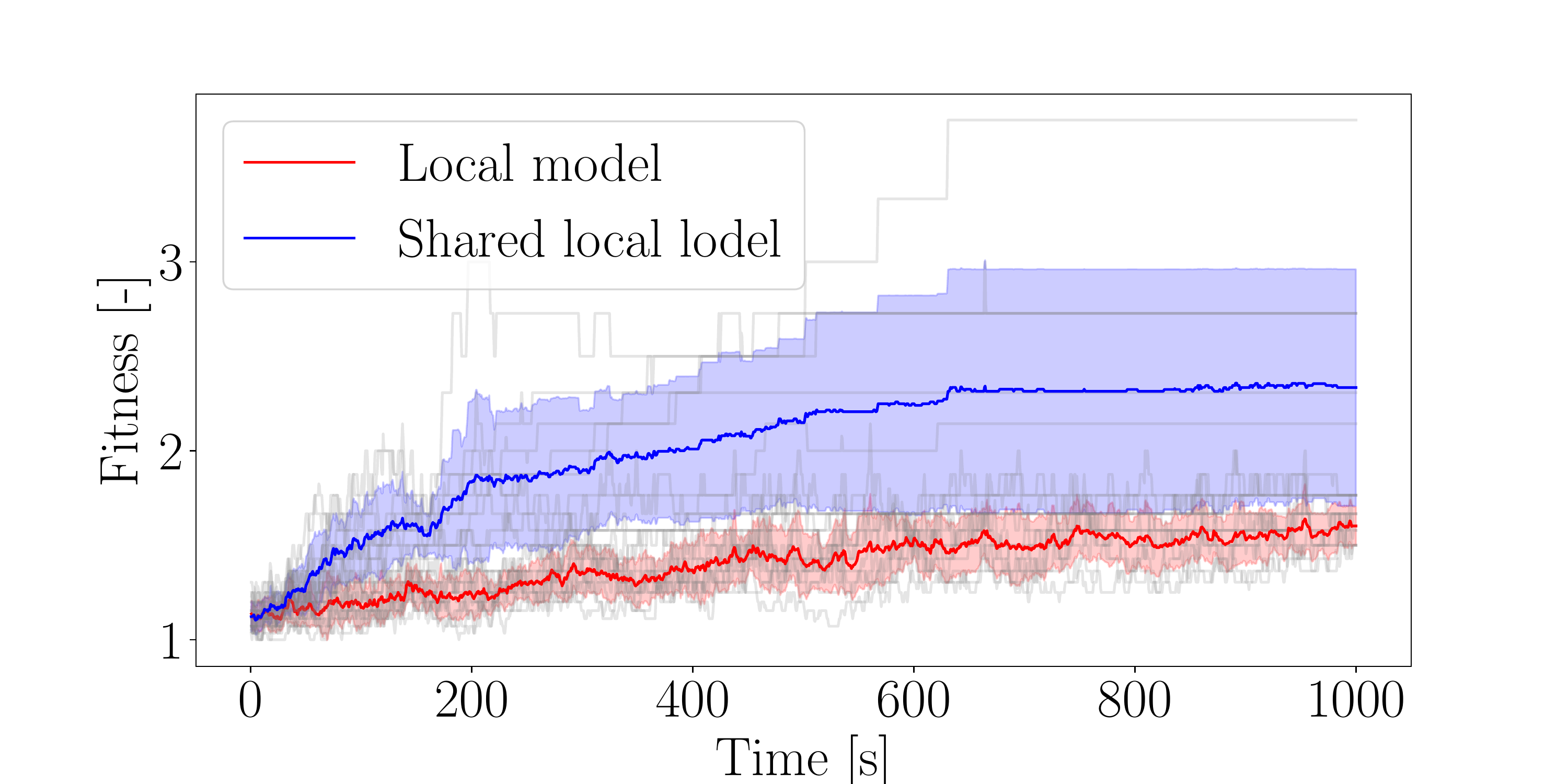}
      \caption{Case study A}
      \label{fig:onlineA}
    \end{subfigure}
    \hfill
    \begin{subfigure}[t]{0.49\textwidth}
      \centering
      \includegraphics[width=\textwidth]{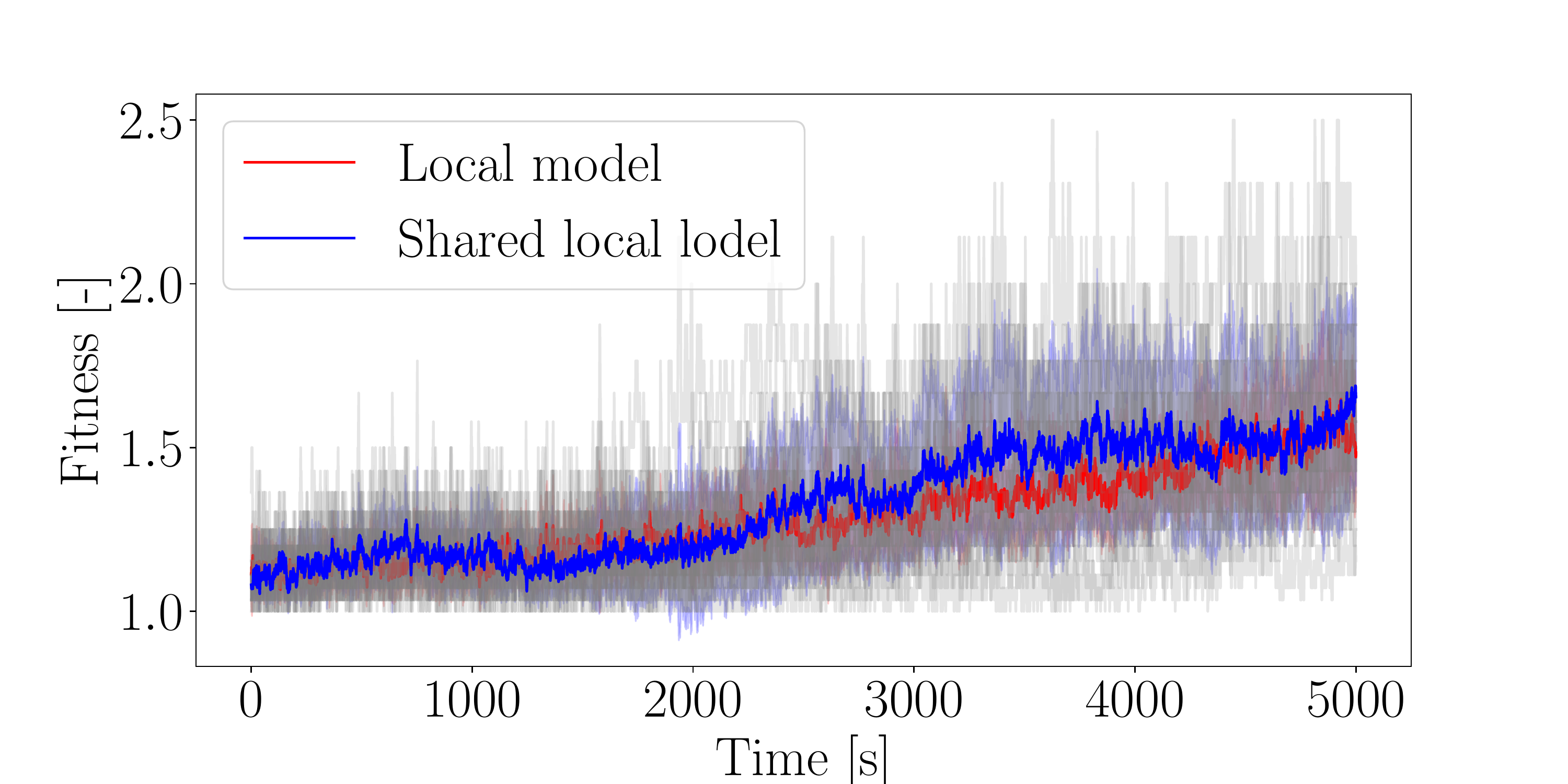}
      \caption{Case study B1}
      \label{fig:onlineB1}
    \end{subfigure}
    \caption{Global performance with online learning during a single run, with no initial transition model.
    The mean performance is shown by the colored lines, with the margins indicating the standard deviation.
    The surrounding gray lines show individual runs.
    For reference, as it can also be seen in Figures \ref{fig:benchmark} and \ref{fig:fitnesslogs}, an evolved run of Study Case A reached, on average, a $F_g(t) \approx 3.0$ at $t=200~s$, and an optimized run of Study Case B1 reached, on average, a $F_g(t) \approx 2.5$ at $t=200~s$.
    Note, however, that the results in this figure are from stand-alone runs, each with a blank model at time $t=0~s$.
    }
    \label{fig:online}
  \end{figure}

\section{Discussion on the proposed framework}
\label{sec:ch5_discussion}
  
  This section discusses the advantages and limitations of the model-based framework proposed in this paper.
  This is done in \secref{sec:ch5_advantages} and \secref{sec:ch5_limitations}, respectively.
  Where relevant, ideas and avenues for future research in the context of swarm robotics research are provided.

  \subsection{Advantages and insights}
  \label{sec:ch5_advantages}

    \begin{itemize}
  
      \item \textbf{Learned micro-macro model.} 
        To the best of our knowledge, this is the first time that a model of the swarm's micro-macro relationship has been learned from data, and then used as a tool to automatically design a swarm behavior.
        The advantages of this are:
          1) it is possible to predict the swarm's global performance from the (distribution of) local states,
          and 
          2) it is possible to extrapolate the local states that increase the global performance.
        In particular, a deep neural network can model a complex relationship directly from data, without a priori knowledge.
        The micro-macro function has shown to be scalable to different swarm sizes, and transferable across swarms and environments.
        This makes it possible to reuse or transfer the model without having to train new models every time.
        Such properties could motivate a larger endeavor by the research community to produce highly accurate micro-macro models and/or training datasets for common swarming tasks.
        For common tasks, datasets can be constructed that could become useful across the research community for purposes extending beyond the one in this paper.
        Similar community-wide efforts have been successful in other fields, such as computer vision, which now holds a plethora of publicly available and high quality models and datasets.
        
      \item \textbf{Transparent transition model.} 
        The transition model was used to model the expected local state transitions experienced by the robots.
        A PFSM architecture was chosen specifically to increase transparency and traceability.
        It was readily used for policy optimization and analysis.
        Its transparency enabled us to inspect and predict elements of a swarm's behavior, without solely relying on empirical testing.

      \item \textbf{Sample efficient policy optimization.} 
        The number of simulations required to train Model 1 and Model 2 were comparable with the ones required by a single generation of a standard (i.e., simulation-based) evolutionary approach.
        The reason for this sample efficiency is that the model-based framework exploits all the events in a simulation run, rather than only assessing its cumulative/final performance.
        The hybrid setup introduced in \secref{sec:evo} provides an interesting way to combine the benefits of both approaches.

      \item \textbf{Online model-based learning.} 
        Online learning and adaptability are complex challenges for swarm robotics. 
        Using the approach in this paper, the robots can generate a model of their experiences and adapt their behavior accordingly, without any a priori knowledge.
        Although the robots act greedily with respect to their own local model (as no global information is available) their common local objectives can guide them toward a high global performance.
        One primary issue is the balance between exploration and exploitation, as exploration implies slower convergence, yet exploitation implies that the solution will likely be subpar.
        In practice, online learning may be best used as a fine-tuning approach for the purposes of adaptability, whereby the robots are loaded with a transition model that is fine-tuned online.
    \end{itemize}

  \subsection{Limitations}
  \label{sec:ch5_limitations}

    \begin{itemize}

    \item \textbf{Difficulties with delayed/temporal effects.}
      The micro-macro model implemented in this work faced issues with temporal effects.
      These were apparent with case study C (foraging).
      The temporal effects in case study C were the results of the following aspects.
      \begin{enumerate}
        \item The value of the global fitness $F_g(t)$, given the same vector $\mathbf{P_s}(t)$, depends on prior actions/global states.
        \item As food is replenished in the environment, a bias forms toward areas that are further from the nest and/or harder to reach.
        This bias increases over time, as the robots first find the food near the nest before venturing out further.
      \end{enumerate}
      These effects could not be captured by the feed-forward setup of Model 1.
      Temporal properties could be represented using a recurrent neural network instead of a feed-forward one.
      However, this would also require changes in the remainder of the framework.

    \item \textbf{Inaccurate transition model.}
      An inaccurate transition model can be responsible for an inaccurate assessment of the system as well as a policy with poor performance.
      One solution is to improve the model via additional simulations in order to improve the estimates, potentially also building on other models of similar systems, if available.
      In addition, it is possible for the robots to fine-tune the model online, and then re-optimize their policy accordingly.
      Fine-tuning a model online can also help to overcome situations that were not simulated or modeled.
      Alternative model structures should also be investigated aside from the tabular model used here.
    
    \item \textbf{Sub-optimal policy optimization.}
      Not all case studies matched the performance of the standard evolutionary procedure. 
      This is because the performance achieved by the policies is inherently sub-optimal as a result of a policy optimization based on Model 2.
      Whereas a global optimizer can tune to the specific environment, this cannot be directly done when optimizing based on a local model.
      The policy optimization, which is based on a local model, prompts greedy behaviors by the individual robots.
      However, the common local objectives, which maximize the global performance, can guide this behavior to a behavior that benefits the swarm as a whole.
      In addition, temporal aspects pertaining to the performance are not modeled.
      This means that the robots aim to achieve their objectives as quickly as possible, whereas a global optimizer may optimize the behavior for the time given to the task.
      Finally, the transition model assumes that the other robots in the swarm behave in a certain way, which is an assumption that is violated once the policy is optimized and the behavior of all robots changes.

    \item \textbf{State space and action space scalability in the transition model.} 
      The discretized nature of PFSMs has two limitations: 1) it prevents continuous states from being used, and 2) it is subject to scalability issues as the size of the local state space and action space increases, which increases the PageRank evaluation time.
      Approximate model architectures can help to mitigate this issue and should be the subject of future research.
    \end{itemize}

\section{Conclusion}
\label{sec:b5_conclusion}

  This paper proposed a model-based framework to extract local behaviors for robotic swarms.
  This framework uses a neural network model to estimate the effect of local states on the global performance.
  As this model is a direct mapping of local states to global performance, it is not dependent on the dynamics of the swarm and it can be potentially transferred across different swarms and environments.
  The model is used to find desired local states, i.e., local states that are expected to maximize the performance of the swarm.
  A second model, a probabilistic model of the local state transitions that a robot in the swarm can experience, is then used to optimize and analyze the behavior.
  The complexity of this second step does not increase with the number of robots in the swarm, which is an attractive property.
  Overall, the behavior can be found very efficiently.
  Together with the experience necessary to learn the models, it approximately requires the equivalent of a single generation in an evolutionary algorithm.
  This model-based framework provides a way to develop behaviors that are understandable and to which we can also apply verification checks in order to determine possible pitfalls.

  Future work should be focused on experimenting with alternative model architectures, potentially solving problems pertaining to temporal aspects as well as discretization.
  This, however, should be done while still keeping transparency and verifiability in mind.
  Additionally, we encourage further investigations into how the framework can be integrated within current practices, building on the setups explored in \secref{sec:evo} and \secref{sec:online}.
  In the hybrid evolutionary setup, for example, the models allowed a performance improvements with marginal computational increase, as the framework directly uses the simulation logs used to evaluate a controller in order to construct and use the models.
  Further evaluations need to be performed in order to better understand the limitations and advantages of the framework within these contexts.
  Finally, we believe that a community-wide effort to build high quality models and datasets for swarms (robotic, but also natural) could be useful across the research community, for purposes extending beyond the scope of this work.

\section*{Code}

  \begin{itemize}
    \item The code used in this paper can be downloaded at 
    \url{https://www.github.com/coppolam/SI_framework}.
    This also includes scripts that will automatically download and setup the simulator and the data.
    \item The simulator can be downloaded separated at \url{https://github.com/coppolam/swarmulator/tree/SI_framework}
    \item All data used in this paper can be downloaded at: \url{https://surfdrive.surf.nl/files/index.php/s/DtU5rW7za4DeNb5}.
  \end{itemize}

\bibliographystyle{plainnat}
\bibliography{bibliography}

\end{document}